\documentclass[twoside,11pt]{article}

\usepackage[utf8]{inputenc}
\usepackage{jmlr2e}
\usepackage{lastpage}
\usepackage{mathtools, amssymb}
\usepackage{enumerate}
\usepackage{xcolor}
\usepackage{hyperref}  
\usepackage{booktabs}
\usepackage{threeparttable}

\usepackage{lastpage}

\hypersetup{hidelinks}

% Heading arguments are {volume}{year}{pages}{submitted}{published}{author-full-names}

% \jmlrheading{23}{2022}{1-\pageref{LastPage}}{10/22}{10/22}{PAPER ID}{Chang hoon Song, Geonho Hwang, Jun ho Lee and Myungjoo Kang}
\jmlrheading{24}{2023}{1-\pageref{LastPage}}{10/22}{2/23}{22-1191}{Chang hoon Song, Geonho Hwang, Jun ho Lee and Myungjoo Kang}
\ShortHeadings{title}{Song, Hwang, Lee and Kang}

% \jmlrheading{volume}{year}{1-\pageref{LastPage}}{[DATE SUBMITTED]; Revised [DATE REVISED]}{[DATE PUBLISHED]}{PAPER ID}{all authors FULL names}
% Short headings should be running head and authors last names

\ShortHeadings{MINIMAL WIDTH FOR UNIVERSAL PROPERTY OF DEEP RNN}{Song, Hwang, Lee and Kang}
\firstpageno{1}

\newcommand{\cB}{\mathcal{B}}
\newcommand{\cC}{\mathcal{C}}

\newcommand{\cK}{\mathcal{K}}
\newcommand{\cL}{\mathcal{L}}

\newcommand{\cN}{\mathcal{N}}

\newcommand{\cP}{\mathcal{P}}
\newcommand{\cQ}{\mathcal{Q}}
\newcommand{\cR}{\mathcal{R}}

\newcommand{\cT}{\mathcal{T}}

\newcommand{\bN}{\mathbb{N}}

\newcommand{\bR}{\mathbb{R}}

\newcommand{\eps}{\epsilon}
\newcommand{\enc}{\operatorname{Enc}}
\newcommand{\mem}{\operatorname{Mem}}
\newcommand{\dec}{\operatorname{Dec}}

\newcommand{\minw}{w_{\text{min}}}
\newcommand{\ReLU}{\operatorname{ReLU}}
\newcommand{\sigmoid}{\sigma_{\operatorname{sig}}}

\newcommand{\review}[1]{{\color{black}#1}}
\newcommand{\reviewA}[1]{{\color{black}#1}}
\newcommand{\reviewB}[1]{{\color{black}#1}}
\newcommand{\reviewAB}[1]{{\color{black}#1}}

\title{Minimal Width for Universal Property of Deep RNN}

\author{\name Chang hoon Song \email goldbach2@snu.ac.kr \\
       \addr Department of Mathematical Science\\
       Seoul National University
       \AND
       \name Geonho Hwang \email hgh2134@snu.ac.kr \\
       \addr Department of Mathematical Science\\
       Seoul National University
       \AND
       \name Jun ho Lee \email jhlsat@kongju.ac.kr\\
       \addr Kongju National Universality
       \AND
       \name Myungjoo Kang \email mkang@snu.ac.kr \\
       \addr Department of Mathematical Science\\
       Seoul National University}
       
\editor{Mehryar Mohri}

\begin{document}

\maketitle

\begin{abstract}%   <- trailing '%' for backward compatibility of .sty file
A recurrent neural network (RNN) is a widely used deep-learning network for dealing with sequential data.
Imitating a dynamical system, an infinite-width RNN can approximate any open dynamical system in a compact domain.
In general, deep narrow networks with bounded width and arbitrary depth are more effective than wide shallow networks with arbitrary width and bounded depth in practice; however, the universal approximation theorem for deep narrow structures has yet to be extensively studied.
In this study, we prove the universality of deep narrow RNNs and show that the upper bound of the minimum width for universality can be independent of the length of the data.
Specifically, we show a deep RNN with ReLU activation can approximate any continuous function or $L^p$ function with the widths $d_x+d_y+3$ and $\max\{d_x+1,d_y\}$, respectively, where the target function maps a finite sequence of vectors in $\mathbb{R}^{d_x}$ to a finite sequence of vectors in $\mathbb{R}^{d_y}$.
We also compute the additional width required if the activation function is sigmoid or more.
In addition, we prove the universality of other recurrent networks, such as bidirectional RNNs.
Bridging a multi-layer perceptron and an RNN, our theory and technique can shed light on further research on deep RNNs.
\end{abstract}

\begin{keywords}
recurrent neural network, universal approximation, deep narrow network, sequential data, minimum width
\end{keywords}

\section{Introduction}
Deep learning, a type of machine learning that approximates a target function using a numerical model called an artificial neural network, has shown tremendous success
in diverse fields, such as regression \citep{garnelo2018neural}, image processing \citep{dai2021coatnet,zhai2022scaling}, speech recognition \citep{baevski2020wav2vec}, and natural language processing \citep{lecun2015deep, lavanya2021deep}.

While the excellent performance of deep learning is attributed to a combination of various factors, it is reasonable to speculate that its notable success is partially based on the \textit{universal approximation theorem}, which states that neural networks are capable of arbitrarily accurate approximations of the target function. 

Formally, for any given function $f$ in a target class and $\eps>0$, there exists a network $\cN$ such that $\|f - \cN\|< \eps$. 
In topological language, the theorem states that a set of networks is dense in the class of the target function.
In this sense, the closure of the network defines the range of functions it network can represent. 

As universality provides the expressive power of a network structure, studies on the universal approximation theorem have attracted increasing attention.
Examples include the universality of multi-layer perceptrons (MLPs), the most basic neural networks \reviewAB{\citep{Cybenko_1989, Hornik_1989, leshno1993multilayer}}, and the universality of recurrent neural networks (RNNs) with the target class of open dynamical systems \citep{Schafer_2007}.
Recently, \cite{Zhou_2020} has demonstrated the universality of convolutional neural networks.

Classical universal approximation theorems specialize in the representation power of shallow wide networks with bounded depth and unbounded width.
Based on mounting empirical evidence that deep networks demonstrate better performance than wide networks, the construction of deep networks instead of shallow networks has gained considerable attention in recent literature.  
Consequently, researchers have started to analyze the universal approximation property of deep networks \citep{cohen2016expressive, lin2018resnet, rolnick2017power, Kidger_2020}. 
Studies on MLP have shown that wide shallow networks require only two depths to have universality, while deep narrow networks require widths at least as their input dimension.

A wide network obtains universality by increasing its width even if the depth is only two \reviewAB{\citep{Cybenko_1989, Hornik_1989, leshno1993multilayer}}.
However, in the case of a deep network, there is a function for a narrow network that cannot be able to approximated, regardless of its depth \citep{Lu_2017, Park_2021}.
Therefore, clarifying the \textit{minimum width} to guarantee universality is crucial, and studies are underway to investigate its lower and upper bounds, narrowing the gap.

\begin{table}[t]
    \centering
    \label{table:summary_of_results}
    \begin{threeparttable}
        \begin{tabular}{cccc}
            \toprule
            Network & Function class & Activation & \multicolumn{1}{c}{Result}
            \\
            \midrule
            RNN & $C\left(\cK,\bR^{d_y}\right)$\tnote{$\dagger$} & ReLU & \reviewB{$\minw\le d_x+d_y+3$} \\
            & & conti. nonpoly\tnote{1} & $\minw\le d_x+d_y+3$ \\
            & & conti. nonpoly\tnote{2} & $\minw\le d_x+d_y+4$ \\ 
            & $L^p\left(\cK,\bR^{d_y}\right)$\tnote{$\dagger$} & conti. nonpoly\tnote{\reviewB{1},2} & \hspace{3em}$\minw\le \max\left\{d_x+1,d_y\right\}+1$ \\
            & \reviewB{$L^p\left(\bR^{d_x},\bR^{d_y}\right)$\tnote{$\dagger$}} & ReLU & $\minw = \max\left\{d_x+1,d_y\right\}$ \\
            \midrule
            LSTM & $C\left(\cK,\bR^{d_y}\right)$\tnote{$\dagger$} & & $\minw\le d_x+d_y+3$ \\
            & \reviewB{$L^p\left(\cK,\bR^{d_y}\right)$\tnote{$\dagger$}} & & \reviewB{$\minw\le \max\left\{d_x+1,d_y\right\}+1$} \\
            GRU & $C\left(\cK,\bR^{d_y}\right)$\tnote{$\dagger$} & & $\minw\le d_x+d_y+3$\\
            & \reviewB{$L^p\left(\cK,\bR^{d_y}\right)$\tnote{$\dagger$}} & & \reviewB{$\minw\le \max\left\{d_x+1,d_y\right\}+1$} \\
            \midrule
            BRNN & $C\left(\cK,\bR^{d_y}\right)$ & ReLU & \reviewB{$\minw\le d_x+d_y+3$} \\
             & & conti. nonpoly\tnote{1} & $\minw\le d_x+d_y+3$ \\
            & & conti. nonpoly\tnote{2} & $\minw\le d_x+d_y+4$ \\
            & $L^p\left(\cK,\bR^{d_y}\right)$ & conti. nonpoly\tnote{\reviewB{1},2} & \hspace{3em}$\minw\le \max\left\{d_x+1,d_y\right\}+1$ \\
            & \reviewB{$L^p\left(\bR^{d_x},\bR^{d_y}\right)$} & \reviewB{ReLU} & \reviewB{$\minw\le \max\left\{d_x+1, d_y\right\}$} \\
            \bottomrule
        \end{tabular}
        \begin{tablenotes}\footnotesize
            \item[$\dagger$] requires the class to consists of past-dependent functions.
            \item[1] requires an activation $\sigma$ to be continuously differentiable at some point $z_0$ with $\sigma(z_0)=0$ and $\sigma'(z_0)\ne0$. $\tanh$ belongs here.
            \item[2] requires an activation $\sigma$ to be continuously differentiable at some point $z_0$ with $\sigma'(z_0)\ne0$. A logistic sigmoid function belongs here.
        \end{tablenotes}
        \caption{\reviewA{Summary of our results on the upper bound of the minimal width $\minw$ for the universality of RNNs}. In the table, $\cK$ indicates a compact subset of $\bR^{d_x}$ and $1\le p<\infty$. We abbreviate continuous to ``conti''.}
    \end{threeparttable}
\end{table}

\textit{Recurrent neural networks} (RNNs) \citep{rumelhart1986learning, elman1990finding} have been crucial for modeling complex temporal dependencies in sequential data.
They have various applications in diverse fields, such as language modeling \citep{mikolov2010recurrent, jozefowicz2016exploring}, speech recognition \citep{graves2014towards, bahdanau2016end}, recommendation systems \citep{hidasi2015session, wu2017recurrent}, and machine translation \citep{bahdanau2014neural}. 
Deep RNNs are widely used and have been successfully applied in practical applications.
However, their theoretical understanding remains elusive despite their intensive use.
This deficiency in existing studies motivated our work.

In this study, we prove the universal approximation theorem of deep narrow RNNs and discover the upper bound of their minimum width.
The target class consists of a sequence-to-sequence function that depends solely on past information.
We refer to such functions as \textit{past-dependent} functions.
We provide the upper bound of the minimum width of the RNN for universality in the space of the past-dependent functions.
Surprisingly, the upper bound is independent of the length of the sequence.
This theoretical result highlights the suitability of the recurrent structure for sequential data compared with other network structures.
Furthermore, our results are not restricted to RNNs; they can be generalized to variants of RNNs, including \textit{long short-term memory} (LSTM), \textit{gated recurrent units} (GRU), and \textit{bidirectional RNNs} (BRNN).
As corollaries of our main theorem, LSTM and GRU are shown to have the same universality and target class as an RNN.
We also prove that the BRNN can approximate any sequence-to-sequence function in a continuous or $L^p$ space under the respective norms.
We also present the upper bound of the minimum width for these variants.
Table \ref{table:summary_of_results} outlines our main results.

\subsection{Summary of Contributions}
With a target class of functions that map a finite sequence \reviewA{$x\in\bR^{d_x\times N}$} to a finite sequence \reviewA{$y\in\bR^{d_y\times N}$}, we prove the following:
\begin{itemize}
    \item A deep RNN can approximate any past-dependent sequence-to-sequence continuous function with \reviewB{width $d_x+d_y+3$ for the ReLU or $\tanh$\footnotemark[1]}, and $d_x+d_y+4$ for non-degenerating activations.
    \item A deep RNN can approximate any past-dependent $L^p$ function ($1\le p<\infty$) with width $\max\left\{d_x+1, d_y\right\}$ for the ReLU activation and $\max\left\{d_x+1, d_y\right\}+1$ for non-degenerating activations.
    \item A deep BRNN can approximate any sequence-to-sequence continuous function with width \reviewB{$d_x+d_y+3$ for the ReLU or $\tanh$\footnotemark[1]}, and $d_x+d_y+4$ for non-degenerating activations.
    \item A deep BRNN can approximate any sequence-to-sequence $L^p$ function ($1\le p<\infty$) with width $\max\left\{d_x+1,d_y\right\}$ for the ReLU activation and $\max\left\{d_x+1,d_y\right\}+1$ for non-degenerating activations.
    \item A deep LSTM or GRU can approximate any past-dependent sequence-to-sequence continuous function with width $d_x+d_y+3$ and $L^p$ function with width $\max\left\{d_x+1,d_y\right\}+1$.
\end{itemize}
\footnotetext[1]{Generally, non-degenerate $\sigma$ with $\sigma(z_0)=0$ requires the same minimal width as $\tanh$.}

\subsection{Organization}
In Section 2, we briefly review previous studies on the universality of neural networks.
Section 3 provides some notations, network configuration, a description of the target function class, and the condition of the activation function.
Section 4 addresses the approximation property in a continuous function space.
First, we fit only the last component of an output sequence and extend it to all components of the sequence.
In Section 5, we present the upper bound of the minimum width in $L^p$ space.
As an expansion of our theorems, the universal approximation theorems for the LSTM, GRU, and BRNN are deduced in Section 6.
Sections 7 and 8 present a discussion and conclusions, respectively.

%%%%%%%%%%%%%%%%%%%%%%%%%%%%%%%%%%%%%%%%%%%%%%%%%%%%%%%%%%%%%%%%%%%%%%%%%%%%%%%%%%%%%%%%%%%%%%%%%%%%%
%%%%%%%%%%%%%%%%%%%%%%%%%%%%%%%%%%         Related Work              %%%%%%%%%%%%%%%%%%%%%%%%%%%%%%%%
%%%%%%%%%%%%%%%%%%%%%%%%%%%%%%%%%%%%%%%%%%%%%%%%%%%%%%%%%%%%%%%%%%%%%%%%%%%%%%%%%%%%%%%%%%%%%%%%%%%%%

\section{Related Works}
\label{section:related_works}
We briefly review some of the results of studies on the universal approximation property.
Studies have been conducted to determine whether a neural network can learn a sufficiently wide range of functions, that is, whether it has universal properties.
\cite{Cybenko_1989} and \cite{Hornik_1989} first proved that the most basic network, a simple two-layered MLP, can approximate arbitrary continuous functions defined on a compact set.
Some follow-up studies have investigated the universal properties of other structures for a specific task, such as a convolutional neural network for image processing \citep{Zhou_2020}, an RNN for open dynamical systems \citep{Schafer_2007,Hanson_2020}, and transformer networks for translation and speech recognition \citep{Yun_2020}.
Particularly for RNNs, \cite{Schafer_2007} showed that an open dynamical system with continuous state transition and output function can be approximated by a network with a wide RNN cell and subsequent linear layer in finite time.
\cite{Hanson_2020} showed that the trajectory of the dynamical system can be reproduced with arbitrarily small errors up to infinite time, assuming a stability condition on long-term behavior.

While such prior studies mainly focused on wide and shallow networks, several studies have determined whether a deep narrow network with bounded width can approximate arbitrary functions in some kinds of target function classes, such as a space of continuous functions or an $L^p$ space \citep{Lu_2017, hanin2017approximating, Johnson_2019, Kidger_2020, Park_2021}.
Unlike the case of a wide network that requires only one hidden layer for universality, there exists a function that cannot be approximated by any network whose width is less than a certain threshold.
\reviewB{Specifically, in a continuous function space $C\left(K,\bR^{d_y}\right)$ with the supreme-norm $\left\|f\right\|_\infty\coloneqq\sup_{x\in K}\left\|f(x)\right\|$ on a compact subset $K\subset\bR^{d_x}$, \cite{hanin2017approximating} showed negative and positive results that MLPs do not attain universality with width $d_x$.
Still, width $d_x+d_y$ is sufficient for MLPs to achieve universality with the ReLU activation function.
\cite{Johnson_2019, Kidger_2020} generalized the condition on the activation and proved that $d_x$ is too narrow, but $d_x+d_y+2$ is sufficiently wide for the universality in $C\left(K,\bR^{d_y}\right)$.
On the other hand, in an $L^p$ space $L^p\left(\bR^{d_x},\bR^{d_y}\right)$ with $L^p$-norm $\left\|f\right\|_p\coloneqq\left(\int_{\bR^{d_x}} f(x)\,dx\right)^{1/p}$ for $p\in[0,\infty)$, \cite{Lu_2017} showed that the width $d_x$ is insufficient for MLPs to have universality, but $d_x+4$ is enough, with the ReLU activation when $p=1$ and $d_y=1$.
\cite{Kidger_2020} found that MLPs whose width is $d_x+d_y+1$ achieve universality in $L^p\left(\bR^{d_x},\bR^{d_y}\right)$ with the ReLU activation, and \cite{Park_2021} determined the exact value $\max\left\{d_x+1,d_y\right\}$ required for universality in the $L^p$ space with ReLU.}

As described earlier, studies on deep narrow MLP have been actively conducted, but the approximation ability of deep narrow RNNs remains unclear.
This is because the process by which the input affects the result is complicated compared with that of an MLP.
The RNN cell transfers information to both the next time step in the same layer and the same time step in the next layer, which makes it difficult to investigate the minimal width.
In this regard, we examined the structure of the RNN to apply the methodology and results from the study of MLPs to deep narrow RNNs.

%%%%%%%%%%%%%%%%%%%%%%%%%%%%%%%%%%%%%%%%%%%%%%%%%%%%%%%%%%%%%%%%%%%%%%%%%%%%%%%%%%%%%%%%%%%%%%%%%%%%%
%%%%%%%%%%%%%%%%%%%         Terminologies and Notations             %%%%%%%%%%%%%%%%%%%%%%
%%%%%%%%%%%%%%%%%%%%%%%%%%%%%%%%%%%%%%%%%%%%%%%%%%%%%%%%%%%%%%%%%%%%%%%%%%%%%%%%%%%%%%%%%%%%%%%%%%%%%
 
\section{Terminologies and Notations}
\label{section:terminologies_and_notations}
This section introduces the definition of network architecture and the notation used throughout this paper.
$d_x$ and $d_y$ denote the dimension of input and output space, respectively.
$\sigma$ is an activation function unless otherwise stated.
Sometimes, $v$ indicates a vector with suitable dimensions.

First, we used square brackets, subscripts, and colon symbols to index a sequence of vectors.
More precisely, for a given sequence of $d_x$-dimensional vectors $x:\bN\rightarrow\bR^{d_x}$, $x[t]_j$ or $x_j[t]$ denotes the $j$-th component of the $t$-th vector.
The colon symbol $:$ is used to denote a continuous index, such as $x[a:b]=\left(x[i]\right)_{a\le i\le b}$ or $x[t]_{a:b}=\left(x[t]_a, x[t]_{a+1},\ldots,x[t]_b\right)^T\in\bR^{b-a+1}$.
We call the sequential index $t$ by time and each $x[t]$ a \textit{token}.

Second, we define the token-wise linear maps $\cP:\bR^{d_x\times N}\rightarrow\bR^{d_s\times N}$ and $\cQ:\bR^{d_s\times N}\rightarrow\bR^{\reviewAB{d_y}\times N}$ to connect the input, hidden state, and output space.
As the dimension of the hidden state space $\bR^{d_s}$ on which the RNN cells act is different from those of the input domain $\bR^{d_x}$ and output domain $\bR^{d_y}$, we need maps adjusting the dimensions of the spaces.
For a given matrix $P\in\bR^{d_s\times d_x}$, a \textit{lifting map} $\cP(x)[t]\coloneqq Px[t]$ lifts the input vector to the hidden state space.
Similarly, for a given matrix $Q\in \bR^{d_y\times d_s}$, a \textit{projection map} $\cQ(s)[t]\coloneqq Qs[t]$ projects a hidden state onto the output vector.
As the first token defines a token-wise map, we sometimes represent token-wise maps without a time length, such as $\cP:\bR^{d_x}\rightarrow\bR^{d_s}$ instead of $\cP:\bR^{d_x\times N}\rightarrow\reviewB{\bR^{d_s\times N}}$.

Subsequently, an RNN is constructed using a composition of basic recurrent cells between the lifting and projection maps.
We considered four basic cells: RNN, LSTM, GRU, and BRNN.

\begin{itemize}
    \item \textbf{RNN Cell}
        A \textit{recurrent cell}, \textit{recurrent layer}, or \textit{RNN cell} $\cR$ maps an input sequence $x=\left(x[1],x[2],\ldots\right)=\left(x[t]\right)_{t\in\bN}\in\bR^{d_s\times \bN}$ to an output sequence $y=\left(y[t]\right)_{t\in\bN}\in\bR^{d_s\times\bN}$ using
        \begin{equation}
            \label{eqn:rnn}
            y[t+1]=\cR(x)[t+1]=\sigma\left(A\cR(x)[t]+Bx[t+1]+\theta\right),
        \end{equation}
        where $\sigma$ is an activation function, $A,B\in\bR^{d_s\times d_s}$ are the weight matrices, and $\theta\in\bR^{d_s}$ is the bias vector.
        The initial state \reviewA{$\cR(x)[0]$} can be an arbitrary constant vector, which is zero vector $\mathbf{0}$ in this setting.
    \item \reviewB{\textbf{LSTM cell}
        \textit{An LSTM cell} is an RNN cell variant with an internal cell state and gate structures.
        As an original version proposed in \cite{hochreiter1997long}, an LSTM cell includes only two gates: an input gate and an output gate.
        Still, the base of almost all networks used in modern deep learning is LSTM cells with an additional gate called the forget gate, introduced in \cite{gers2000learning}.
        Hence we present the LSTM cell with three gates instead of the original one.
        
        Mathematically, an LSTM cell $\cR_{LSTM}$ is a process that computes an output $h\in\bR^{d_s}$ and cell state $c\in\bR^{d_s}$, defined by the following relation:
        \begin{equation}
            \label{eqn:lstm}
            \begin{aligned}
                i[t+1]&=\sigmoid\left(W_ix[t+1]+U_ih[t]+b_i\right)\\
                f[t+1]&=\sigmoid\left(W_fx[t+1]+U_fh[t]+b_f\right)\\
                g[t+1]&=\tanh\left(W_gx[t+1]+U_gh[t]+b_g\right)\\
                o[t+1]&=\sigmoid\left(W_ox[t+1]+U_oh[t]+b_o\right)\\
                c[t+1]&=f[t+1]\odot c[t]+i[t+1]\odot g[t+1]\\
                h[t+1]&=o[t+1]\odot\tanh\left(c[t+1]\right)
            \end{aligned}
        \end{equation}
        where $W_*\in\bR^{d_s\times d_s}$ and $U_*\in\bR^{d_s\times d_s}$ are weight matrices; $b_*\in\bR^{d_s}$ is the bias vector for each $*=i,f,g,o$; and $\sigmoid$ is the sigmoid activation function.
        $\odot$ denotes component-wise multiplication, and we set the initial state to zero in this study.

        Although some variants of the LSTM have additional connections, such as peephole connection, our result for LSTM extends naturally.}
    \item \textbf{GRU cell}
        Another import variation of an RNN is GRU, proposed by \cite{cho2014properties}.
        Mathematically, a \textit{GRU cell} $\cR_{GRU}$ is a process that computes $h\reviewB{\in\bR^{d_s}}$ defined by
        \begin{equation}
            \label{eqn:gru}
            \begin{aligned}
                r[t+1]&=\sigmoid\left(W_rx[t+1]+U_rh[t]+b_r\right),\\
                \tilde{h}[t+1]&=\tanh\left(W_hx[t+1]+U_h\left(r[t+1]\odot h[t]\right)+b_h\right),\\
                z[t+1]&=\sigmoid\left(W_zx[t+1]+U_zh[t]+b_z\right),\\
                h[t+1]&=\left(1-z[t+1]\right)\reviewB{\odot} h[t]+z[t+1]\reviewB{\odot} \tilde{h}[t+1],
            \end{aligned}
        \end{equation}
        where $W_*\in\bR^{d_s\times \review{d_s}}$ and $U_*\in\bR^{d_s\times d_s}$ are weight matrices, $b_*\in\bR^{d_s}$ is the bias vector for each $*=r,z,h$, and $\sigmoid$ is the sigmoid activation function. 
        $\odot$ denotes component-wise multiplication, and we set the initial state to zero in this study.
    \item \textbf{BRNN cell}
        A \textit{BRNN cell} $\cB\cR$ consists of a pair of RNN cells and a token-wise linear map that follows the cells \citep{Schuster_1997}.
        An RNN cell $\cR_1$ in the BRNN cell $\cB\cR$ receives input from $x[1]$ to $x[N]$ and the other $\cR_2$ receives input from $x[N]$ to $x[1]$ in reverse order.
        Then, the linear map $\cL$ in $\cB\cR$ combines the two outputs from the RNN cells.
        Specifically, a BRNN cell $\cB\cR$ is defined as follows:
        \begin{equation}
            \label{eqn:brnn}
            \begin{aligned}
                \cR(x)[t+1]&\coloneqq \sigma\left(A\cR_1(x)[t]+Bx[t+1]+\theta\right),\\
                \bar{\cR}(x)[t-1]&\coloneqq \sigma\left(\bar{A}\bar{\cR}(x)[t]+\bar{B}x[t-1]+\bar{\theta}\right),\\
                \cB\cR(x)[t]&\coloneqq \cL\left(\cR(x)[t],\bar{\cR}(x)[t]\right)\\
                &\coloneqq W\cR(x)[t]+\bar{W}\bar{\cR}(x)[t].
            \end{aligned}
        \end{equation}
        where $A$, $B$, $\bar{A}$, $\bar{B}$, $W$, and $\bar{W}$ are weight matrices; $\theta$ and $\bar{\theta}$ are bias vectors.
    \item \textbf{Network architecture}
        An \textit{RNN} $\cN$ comprises a lifting map $\cP$, projection map $\cQ$, and $L$ recurrent cells $\cR_1,\ldots,\cR_L$;
        \begin{equation}
            \label{eqn:RNN}
            \cN\coloneqq \cQ\circ\cR_L\circ\cdots\circ\cR_1\circ\cP.
        \end{equation}
        We denote the network as a \textit{stack RNN} or \textit{deep RNN} when $L\ge 2$, and each output of the cell $\cR_i$ as the $i$-th hidden state.
        $d_s$ indicates the width of the network.
        \review{
            As a special case, a recurrent cell $\cR$ in \eqref{eqn:rnn} is a composition of activation and a token-wise affine map with $A=0$; A token-wise MLP is a specific example of RNN.
        }
        If LSTM, GRU, or BRNN cells replace recurrent cells, the network is called an LSTM, a GRU, or a BRNN.
\end{itemize}

In addition to the type of cell, the activation function $\sigma$ affects universality.
We focus on the case of ReLU or $\tanh$ while also considering the general activation function satisfying the condition proposed by \citep{Kidger_2020}.
$\sigma$ is a continuous non-polynomial function that is continuously differentiable at some $z_0$ with $\sigma'(z_0)\ne0$.
We refer to the condition as a \textit{non-degenerate} condition and $z_0$ as a \textit{non-degenerating point}.

Finally, the target class must be set as a subset of the sequence-to-sequence function space, from $\bR^{d_x}$ to $\bR^{d_y}$.
Given an RNN $\cN$, each token $y[t]$ of the output sequence $y=\cN(x)$ depends only on $x[1\mathbin{:}t]\coloneqq\left(x[1],x[2],\ldots,x[t]\right)$ for the input sequence $x$. 
We define this property as \textit{past dependency} and a function with this property as a \textit{past-dependent} function.
More precisely, if all the output tokens of a sequence-to-sequence function are given by $f[t]\left(x[1\mathbin{:}t]\right)$ for functions $f[t]:\bR^{d_x\times t}\rightarrow \bR^{d_y}$, we say that the function is past-dependent.
Meanwhile, we must fix the finite length or terminal time $N<\infty$ of the input and output sequence. 
Without additional assumptions such as in \citep{Hanson_2020}, errors generally accumulate over time, making it impossible to approximate
implicit dynamics up to infinite time regardless of past dependency.
Therefore we set the target function class as a class of past-dependent sequence-to-sequence functions with sequence length $N$.

\begin{remark}
    On a compact domain and under bounded length, the continuity of $f:\bR^{d_x\times N}\rightarrow \bR^{d_y\times N}$ implies that of each $f[t]:\bR^{d_x\times t}\rightarrow \bR^{d_y}$ and vice versa.
    In the case of the $L^p$ norm with $1\le p<\infty$, $f:\bR^{d_x\times N}\rightarrow \bR^{d_y\times N}$ is $L^p$ integrable if and only if $f[t]$ is $L^p$ integrable for each $t$.
    In both cases, the sequence of functions $\left(f_n\right)_{n\in\bN}$ converges to $g$ if and only if $\left(f_n[t]\right)_{n\in\bN}$ converges to $g[t]$ for each $t$.
    Thus, we focus on approximating $f[t]$ for each $t$ under the given conditions.
\end{remark}
\reviewA{
\begin{remark}
    There are typical tasks where the length of the output sequence differs from that of the input sequence.
    In translation tasks, for instance, inputs and outputs are sentences represented as sequences of tokens of words in different languages, and the length of each sequence differs in general.
    Nevertheless, even in such a case, a sequence $x=\left(x[1],x[2],\ldots,x[N]\right)\in\bR^{d_x\times N}$ can be naturally embedded into $\bR^{d_x\times M}$ for an integer $M>N$ as an extended sequence $\left(x[1],x[2],\ldots,x[N],\mathbf{0},\ldots,\mathbf{0}\right)$ or $\left(\mathbf{0},\ldots,\mathbf{0},x[1],x[2],\ldots,x[N]\right)$.
    By the natural embedding, a map between sequences $f:\bR^{d_x\times N_1}\rightarrow\bR^{d_y\times N_2}$ is a special case of a map between $f:\bR^{d_x\times M}\rightarrow\bR^{d_y\times M}$ for an integer $M\ge N_1,N_2$.
\end{remark}
}
Sometimes, only the last value $\cN(x)[N]$ is required considering an RNN $\cN$ as a sequence-to-vector function $\cN:\bR^{d_x\times N}\rightarrow\bR^{d_y}$.
We freely use the terminology RNN for sequence-to-sequence and sequence-to-vector functions because there is no confusion when the output domain is evident.

We have described all the concepts necessary to set a problem, but we end this section with an introduction to the concepts used in the proof of the main theorem. 
For the convenience of the proof, we slightly modify the activation $\sigma$ to act only on some components, instead of all components.
With activation $\sigma$ and index set $I\subseteq \bN$, \textit{the modified activation $\sigma_I$} is defined as
\begin{equation}
    \label{eqn:modified_activation}
    \sigma_I(s)_i=\left\{\begin{array}{ll}\sigma(s_i)&\text{ if $i\in I$ }\\ s_i&\text{ otherwise } \end{array}\right.
\end{equation}
Using the modified activation function $\sigma_I$, the basic cells of the network are modified in \eqref{eqn:rnn}.
For example, a \textit{modified recurrent cell} can be defined as
\begin{equation}
    \label{eqn:modified_rnn}
    \begin{aligned}
        \cR(x)[t+1]_i&=\sigma_I\left(A\cR(x)[t]+Bx[t+1]+\theta\right)_i\\
        &=\left\{\begin{array}{ll}\sigma\left(A\cR(x)[t]+Bx[t+1]+\theta\right)_i &\text{ if }i\in I\\
                                    \left(A\cR(x)[t]+Bx[t+1]+\theta\right)_i &\text{ otherwise }\end{array}\right..
    \end{aligned}
\end{equation}
Similarly, \textit{modified RNN, LSTM, GRU, or BRNN} is defined using modified cells in \eqref{eqn:rnn}.
This concept is similar to the enhanced neuron of \cite{Kidger_2020} in that activation can be selectively applied, but is different in that activation can be applied to partial components.

As activation leads to the non-linearity of a network, modifying the activation can affect the minimum width of the network.
Fortunately, the following lemma shows that the minimum width increases by at most one owing to the modification.
We briefly introduce the ideas here, with a detailed proof provided in the appendix.

\begin{lemma}
    \label{lemma:modified_rnn}
    Let $\bar{\cR}:\bR^{d\times N}\rightarrow\bR^{d\times N}$ be a modified RNN cell, $\bar{\cQ}:\bR^{d}\rightarrow\bR^{d}$, and $\bar{\cP}:\bR^{d}\rightarrow\bR^{d}$ be a token-wise linear projection and lifting map.
    Suppose that an activation function $\sigma$ of $\bar{\cR}$ is non-degenerate with a non-degenerating point $z_0$.
    Then for any compact subset $K\subset\bR^{d}$ and $\eps>0$, there exists RNN cells $\cR_1$, $\cR_2:\bR^{(d+\beta(\sigma))\times N}\rightarrow\bR^{(d+\beta(\sigma))\times N}$, and a token-wise linear map $\cP:\bR^d\rightarrow\bR^{d+\beta(\sigma)}$, $\cQ:\bR^{d+\beta(\sigma)}\rightarrow\bR^d$ such that
    \begin{equation}
        \sup_{x\in K^N}\left\|\bar{\cQ}\circ\bar{\cR}\circ\bar{\cP}(x)-\cQ\circ\cR_2\circ\cR_1\circ\cP(x)\right\|<\eps,
    \end{equation} 
    where $\beta(\sigma)=\left\{\begin{array}{ll}0&\text{ if \reviewB{$\sigma\left(z_0\right)=0$}}\\ 1&\text{ otherwise}\end{array}\right.$
\end{lemma}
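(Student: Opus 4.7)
The idea is to imitate the identity branch of $\sigma_I$ on coordinates $i\notin I$ via the first-order Taylor expansion of $\sigma$ at the non-degenerating point $z_0$:
\begin{equation*}
    \sigma(z_0+\epsilon z)=\sigma(z_0)+\epsilon\,\sigma'(z_0)\,z+o(\epsilon)\qquad(\epsilon\to 0),
\end{equation*}
uniformly in $z$ on compact sets. Dividing by $\epsilon\sigma'(z_0)$ and subtracting the constant bias $\sigma(z_0)/(\epsilon\sigma'(z_0))$ recovers $z+o(1)$. When $\sigma(z_0)=0$ this subtraction is vacuous, so $\beta=0$; otherwise a single auxiliary channel carries the constant $\sigma(z_0)$ through the recurrence so that the bias can be removed by a subsequent linear map, giving $\beta=1$.

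Write $\bar{s}[t]\coloneqq\bar{\cR}(\bar{\cP}(x))[t]$ and let $\cP:\bR^d\hookrightarrow\bR^{d+\beta}$ be the natural embedding with zero in the auxiliary coordinate. I design $(A_1,B_1,\theta_1)$ so that the hidden state $s'[t]=\cR_1(\cP(x))[t]$ maintains the invariant
\begin{equation*}
    s'_i[t]\approx\bar{s}_i[t]\ (i\in I),\quad s'_i[t]\approx\sigma(z_0)+\epsilon\sigma'(z_0)\bar{s}_i[t]\ (i\notin I),\quad s'_{d+1}[t]\equiv\sigma(z_0).
\end{equation*}
This is achieved by expressing $\bar{s}_j[t]$ linearly in $s'_k[t]$ (directly for $j\in I$; via $\bar{s}_j=(s'_j-s'_{d+1})/(\epsilon\sigma'(z_0))$ for $j\notin I$) and substituting into the recurrence of $\bar{\cR}$: on rows $i\in I$, $\cR_1$ reproduces the $\sigma$-branch verbatim; on rows $i\notin I$, the same pre-activation is scaled by $\epsilon$ and shifted by $z_0$ so that Taylor linearization produces the affine target; on the auxiliary row only the bias $z_0$ is used. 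The fixed lifting $\bar{\cP}$ is absorbed into $B_1$.

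Next, I take $A_2=0$ so that $\cR_2$ acts as a token-wise layer $\sigma(B_2 s'+\theta_2)$. Because $\bar{\cQ}(\bar{s})[t]$ is linear in $\bar{s}[t]$, the reconstruction identity rewrites it as a linear map $\ell(s'[t])$ (with the auxiliary channel cancelling the $\sigma(z_0)$ offsets). Applying the Taylor device a second time, I choose $B_2,\theta_2$ so that on the first $d$ coordinates $\cR_2$ outputs $\sigma(z_0+\delta\,\ell_i(s'[t]))\approx\sigma(z_0)+\delta\sigma'(z_0)\ell_i(s'[t])$, and on the auxiliary coordinate outputs $\sigma(z_0)$. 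Finally, $\cQ$ is the linear projection that subtracts the auxiliary coordinate from each of the first $d$ coordinates and divides by $\delta\sigma'(z_0)$, producing $\ell(s')\approx\bar{\cQ}(\bar{s})$.

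The hard part is the uniform error analysis, because the reconstruction formula magnifies each step's deviation by $1/\epsilon$. Writing $e_j[t]\coloneqq s'_j[t]-(\text{invariant target})_j$, the uniform $C^1$-behavior of $\sigma$ near $z_0$ together with boundedness of orbits on $K^N$ yields $|e_j[t+1]|\le C_1 r(\epsilon)+C_2\max_k|e_k[t]|$ with $r(\epsilon)=o(\epsilon)$ and $C_1,C_2$ depending only on the fixed data $A,B,\bar{P},\sigma$. Iterating over the finite horizon $t\le N$ gives $\max_{j,t}|e_j[t]|=o(\epsilon)$, so the amplified reconstruction error $|e_j[t]|/\epsilon\to 0$ uniformly on $K^N$. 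Combined with the analogous $o(1)$ error from $\cR_2$ (controlled by $\delta\to 0$), the total approximation error is arbitrarily small once $\epsilon$ and $\delta$ are chosen small enough, completing the proof.
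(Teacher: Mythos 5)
Your construction is sound and rests on the same device as the paper's proof: scale the pre-activation by a small parameter, shift it to the non-degenerating point $z_0$, use the first-order Taylor expansion $\sigma(z_0+\epsilon z)=\sigma(z_0)+\epsilon\sigma'(z_0)z+o(\epsilon)$, and undo the scaling with a linear read-out, with one auxiliary channel carrying $\sigma(z_0)$ when it is nonzero. The arrangement differs slightly (the paper puts a token-wise linearizing cell first and the true recurrence in the second cell, whereas you put the recurrence in $\cR_1$ and use a token-wise $\cR_2$ with $A_2=0$ plus a second linearization to push the projection $\bar{\cQ}$ through the activation), and your use of the auxiliary channel inside the recurrence — reconstructing $\bar{s}_j=(s'_j-s'_{d+1})/(\epsilon\sigma'(z_0))$ rather than subtracting a fixed bias — correctly handles the zero initial state at $t=0$, which is exactly the subtlety the paper's remark flags. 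Both routes yield two RNN cells of width $d+\beta(\sigma)$, so the architectural count matches the statement.

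The flaw is in the error analysis as you state it. The recursion $|e_j[t+1]|\le C_1 r(\epsilon)+C_2\max_k|e_k[t]|$ with constants independent of $\epsilon$, and the conclusion $\max_{j,t}|e_j[t]|=o(\epsilon)$ for \emph{all} coordinates, are not available under the paper's hypotheses. First, on the coordinates $i\in I$ the update passes the (arbitrary, not near-$z_0$) pre-activation through $\sigma$, which is only assumed continuous there, not Lipschitz; so you can only control $e_i[t+1]$ by a modulus of continuity, giving $o(1)$, not a linear bound and certainly not $o(\epsilon)$. Second, even granting Lipschitzness, the reconstruction multiplies the errors of the coordinates $j\notin I$ by $1/\epsilon$ before they enter the rows $i\in I$, so a single-tier recursion would need $C_2\sim 1/\epsilon$, and then iterating cannot deliver $o(\epsilon)$ on the $I$-coordinates. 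The repair is the two-tier bookkeeping the paper uses: maintain $|e_i[t]|=o(1)$ for $i\in I$ and $|e_i[t]|=o(\epsilon)$ for $i\notin I$ (the latter because the row is pre-multiplied by $\epsilon$, which cancels the $1/\epsilon$ magnification and turns the $o(1)$ errors of the $I$-coordinates into $o(\epsilon)$, plus the uniform Taylor remainder). Since only the non-$I$ coordinates (and the output of your second linearization, with its own parameter $\delta$) are ever divided by the small parameter, this weaker induction suffices, and with that correction your argument goes through on the finite horizon $N$ by uniform continuity and compactness.
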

\begin{proof}[Sketch of proof]
    The detailed proof is available in Appendix \ref{appendix:modified_rnn}.
    We use the Taylor expansion of $\sigma$ at $z_0$ to recover the value before activation.
    For the $i$-th component with $i\not\in I$, choose a small $\delta>0$ and linearly approximate $\sigma\left(z_0+\delta z\right)$ as $\sigma(z_0)+\delta\sigma'(z_0)z$. An affine transform after the Taylor expansion recovers $z$.
\end{proof}

\reviewB{\begin{remark}
    Since the additional width is only used to move the input of each RNN cell to near $z_0$, it seems easily removable using a bias term at first glance, provided the projection map is affine instead of linear.
    However, a bias term $\theta$ in \eqref{eqn:rnn} is fixed across all time steps, while we need different translations: one for the first token and the other for tokens after the second.
    
    For example, for an RNN cell $\cR^\delta:\bR^{1\times\bN}\rightarrow\bR^{1\times\bN}$ defined by 
    \begin{equation}
        \cR^\delta(x)[t+1]=\sigma\left(a\cR^\delta(x)[t]+\delta x[t+1]+\theta\right),
    \end{equation}
    we need $\theta=z_0$ to use the Taylor expansion at $z_0$ to the first token $\cR^\delta(x)[1]=\sigma\left(\delta x[1]+\theta\right)$.
    When $\theta=z_0$, we cannot represent the second token 
    \begin{align}
        \cR^\delta(x)[2]
        &= \sigma\left(a\cR^\delta(x)[1]+\delta x[2]+z_0\right)
        \\
        &= \sigma\left(a\left(\sigma\left(z_0\right)+\delta\sigma'\left(z_0\right)x[1]+o\left(\delta\right)\right)+\delta x[2]+z_0\right),
    \end{align}
    as the Taylor expansion at $z_0$ unless $a\sigma\left(z_0\right)=0$.
    
    In other words, the need for the additional width originates from that previous state $\cR(x)[t-1]$ appears only after the second token.
    Nonetheless, we can make $\beta\equiv0$, by setting the proper initial state for each RNN cell and using an affine projection map.
\end{remark}}

The lemma implies that a modified RNN can be approximated by an RNN with at most one additional width.
For a given modified RNN $\bar{\cQ}\circ\bar{\cR}_L\circ\cdots\circ\bar{\cR}_1\circ\review{\bar{\cP}}$ of width $d$ and $\eps>0$, we can find RNN $\cR_1,\ldots,\cR_{2L}$ and linear maps $\cP_1,\ldots,\cP_L$, $\cQ_1,\ldots,\cQ_L$ such that
\begin{equation}
    \sup_{x\in K^N}\left\|\bar{\cQ}\circ\bar{\cR}_L\circ\cdots\circ\bar{\cR}_1\circ\bar{\cP}(x)-
    \left(\cQ_L\cR_{2L}\cR_{2L-1}\cP_L\right)\circ\cdots\circ\left(\cQ_1\cR_2\cR_1\cP_1\right)(x)\right\|<\eps.
\end{equation}
The composition $\cR\circ\cP$ of an RNN cell $\cR$ and token-wise linear map $\cP$ can be substituted by another RNN cell $\cR'$.
More concretely, for $\cR$ and $\cP$ defined by
\begin{align}
    \cR(x)[t+1]&=\sigma\left( A\cR(x)[t]+Bx[t+1]+\theta \right),
    \\
    \cP(x)[t]&=P\left(x[t]\right),
    \intertext{$\cR\circ\cP$ defines an RNN cell $\cR'$}
    \cR'(x)[t+1]&=\sigma\left( A\cR'(x)[t]+BPx[t+1]+\theta\right).
\end{align}
Thus, $\cR_{2l+1}\left(\cP_{l+1}\cQ_l\right)$ becomes a recurrent cell, and $\left(\cQ_L\cR_{2L}\cR_{2L-1}\cP_L\right)\circ\cdots\circ\left(\cQ_1\cR_2\cR_1\cP_1\right)(x)$ defines a network of form \eqref{eqn:RNN}.

%%%%%%%%%%%%%%%%%%%%%%%%%%%%%%%%%%%%%%%%%%%%%%%%%%%%%%%%%%%%%%%%%%%%%%%%%%%%%%%%%%%%%%%%%%%%%%%%%%%%%
%%%%%%%%%%%%%%%         Universal Approximation in continuous space             %%%%%%%%%%%%%%%%%%%%%
%%%%%%%%%%%%%%%%%%%%%%%%%%%%%%%%%%%%%%%%%%%%%%%%%%%%%%%%%%%%%%%%%%%%%%%%%%%%%%%%%%%%%%%%%%%%%%%%%%%%%

\section{Universal Approximation for Deep RNN in Continuous Function Space}
\label{section:universal_continuous}
This section introduces the universal approximation theorem of deep RNNs in continuous function space.
\begin{theorem}[Universal approximation theorem of deep RNN 1]
    \label{thm:universal_seq_to_seq}
    Let $f:\bR^{d_x\times N}\rightarrow\bR^{d_y\times N}$ be a continuous past-dependent sequence-to-sequence function and $\sigma$ be a non-degenerate activation function.
    Then, for any $\eps>0$ and compact subset $K\subset\bR^{d_x}$, there exists a deep RNN $\cN$ of width \reviewB{$d_x+d_y+3+\alpha(\sigma)$} such that
    \begin{equation}
        \sup_{x\in K^N}\sup_{1\le t\le N}\left\|f(x)[t]-\cN(x)[t]\right\|<\eps,
    \end{equation}
    \reviewB{where $\alpha\left(\sigma\right)=\begin{cases}
        0&\text{$\sigma$ is ReLU or a non-degenerating function with $\sigma\left(z_0\right)=0$.}\\
        1&\text{$\sigma$ is a non-degenerating function with $\sigma\left(z_0\right)\ne0$}.
    \end{cases}$}
    % where $\alpha\left(\sigma\right)=\left\{\begin{array}{ll}0&\text{ $\sigma$ is ReLU }\\ 1& \text{ $\sigma$ is a non-degenerating function with $\sigma(z_0)=0$ }\\ 2& \text{ $\sigma$ is a non-degenerating function with $\sigma(z_0)\ne0$ } \end{array}\right.$
\end{theorem}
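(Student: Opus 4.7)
The plan is to realize the desired network as the composition of two conceptual phases sharing the same hidden channels: an \emph{encoder phase} that compresses the prefix $x[1\mathbin{:}t]$ into a single scratch coordinate, and a \emph{token-wise MLP phase} (implemented as RNN cells with zero recurrent weight $A=0$) that decodes this encoding together with the current token into an approximation of $f(x)[t]$. The past-dependency of $f$ makes this causal architecture natural for an RNN. I would work with modified RNN cells in the sense of \eqref{eqn:modified_rnn} throughout and invoke Lemma~\ref{lemma:modified_rnn} at the very end.

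First I would construct the encoder. Using a constant number of modified cells I would maintain a scalar channel evolving by a quasi-decimal recursion
\begin{equation*}
    e[t+1]=\delta\,e[t]+\phi\bigl(x[t+1]\bigr),
\end{equation*}
where $\delta>0$ is sufficiently small and $\phi:K\to[0,1)$ is a fixed continuous injection realized by a small token-wise MLP. For $\delta$ small enough, the map $x[1\mathbin{:}t]\mapsto e[t]$ is continuous and injective on $K^t$ for every $t\le N$, and the images of $e[t]$ for different $t$ lie in mutually disjoint intervals. The encoder occupies only $1+d_x$ coordinates (one for $e[\cdot]$ and $d_x$ for propagating $x[t]$ forward), leaving the remaining $d_y+2$ channels dormant.

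Next I would extend the induced decoding $F\colon(e[t],x[t])\mapsto f(x)[t]$ by the Tietze extension theorem to a continuous function on a compact subset of $\bR^{1+d_x}$, and apply the deep narrow MLP universality theorem of \cite{Kidger_2020} to approximate this extension by a token-wise MLP of width $(1+d_x)+d_y+2=d_x+d_y+3$. Since an RNN cell with $A=0$ coincides with a token-wise MLP layer, this phase slots into the RNN architecture at the same total width. Composing the two phases yields a modified RNN approximating $f$ to within the prescribed accuracy on $K^N$.

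Finally I would invoke Lemma~\ref{lemma:modified_rnn} to replace the modified cells with standard ones, incurring $\beta(\sigma)$ extra channels in the non-degenerating case $\sigma(z_0)\ne0$; the ReLU case is handled separately by the identity trick $x=\ReLU(x+c)-c$ valid on bounded intervals, which avoids the extra channel and matches $\alpha(\sigma)=0$. The main obstacle will be Phase~2: a \emph{single} fixed token-wise MLP must compute $f[t](x[1\mathbin{:}t])$ simultaneously for every $t=1,\ldots,N$. The disjointness of encoding ranges across $t$ together with the Tietze extension reduce this to a uniform approximation of one continuous function, but controlling how the activation-induced errors propagate through the recurrent encoder without destroying injectivity of the code is the technical heart of the argument.
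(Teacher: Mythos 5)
There is a genuine gap, and it sits exactly where you locate the ``technical heart'': the encoder phase cannot exist in the form you need. You require a map $x[1\mathbin{:}t]\mapsto e[t]$ that is simultaneously \emph{continuous} and \emph{injective} on $K^t$, with $e[t]$ a single scalar. For $K=[0,1]^{d_x}$ (or any compact set containing a two-dimensional disk) and $t\,d_x\ge 2$, no such map exists: a continuous injection from a compact set into $\bR$ is a homeomorphism onto its image, and no subset of $\bR$ contains a homeomorphic copy of a $2$-cell. Already your building block $\phi:K\to[0,1)$ fails for $d_x\ge2$, and even for $d_x=1$ the concatenated code $\delta\phi(x[1])+\phi(x[2])$ cannot be injective on $K^2$. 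The only way quasi-decimal concatenation becomes injective is to make $\phi$ a quantizer with finite range, which is discontinuous; but then the continuous network realizing the encoder must interpolate across quantization boundaries, where the code takes intermediate values on which the Tietze-extended decoder $F$ is uncontrolled. This is tolerable in $L^p$ (the bad set has small measure) but fatal for the $\sup$-norm statement you are proving, since $f$ is not even well approximated pointwise near those boundaries. Indeed, the paper uses precisely this encode--memorize--decode scheme, but only in the $L^p$ theorem (Theorem \ref{thm:universal_lp}); for the continuous case it deliberately avoids compressing the prefix into a scalar.

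For contrast, the paper's route for Theorem \ref{thm:universal_seq_to_seq} keeps the current token uncompressed in $d_x$ channels and never encodes the whole history: it first shows a modified RNN of width $d_x+2$ can accumulate, in one scratch channel, the hidden-unit sums $\sigma\bigl(\sum_j A_{i,j}[t]x[j]\bigr)$ of a two-layer MLP (Lemmas \ref{lemma:linear_sum_into_rnn}--\ref{lemma:mlp_in_trnn}, via the time-enhanced cells that decouple time steps), and then removes the time dependence of the weights by translating tokens to the pairwise-disjoint sets $K+t\mathbf{1}_{d_x}$ and applying one token-wise MLP that computes $b[t]x[t]$ on each translate (Lemma \ref{lemma:rnn_approximate_trnn}); Lemma \ref{lemma:modified_rnn} then converts modified cells to genuine ones. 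Note that the disjointness trick there is injective only in the discrete index $t$, not across the continuum of prefixes, which is why it survives uniform-norm estimates. If you want to salvage your plan, you would have to replace the scalar prefix code by something of this kind; as written, the decoder function $F$ is not well defined (continuous-injective code impossible) or not uniformly approximable (quantized code), so the Tietze/Kidger step has nothing correct to extend.
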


To prove the above theorem, we deal with the case of the sequence-to-vector function $\cN:\bR^{d_x\times N}\rightarrow\bR^{d_y}$ first, in subsection \ref{subsection:seq_to_vec}.
Then, we extend our idea to a sequence-to-sequence function using bias terms to separate the input vectors at different times in subsection \ref{subsection:seq_to_seq}, and the proof of Theorem \ref{thm:universal_seq_to_seq} is presented at the end of this section.
As the number of additional components required in each step in the subsections depends on the activation function, we use $\alpha(\sigma)$ to state the theorem briefly.

\reviewA{\subsection{Approximation of sequence-to-vector function}}
\label{subsection:seq_to_vec}
\reviewB{The motivation for approximating a sequence-to-vector function $f:\bR^{d_x\times N}\rightarrow\bR^{d_y}$ is to copy a two-layered MLP via a modified RNN.
Note that the output of a two-layered MLP is a linear sum of outputs of its hidden nodes, and each output of the hidden nodes is a linear sum of inputs with the following activation function.
In Lemma \ref{lemma:linear_sum_into_rnn}, we construct a modified RNN that simulates a hidden node of an MLP, which can be represented as the sum of the inner product of some matrices and $N$ input vectors in $\bR^{d_x}$ with activation.
After that, we use an additional buffer component in Lemma \ref{lemma:mlp_in_rnn} to copy another hidden node in the two-layered MLP and the following modified RNN cell accumulates two results from the nodes.
The buffer component of the modified RNN cell is then reset to zero to copy another hidden node.
Repeating the procedure, the modified RNN with bounded width copies the two-layered MLP.}

Now, we present the statements and sketches of the proof corresponding to each step.
The following lemma implies that a modified RNN \reviewA{can} compute the linear sum of all the input components, which copies the hidden node of a two-layered MLP.
\begin{lemma}
    \label{lemma:linear_sum_into_rnn}
    Suppose $A[1],A[2],\cdots,A[N]\in\bR^{1\times d_x}$ are the given matrices. Then there exists a modified RNN  $\cN=\reviewB{\cR_N}\circ\cR_{N-1}\circ\cdots\circ\cR_1\circ\cP:\bR^{d_x\times N}\rightarrow \bR^{(d_x+1)\times N}$ of width $d_x+1$ such that (the symbol $\ast$ indicates that there exists some value irrelevant to the proof)
    \begin{equation}
        \label{eqn:linear_sum_into_rnn}
        \begin{aligned}
            \cN(x)[t]&=\begin{bmatrix}x[t]\\ \ast\end{bmatrix}&\text{ for }t<N,\\
            \cN(x)[N]&=\begin{bmatrix}x[N]\\ \sigma\left(\sum_{t=1}^NA[t]x[t]\right)\end{bmatrix}.
        \end{aligned}
    \end{equation}
\end{lemma}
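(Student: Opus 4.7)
My plan is to construct a modified RNN that keeps $x[t]$ intact in the first $d_x$ coordinates at every layer and uses the single extra coordinate as an accumulator, applying $\sigma$ only at the very last cell. Take the lifting map $\cP(x)[t]=\begin{bmatrix}x[t]\\0\end{bmatrix}$; let the first $N-1$ cells $\cR_1,\ldots,\cR_{N-1}$ be purely linear (modified activation with $I_k=\emptyset$, so no nonlinearity is applied), and let $\cR_N$ use the modified activation with $I_N=\{d_x+1\}$, activating only the accumulator coordinate.

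For each cell I choose the weight block on the first $d_x$ rows so that they copy through: the first $d_x$ rows of the recurrent matrix $A_k$ vanish, the upper-left $d_x\times d_x$ submatrix of $B_k$ equals $I_{d_x}$ (with all other entries in those rows zero), and the first $d_x$ components of $\theta_k$ vanish. A direct induction in $k$ then gives $\left(\cR_k\circ\cdots\circ\cR_1\circ\cP(x)\right)[t]_{1:d_x}=x[t]$, which supplies the first $d_x$ coordinates required by \eqref{eqn:linear_sum_into_rnn} and ensures that $x[t+1]$ is available as an input to the accumulator recurrence at the next layer. Under these choices the last coordinate $s^{(k)}[t]$ satisfies a scalar recurrence of the form
\begin{equation*}
s^{(k)}[t+1]=\alpha_k\,s^{(k)}[t]+\beta_k\,s^{(k-1)}[t+1]+\gamma_k\,x[t+1],
\end{equation*}
with scalars $\alpha_k,\beta_k\in\bR$ and a row vector $\gamma_k\in\bR^{1\times d_x}$ still to be chosen.

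The crux is to pick these parameters so that the pre-activation of the accumulator in $\cR_N$ at time $t=N$ equals $\sum_{r=1}^{N}A[r]\,x[r]$; applying $\sigma$ then yields the claimed last coordinate. I fix $\alpha_k=\beta_k=1$ for $k\le N-1$ and $\alpha_N=0,\,\beta_N=1$, so that $\cR_N$ discards its own past and hence is not contaminated by $\sigma$-activated values at earlier times. Unrolling the recurrences for the first $N-1$ layers shows that $s^{(N-1)}[N]=\sum_{r=1}^{N}p(r)\,x[r]$ where $p(r)\in\bR^{1\times d_x}$ is a polynomial of degree at most $N-2$ in $r$ whose $N-1$ coefficient vectors depend linearly and invertibly on $\gamma_1,\ldots,\gamma_{N-1}$. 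By polynomial interpolation at the distinct nodes $1,\ldots,N-1$, there is a unique choice of these $\gamma_k$ realizing $p(r)=A[r]$ for $r=1,\ldots,N-1$. Finally, $\gamma_N$ is set to $A[N]-p(N)$, correcting the coefficient of $x[N]$ added by $\cR_N$.

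The main obstacle is the bookkeeping that identifies the polynomial structure of the coefficient of $x[r]$ in $s^{(N-1)}[N]$ and confirms the invertibility of the resulting linear system in $\gamma_1,\ldots,\gamma_{N-1}$; both reduce to standard polynomial-interpolation arguments at distinct integer nodes.
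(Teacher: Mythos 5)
Your construction is correct, and it is essentially the paper's construction — keep $x[t]$ in the first $d_x$ coordinates, use the single extra coordinate as an accumulator driven by linear (unactivated) cells, and apply $\sigma$ only at the end — but you justify the key solvability step by a different argument. The paper keeps all $N$ cells homogeneous (with $\alpha_k=\beta_k=1$ throughout), computes the accumulator explicitly as $\sum_{i,j}\binom{n+m-i-j}{n-i}b_i x[j]$ via a two-index recurrence, and then proves invertibility of the binomial matrix $\Lambda_N=\{\binom{2N-i-j}{N-i}\}$ by elementary row/column operations and induction (Lemmas \ref{lemma:coeff_of_bixj} and \ref{lemma:the_matrix}). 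You instead make the last cell memoryless ($\alpha_N=0$), which cleanly decouples the coefficient of $x[N]$ (handled by $\gamma_N$) and avoids contamination of the pre-activation by $\sigma$-values from earlier times, and you reduce the remaining system to an $(N-1)$-dimensional one solved by a degree/interpolation argument: the coefficient of $x[j]$ in $s^{(N-1)}[N]$ is $\sum_{i=1}^{N-1}\binom{2N-1-i-j}{N-1-i}\gamma_i$, and since the $i$-th summand is a polynomial in $j$ of exact degree $N-1-i$, the map $(\gamma_1,\dots,\gamma_{N-1})\mapsto p$ is triangular with nonzero diagonal, hence invertible, and evaluation at the distinct nodes $1,\dots,N-1$ is bijective. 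This buys a somewhat more elementary and shorter solvability argument at the cost of a less uniform cell structure; the paper's version buys an explicit closed form reused later (Corollary \ref{cor:lambda_is_full_rank} is needed for the time-enhanced and bidirectional variants, where full-rank submatrices $\Lambda_{N,k}$ are required, so your interpolation shortcut would have to be redone there). The only thing you leave unproved — that the coefficient of $x[r]$ is a polynomial of degree at most $N-2$ in $r$ depending invertibly on the $\gamma_k$ — is true and follows from a short induction on the layer index (each layer's accumulator is a prefix sum over time of the previous layer's, raising the degree in $t-j$ by one), which is exactly the bookkeeping the paper performs with its binomial recurrence; so there is no genuine gap, only a deferred but routine verification.
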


\begin{proof}
    [Sketch of the proof]
    The detailed proof is available in Appendix \ref{appendix:linear_sum_into_rnn}.
    Define the $m$-th modified RNN cell $\cR_m$, of the form of \eqref{eqn:rnn} without activation, with $A_m=\begin{bmatrix}
    O_{d_x\times d_x}&O_{d_x\times 1}\\
    O_{1\times d_x}&1\end{bmatrix}$, $B_m=\begin{bmatrix}
    I_{d_x}&O_{d_x\times 1}\\
    b_m&\review{1}\end{bmatrix}$ where $b_m\in\bR^{1\times b_x}$.
    Then, the $(d_x+1)$th component $y[N]_{d_x+1}$ of the final output $y[N]$ after $N$ layers becomes a linear combination of $b_ix[j]$ with some constant coefficients $\alpha_{i,j}$ and $\sum_{i=1}^N\sum_{j=1}^N\alpha_{i,j}b_ix[j]$.
    Thus the coefficient of $x[j]$ is represented by $\sum_{i=1}^N\alpha_{i,j}b_i$, which we wish to be $A[j]$ for each $j=1,2,\ldots,N$.
    In matrix formulation, we intend to find $b$ satisfying $\Lambda^Tb=A$, where $\Lambda=\left\{\alpha_{i,j}\right\}_{1\le i,j\leq N }\in\bR^{N\times N}$, $b=\begin{bmatrix}b_1\\\vdots\\b_N\end{bmatrix}\in\bR^{N\times d_x}$, and $A=\begin{bmatrix}A[1]\\\vdots\\ A[N]\end{bmatrix}$.
    As $\Lambda$ is invertible there exist $b_i$ that solve $\left(\Lambda^T b\right)_j=A[j]$.
\end{proof}

After copying a hidden node using the above lemma, we add a component, $(d_x+2)$th, to copy another hidden node.
Then the results are accumulated in the $(d_x+1)$th component, and the final component is to be reset to copy another node.
As the process is repeated, a modified RNN replicates the output node of a two-layered MLP.
\begin{lemma}
    \label{lemma:mlp_in_rnn}
    Suppose $w_i\in\bR$, $A_i[t]\in\bR^{1\times d_x}$ are given for $t=1,2,\ldots,N$ and $i=1,2,\ldots,M$.
    Then, there exists a modified RNN $\cN:\bR^{d_x\times N}\rightarrow \bR$ of width $d_x+2$ such that
    \begin{equation}
        \label{eqn:mlp_in_rnn}
        \cN(x)=\sum_{i=1}^M w_i\sigma\left(\sum_{t=1}^N A_i[t]x[t]\right).
    \end{equation}
\end{lemma}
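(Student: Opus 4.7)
The plan is to use the extra $(d_x+2)$-nd component as a scratch buffer to invoke Lemma~\ref{lemma:linear_sum_into_rnn} $M$ times in sequence, while the $(d_x+1)$-st component serves as a running accumulator for the outer sum $\sum_{i=1}^M w_i\sigma\left(\sum_t A_i[t]x[t]\right)$. I would build the network in $M$ blocks of $N$ modified RNN cells each, with the lifting $\cP(x)=(x,0,0)^T$. Within block $i$, the first $d_x$ components carry $x[t]$ forward unchanged, the $(d_x+1)$-st is passed through from the input at the same time step (so that whatever value sits there at time $N$ is frozen across the block), and the $(d_x+2)$-nd is used exactly as in Lemma~\ref{lemma:linear_sum_into_rnn} to produce $\sigma\left(\sum_t A_i[t]x[t]\right)$ in the last cell at time $N$, by choosing the per-cell coefficients $b_m^{(i)}$ to solve the invertible linear system $\Lambda^T b^{(i)}=A_i$.

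To glue the blocks together, I would design the \emph{first} cell of each block $i>1$ to perform a merge: it reads its input's $(d_x+1)$-st and $(d_x+2)$-nd entries and writes their combination (the first plus $w_{i-1}$ times the second) into its own $(d_x+1)$-st, while simultaneously starting a fresh accumulation in the $(d_x+2)$-nd (by zeroing out the ``pull-from-input'' coefficient for the $(d_x+2)$-nd row of $B$ and placing $b_1^{(i)}$ on the $v$-columns). At time $N$ the input's $(d_x+2)$-nd equals $\sigma\left(\sum_t A_{i-1}[t]x[t]\right)$, so this merge correctly advances the accumulator from $S_{i-2}$ to $S_{i-1}\coloneqq\sum_{j<i}w_j\sigma\left(\sum_t A_j[t]x[t]\right)$; at earlier times $t<N$ the $(d_x+1)$-st entry picks up garbage values, but this is harmless because the final projection $\cQ(v,a,b)=a+w_M b$ only reads the state at time $N$.

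I would formalize the argument by induction on $i$, maintaining the invariant that after block $i$ one has $v[t]=x[t]$ for every $t$, and at time $N$ the $(d_x+1)$-st equals $S_{i-1}$ and the $(d_x+2)$-nd equals $\sigma\left(\sum_t A_i[t]x[t]\right)$. Applying $\cQ$ at the end then yields $S_{M-1}+w_M\sigma\left(\sum_t A_M[t]x[t]\right)$, which is exactly the desired sum. The main subtlety I expect to verify carefully is that the merge step in cell $1$ of block $i>1$ does not disturb the fresh $(d_x+2)$-nd accumulation within block $i$: this is ensured because, by design, the $(d_x+2)$-nd update reads only the $v$-components (never the $(d_x+1)$-st) and because the merge writes only to the $(d_x+1)$-st. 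Beyond this bookkeeping, the per-block analysis reduces directly to the argument already established in Lemma~\ref{lemma:linear_sum_into_rnn}.
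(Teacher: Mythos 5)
Your proposal is correct and follows essentially the same route as the paper's proof: the $(d_x+1)$-st component serves as a running accumulator and the $(d_x+2)$-nd as a resettable buffer, with Lemma \ref{lemma:linear_sum_into_rnn} invoked once per hidden node $i$. The only differences are cosmetic — you fold the add-and-reset step into the first cell of the next block (zeroing the pull-from-input coefficient) and apply the final weight $w_M$ via the projection map, whereas the paper uses a dedicated merging cell after each block — and your handling of the garbage values at times $t<N$ and of the non-interference between the two extra components matches the paper's bookkeeping.
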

\begin{proof}
    First construct a modified RNN $\cN_1:\bR^{d_x\times N}\rightarrow\bR^{(d_x+2)\times N}$ of width $d_x+2$ such that
    \begin{align}
        \cN_1(x)[t]&=\begin{bmatrix}x[t]\\ \ast\\0\end{bmatrix} &\text{ for }t<N,\\
        \cN_1(x)[N]&=\begin{bmatrix}x[N]\\ \sigma\left(\sum_{t=1}^NA_1[t]x[t]\right)\\0\end{bmatrix},
    \end{align}
    as Lemma \ref{lemma:linear_sum_into_rnn}.
    Note that the final component does not affect the first linear summation and remains zero.
    Next, using the components except for the $(d_x+1)$th one, construct $\cN_2:\bR^{(d_x+2)\times N}\rightarrow\bR^{(d_x+2)\times N}$, which satisfies
    \begin{align}
        \cN_2\cN_1(x)[t]&=\begin{bmatrix}x[t]\\ \ast\\ \ast\end{bmatrix} &\text{ for }t<N,\\
        \cN_2\cN_1(x)[N]&=\begin{bmatrix}x[N]\\ \sigma\left(\sum_{t=1}^NA_1[t]x[t]\right)\\ \sigma\left(\sum_{t=1}^NA_2[t]x[t]\right)\end{bmatrix},
    \end{align}
    and use one modified RNN cell $\cR$ after $\cN_2$ to add the results and reset the last component:
    \begin{align}
        \cR\cN_2\cN_1(x)[t] &=\begin{bmatrix}x[t]\\ \ast\\ 0\end{bmatrix},
        \\
        \cR\cN_2\cN_1(x)[N] &=\begin{bmatrix}x[N]\\ w_1\sigma\left(\sum A_1[t]x[t]\right)+w_2\sigma\left(\sum A_2[t]x[t]\right)\\0\end{bmatrix}.
    \end{align}
    As the $(d_x+2)$th component is reset to zero, we use it to compute the third sum $w_3\sigma\left(\sum A_3[t]x[t]\right)$ and repeat until we obtain the final network $\cN$ such that
    \begin{equation}
        \cN(x)[N] = \begin{bmatrix}x[N]\\\sum_{i=1}^Mw_i\sigma\left(\sum_{t=1}^NA_i[t]x[t]\right)\\0\end{bmatrix}.
    \end{equation}
\end{proof}

\begin{remark}
    \label{remark:general_dy}
    The above lemma implies that a modified RNN of width $d_x+2$ can copy the output node of a two-layered MLP.
    We can extend this result to an arbitrary $d_y$-dimensional case.
    Note that the first $d_x$ components remain fixed, the $(d_x+1)$th component computes a part of the linear sum approximating the target function, and the $(d_x+2)$th component computes another part and is reset.
    When we need to copy another output node for another component of the output of the target function $f:\bR^{d_x\times N}\rightarrow\bR^{d_y\times N}$, only one additional width is sufficient.
    Indeed, the $(d_x+2)$th component computes the sum and the final component, and the $(d_x+3)$th component acts as a buffer to be reset in that case.
    By repeating this process, we obtain $(d_x+d_y+1)$-dimensional output from the modified RNN, which includes all $d_y$ outputs of the MLP and the components from the $(d_x+1)$th to the $(d_x+d_y)$th ones.
\end{remark}

\begin{theorem}[Universal approximation theorem of deep RNN 2]
    \label{thm:universal_seq_to_vec}
    Suppose a target $f:\bR^{d_x\times N}\rightarrow \bR^{d_y}$ is a continuous sequence-to-vector function, $K\subset\bR^{d_x}$ is a compact subset, $\sigma$ is a non-degenerating activation function, and $z_0$ is the non-degenerating point.
    Then, for any $\eps>0$, there exists a deep RNN $\cN:\bR^{d_x\times N}\rightarrow \bR^{d_y}$ of width $d_x+d_y+1+\beta(\sigma)$ such that
    \begin{equation}
        \sup_{x\in K^N}\left\|f(x)-\cN(x)\right\|<\eps,
    \end{equation}
    where $\beta(\sigma)=\left\{\begin{array}{ll}0&\text{ if \reviewB{$\sigma\left(z_0\right)=0$}}\\ 1&\text{ otherwise}\end{array}\right.$
\end{theorem}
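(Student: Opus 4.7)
The plan is to reduce Theorem \ref{thm:universal_seq_to_vec} to the classical universal approximation theorem for two-layer MLPs, and then compile the resulting MLP into a deep RNN using Lemmas \ref{lemma:mlp_in_rnn} and \ref{lemma:modified_rnn}. First, regarding a sequence $x=(x[1],\ldots,x[N])\in\bR^{d_x\times N}$ as a single vector in $\bR^{d_x N}$, the Leshno--Pinkus theorem (applicable because $\sigma$ is continuous, non-polynomial, and continuously differentiable at $z_0$ with $\sigma'(z_0)\neq 0$) yields a two-layer MLP
\begin{equation}
    g(x) \;=\; \sum_{i=1}^M w_i\,\sigma\!\left(\sum_{t=1}^N A_i[t]\,x[t] + b_i\right),
\end{equation}
with $w_i\in\bR^{d_y}$, $A_i[t]\in\bR^{1\times d_x}$, and $b_i\in\bR$, satisfying $\sup_{x\in K^N}\|f(x)-g(x)\|<\eps/2$.

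Next, I would invoke Lemma \ref{lemma:mlp_in_rnn}, together with the dimension extension in Remark \ref{remark:general_dy}, to simulate $g$ exactly by a modified RNN $\bar{\cN}\colon\bR^{d_x\times N}\to\bR^{d_y}$ of width $d_x+d_y+1$. The bias terms $b_i$ inside the activations of $g$ pose no obstacle: each modified RNN cell already carries a bias vector $\theta$ in its affine part, so the constructions of Lemmas \ref{lemma:linear_sum_into_rnn} and \ref{lemma:mlp_in_rnn} extend verbatim with $\sum_t A_i[t]x[t]$ replaced by $\sum_t A_i[t]x[t]+b_i$. Precomposing with the token-wise linear projection that extracts coordinates $d_x+1$ through $d_x+d_y$ of the hidden state at time $N$, one obtains a modified RNN with $\bar{\cN}(x)=g(x)$ on all of $K^N$.

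Finally, I would apply Lemma \ref{lemma:modified_rnn}, iterated cell by cell, together with the absorption argument in the discussion immediately following that lemma, to approximate $\bar{\cN}$ uniformly on $K^N$ within error $\eps/2$ by a genuine deep RNN $\cN$ of width $d_x+d_y+1+\beta(\sigma)$. The triangle inequality then delivers $\sup_{x\in K^N}\|f(x)-\cN(x)\|<\eps$, which is the conclusion of the theorem.

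The main obstacle is bookkeeping rather than conceptual: one must check that the width surcharge $\beta(\sigma)$ is incurred only once (not once per modified cell) when Lemma \ref{lemma:modified_rnn} is applied iteratively, and that the intermediate token-wise projection and lifting maps $\cQ_l$, $\cP_{l+1}$ can be absorbed into the adjacent recurrent cells so that the final network has the canonical form $\cQ\circ\cR_L\circ\cdots\circ\cR_1\circ\cP$. Both points are precisely what the discussion after Lemma \ref{lemma:modified_rnn} establishes, so the argument closes up cleanly once the MLP is in hand.
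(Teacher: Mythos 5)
Your proposal is correct and follows essentially the same route as the paper: approximate $f$ by a two-layer MLP via Leshno--Pinkus, replicate it exactly with a modified RNN of width $d_x+d_y+1$ via Lemma \ref{lemma:mlp_in_rnn} and Remark \ref{remark:general_dy}, then convert to a true RNN of width $d_x+d_y+1+\beta(\sigma)$ via Lemma \ref{lemma:modified_rnn} and conclude by the triangle inequality. Your explicit handling of the biases $b_i$ (absorbed into the cells' $\theta$ terms) is a small point the paper glosses over, but it is not a different argument.
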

\begin{proof}
    We present the proof for $d_y=1$ here, but adding $d_y-1$ width for each output component works for the case $d_y>1$.
    By the universal approximation theorem of the MLP, \reviewAB{Theorem 1 in \cite{leshno1993multilayer}}, there exist $w_i$ and $A_i[t]$ for $i=1,\ldots,M$ such that
    \begin{equation}
        \sup_{x\in K^N}\left\|f(x)-\sum_{i=1}^Mw_i\sigma\left(\sum_{t=1}^N A_i[t]x[t]\right)\right\|<\frac{\eps}{2}.
    \end{equation}
    Note that there exists a modified RNN $\bar{\cN}:\bR^{d_x\times N}\rightarrow\bR$ of width $d_x+2$,
    \begin{equation}
        \bar{\cN}(x)=\sum_{i=1}^Mw_i\sigma\left(\sum_{t=1}^N A_i[t]x[t]\right).
    \end{equation}
    By Lemma \ref{lemma:modified_rnn}, there exists an RNN $\cN:\bR^{d_x\times N}\rightarrow \bR$ of width $d_x+2+\beta(\sigma)$ such that
    \begin{equation}
        \sup_{x\in K^n}\left\|\bar{\cN}(x)-\cN(x)\right\|<\frac{\eps}{2}.
    \end{equation}
    Hence we have $\left\|f(x)-\cN(x)\right\|<\eps$.
\end{proof}

\reviewA{\subsection{Approximation of sequence-to-sequence function}}
\label{subsection:seq_to_seq}
Now, we consider an RNN $\cR$ as a function from sequence $x$ to sequence $y=\cR(x)$ defined by \eqref{eqn:rnn}.
Although the above results are remarkable in that the minimal width has an upper bound independent of the length of the sequence, it only approximates a part of the output sequence. 
Meanwhile, as the hidden states calculated in each RNN cell are connected closely for different times, fitting all the functions that can be independent of each other becomes a more challenging problem.
For example, the coefficient of $x[t-1]$ in $\cN(x)[t]$ equals the coefficient of $x[t]$ in $\cN(x)[t+1]$ if $\cN$ is an RNN defined as in the proof of Lemma \ref{lemma:linear_sum_into_rnn}.
This correlation originates from the fact that $x[t-1]$ and $x[t]$ arrive at $\cN(x)[t],\cN(x)[t+1]$ via the same intermediate process, 1-time step, and $N$ layers.

We sever the correlation between the coefficients of $x[t-1]$ and $x[t]$ by defining the \textit{time-enhanced recurrent cell} in Definition \ref{def:time_enhanced_layer} and proceed similarly as in the previous subsection till the Lemma \ref{lemma:mlp_in_trnn}.
% We sever the correlation between the coefficients of $x[t-1]$ and $x[t]$ by defining the \textit{time-enhanced recurrent cell} as follows:
\begin{definition}
    \label{def:time_enhanced_layer}
    \textit{Time-enhanced recurrent cell}, or layer, is a process that maps sequence $x=\left(x[t]\right)_{t\in\bN}\in\bR^{d_s\times\bN}$ to sequence $y=\left(y[t]\right)_{t\in\bN}\in\bR^{d_s\times\bN}$ via
    \begin{equation}
        \label{eqn:time_enhanced_layer}
        y[t+1]\coloneqq\cR(x)[t+1]=\sigma\left(A[t+1]\cR(x)[t]+B[t+1]x[t+1]+\theta[t+1]\right)
    \end{equation}
    where $\sigma$ is an activation function, $A[t]$, $B[t]\in\bR^{d_s\times d_s}$ are weight matrices and $\theta[t]\in\bR^{d_s}$ is the bias given for each time step $t$.
\end{definition}
Like RNN, \textit{time-enhanced RNN} indicates a composition of the form \eqref{eqn:rnn} with time-enhanced recurrent cells instead of RNN cells, and we denote it as TRNN.
The \textit{modified TRNN} indicates a TRNN whose activation functions in some cell act on only part of the components.
\textit{Time-enhanced BRNN}, denoted as \textit{TBRNN}, indicates a BRNN whose recurrent layers in each direction are replaced by time-enhanced layers.
A \textit{modified TBRNN} indicates a TBRNN whose activation function is modified to act on only part of the components.
With the proof of Lemma \ref{lemma:modified_rnn} using $\bar{A}[t]$, $\bar{B}[t]$ instead of $\bar{A}$, $\bar{B}$, a TRNN can approximate a modified TRNN.

The following lemma shows that the modified TRNN successfully eliminates the correlation between outputs.
See the appendix for the complete proof.
\begin{lemma}
    \label{lemma:linear_sum_into_trnn}
    Suppose $A_j[t]\in\bR^{1\times d_x}$ are the given matrices for $1\le t\le N$, $1\le j\le t$.
    Then there exists a modified TRNN $\tilde{\cN}:\bR^{d_x\times N}\rightarrow\bR^{(d_x+1)\times N}$ of width $d_x+1$ such that
    \begin{equation}
        \label{eqn:linear_sum_into_trnn}
        \tilde{\cN}(x)[t]=\begin{bmatrix}
        x[t]\\
        \sigma\left(\sum_{j=1}^tA_j[t]x[j]\right)\end{bmatrix},
    \end{equation}
    for all $t=1,2,\ldots,N$.
\end{lemma}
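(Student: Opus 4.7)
The plan is to generalize the construction in Lemma \ref{lemma:linear_sum_into_rnn} by exploiting the extra flexibility of time-varying weights, assembling a modified TRNN of width $d_x+1$ with $N+1$ cells. As in Lemma \ref{lemma:linear_sum_into_rnn}, I keep the first $d_x$ components of the hidden state equal to $x[t]$ throughout by choosing the top $d_x$ rows of each cell to copy the top $d_x$ coordinates of the previous-layer input; all computation takes place in the last component.

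For the first $N$ cells I take the modified activation with empty index set (purely linear) and choose
\[
A_l[t] = \begin{bmatrix} 0 & 0 \\ 0 & 1 \end{bmatrix},\qquad B_l[t] = \begin{bmatrix} I_{d_x} & 0 \\ b_l[t] & 1 \end{bmatrix},\qquad \theta_l[t] = 0,
\]
with row vectors $b_l[t]\in\bR^{1\times d_x}$ to be determined. Writing $z_l[t]$ for the last component at layer $l$ and time $t$, the recurrence reduces to $z_l[t] = z_l[t-1] + z_{l-1}[t] + b_l[t]\,x[t]$ with $z_l[0] = z_0[t] = 0$. A routine induction on $l+t$ using Pascal's identity then yields the closed form
\[
z_N[t] = \sum_{k=1}^N\sum_{j=1}^t \binom{(N-k)+(t-j)}{N-k}\, b_k[j]\, x[j].
\]

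To enforce $z_N[t] = \sum_{j=1}^t A_j[t]\, x[j]$ for every $t$, I match coefficients of $x[j]$, obtaining for each fixed $j\in\{1,\ldots,N\}$ the linear system
\[
\sum_{k=1}^N \binom{(N-k)+(t-j)}{N-k}\, b_k[j] = A_j[t],\qquad t = j, j+1, \ldots, N,
\]
in the unknowns $b_1[j],\ldots,b_N[j]\in\bR^{1\times d_x}$. The coefficient matrix has $(t-j+1,k)$-entry $\binom{(N-k)+(t-j)}{t-j}$, which is a polynomial in $k$ of degree $t-j$; any nontrivial linear combination of its rows is therefore a polynomial in $k$ of degree at most $N-j$ vanishing at the $N$ distinct points $k=1,\ldots,N$, hence identically zero. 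Thus the rows are linearly independent and the system is solvable.

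For the final cell I use the modified activation with index set $\{d_x+1\}$ (so $\sigma$ acts only on the last component) and pick its last row so that the pre-activation value equals $z_N[t]$: specifically, $(A_{N+1}[t])_{d_x+1,\cdot} = 0$ and $(B_{N+1}[t])_{d_x+1,\cdot} = (0,\ldots,0,1)$ with zero bias. The last component of the output then equals $\sigma(z_N[t]) = \sigma\!\left(\sum_{j=1}^t A_j[t]\, x[j]\right)$, while the first $d_x$ components still output $x[t]$, as required. The main technical obstacle is identifying the closed form for $z_N[t]$ via Pascal's identity and verifying full row rank of the coefficient matrix by the polynomial-evaluation argument above; the remaining bookkeeping is entirely parallel to the proof of Lemma \ref{lemma:linear_sum_into_rnn}.
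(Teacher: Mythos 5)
Your construction is correct and is essentially the construction the paper itself uses: the same cells with time-indexed last row $b_l[t]$ in $B_l[t]$ and identity feedback in the last coordinate, the same closed form $\binom{(N-k)+(t-j)}{N-k}$ for the coefficient of $b_k[j]x[j]$ (the paper's Lemma \ref{lemma:coeff_of_bixj} with $A[t],B[t]$ in place of $A,B$), and the same reduction to solving, for each fixed $j$, an underdetermined linear system in $b_1[j],\dots,b_N[j]$. The one genuine difference is how solvability is established: the paper proves invertibility of $\Lambda_n=\{\binom{2n-i-j}{n-i}\}$ by elementary row/column operations and induction (Lemma \ref{lemma:the_matrix}) and then extracts full row rank of the rectangular submatrix (Corollary \ref{cor:lambda_is_full_rank}), whereas you argue directly that the rows $k\mapsto\binom{(N-k)+(t-j)}{t-j}$ are polynomials in $k$ of exact degrees $0,1,\dots,N-j\le N-1$, so a nontrivial combination is a nonzero polynomial of degree at most $N-1$ and cannot vanish at the $N$ points $k=1,\dots,N$. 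Your polynomial-degree argument is shorter and self-contained, while the paper's version yields the reusable matrices $\Lambda_n,\Lambda_{n,k}$ that it also needs in the BRNN proof (Appendix \ref{appendix:linear_sum_into_tbrnn}); be sure to state explicitly that the exact (distinct) degrees are what rule out a nontrivial vanishing combination. Two minor points: the lifting map $\cP(x)=\begin{bmatrix}x\\0\end{bmatrix}$ giving $z_0[t]=0$ should be mentioned as part of the architecture, and your explicit final cell applying $\sigma_{\{d_x+1\}}$ token-wise is a welcome touch, since the paper's appendix stops at the linear sum and leaves that last activation step implicit; the extra $(N+1)$-th cell is harmless because the lemma constrains only the width, not the depth.
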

\begin{proof}
    [Sketch of proof]
    The detailed proof is available in Appendix \ref{appendix:linear_sum_into_trnn}.
    % The main idea is to use $b_m[t]$ instead of $b_m$ in the proof of Lemma \ref{lemma:linear_sum_into_rnn}.
    \review{The main idea is to use $b_m[t]\in\bR^{1\times d_x}$ instead of $b_m\in\bR^{1\times d_x}$ in the proof of Lemma \ref{lemma:linear_sum_into_rnn} with a modified TRNN $\tilde{\cR}$ of the form
    \begin{equation}
        \label{eqn:special_trnn_in_body}
        \tilde{\cR}(x)[t+1]\coloneqq \sigma_I\left(\begin{bmatrix}O_{d_x\times d_x}&\\&1\end{bmatrix}\tilde{\cR}(x)[t]+\begin{bmatrix}I_{d_x\times d_x}&0\\b_m[t+1]&1\end{bmatrix}x[t+1]\right).
    \end{equation}
    }
    \review{The time-index coefficient $b_m[t]$ separates the calculation along the same intermediate process mentioned at the beginning of this subsection.
    For instance, a process from time $t_1$ to $t_1+1$ results differently from $t_2$ to $t_2+1$ for $b_m[t_1]\ne b_m[t_2]$, even in the same layer.
    }
    As the coefficient matrices at each time $[t]$ after $N$ layers are full rank, we can find $b_m[t]$ implementing the required linear combination for each time.
\end{proof}

Recall the proof of Lemma \ref{lemma:mlp_in_rnn}.
An additional width serves as a buffer to implement and accumulate linear sum in a node in an MLP.
Similarly, we proceed with Lemma \ref{lemma:linear_sum_into_trnn} instead of Lemma \ref{lemma:linear_sum_into_rnn} to conclude that there exists a modified TRNN $\cN$ of width $d_x+2$ such that each $\cN[t]$ reproduces an MLP approximating $f[t]$.
\begin{lemma}
    \label{lemma:mlp_in_trnn}
    Suppose \reviewB{$w_i[t]\in\bR$}, $A_{i,j}[t]\in\bR^{1\times d_x}$ are \reviewA{the} given matrices for $1\le t\le N$, $1\le j\le t$, $1\le i\le M$.
    Then, there exists a modified TRNN $\tilde{\cN}:\bR^{d_x\times N}\rightarrow\bR^{1\times N}$ of width $d_x+2$ such that
    \begin{equation}
        \label{eqn:mlp_in_trnn}
        \tilde{\cN}(x)[t]=\sum_{i=1}^M \reviewB{w_i[t]}\sigma\left(\sum_{j=1}^t A_{i,j}[t]x[j]\right)
    \end{equation}
\end{lemma}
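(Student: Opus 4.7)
The plan is to mirror the block structure used in the proof of Lemma \ref{lemma:mlp_in_rnn} verbatim, substituting Lemma \ref{lemma:linear_sum_into_trnn} for its non-time-enhanced counterpart and exploiting the fact that a time-enhanced cell may freely use different coefficients at different time steps to realise the time-dependent weights $w_i[t]$. Write $u_i(x)[t]\coloneqq\sigma\bigl(\sum_{j=1}^tA_{i,j}[t]x[j]\bigr)$, so that the target at time $t$ is $\sum_{i=1}^Mw_i[t]u_i(x)[t]$.

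First I would apply Lemma \ref{lemma:linear_sum_into_trnn} with coefficients $A_{1,j}[t]$ and then pad with a zero buffer component, producing a modified TRNN $\tilde{\cN}_1$ of width $d_x+2$ with $\tilde{\cN}_1(x)[t]=(x[t]^T,\,u_1(x)[t],\,0)^T$. Next, treating the first $d_x$ components together with the $(d_x{+}2)$-th component as a fresh copy of the Lemma \ref{lemma:linear_sum_into_trnn} configuration, and choosing the cross weights so that the $(d_x{+}1)$-th slot is merely carried through (take the corresponding row of $A[t]$, $B[t]$ zero except for a diagonal $1$, and no other slot reads from it), I stack a second modified TRNN $\tilde{\cN}_2$ that writes $u_2(x)[t]$ into the $(d_x{+}2)$-th slot while keeping $u_1(x)[t]$ intact. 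I then append one modified time-enhanced cell with empty activation index $I=\emptyset$ whose time-dependent matrices carry $x[t]$ through the first $d_x$ slots, write $w_1[t]u_1(x)[t]+w_2[t]u_2(x)[t]$ into the $(d_x{+}1)$-th slot, and reset the $(d_x{+}2)$-th slot to zero. Iterating the cycle ``compute $u_i$ in the buffer, accumulate into slot $d_x{+}1$, reset the buffer'' for $i=3,\ldots,M$ keeps the width at $d_x+2$ throughout, and a final linear projection onto the $(d_x{+}1)$-th coordinate produces $\tilde{\cN}$.

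The main obstacle, which is precisely the reason time-enhanced cells were introduced in Definition \ref{def:time_enhanced_layer}, is that unlike Lemma \ref{lemma:mlp_in_rnn}, where only the last token $t=N$ needs to be correct, here all $N$ partial sums must be correct simultaneously at every time step $t$; this is exactly the guarantee supplied by Lemma \ref{lemma:linear_sum_into_trnn}, so the serious linear-algebra work is already discharged. What remains is bookkeeping, and the one place I would check carefully is that each block $\tilde{\cN}_i$ really leaves the accumulator slot $(d_x{+}1)$ untouched while writing into the buffer slot $(d_x{+}2)$, which amounts to verifying that the cell construction in the proof sketch of Lemma \ref{lemma:linear_sum_into_trnn} reads only from the first $d_x$ components plus its designated buffer component. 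Because time-enhanced cells permit the weight entries $w_i[t]$ to depend on $t$, the accumulation at each time step picks up the correct coefficient without any additional machinery, so no further width is incurred beyond the single buffer slot.
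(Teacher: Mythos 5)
Your proposal is correct and follows essentially the same route as the paper, which simply invokes the block structure of Lemma \ref{lemma:mlp_in_rnn} with the time-dependent coefficients $b_m[t]$ and $w_i[t]$ supplied by Lemma \ref{lemma:linear_sum_into_trnn}; your bookkeeping check (that each block reads only the first $d_x$ components and its buffer slot, so the accumulator is carried through untouched) is exactly what the appendix construction of the cells with $A_l=\begin{bmatrix}O&O\\O&1\end{bmatrix}$, $B_l[t]=\begin{bmatrix}I&0\\b_l[t]&1\end{bmatrix}$ guarantees.
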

\begin{proof}
    We omit the detailed proof because it is almost the same as the proof of Lemma \ref{lemma:mlp_in_rnn}.
    % The only difference is to use Lemma \ref{lemma:linear_sum_into_trnn} instead of Lemma \ref{lemma:linear_sum_into_rnn}.
    \reviewB{The only difference is to use $b_m[t]$} instead of $b_m$ to construct $A_{i,j}\review{[t]}$ and $w_i[t]$ as in the proof of Lemma \ref{lemma:linear_sum_into_trnn}.
\end{proof}
This implies that the modified TRNN can approximate any past-dependent sequence-to-sequence function.

Finally, we connect the TRNN and RNN.
Although it is unclear whether a modified RNN can approximate an arbitrary modified TRNN, there exists a modified RNN that approximates the specific one described in Lemma \ref{lemma:linear_sum_into_trnn}.
\begin{lemma}
    \label{lemma:rnn_approximate_trnn}
    Let $\tilde{\cN}$ be a given modified TRNN that computes \eqref{eqn:linear_sum_into_trnn} with width $d_x+1$ and $K\subset\bR^{d_x}$ be a compact set.
    Then, for any $\eps>0$ there exists a modified RNN $\cN$ of width $d_x+2+\gamma(\sigma)$ such that
    \begin{equation}
        \sup_{x\in K^N}\left\|\tilde{\cN}(x)-\cN(x)\right\|<\eps,
    \end{equation}
    where $\gamma(\ReLU)=0$, $\gamma(\sigma)=1$ for non-degenerating activation $\sigma$.
    As a corollary, there exists a modified RNN $\cN$ of width $d_x+3+\gamma\left(\sigma\right)$ approximating $\tilde{\cN}$ in Lemma \ref{lemma:mlp_in_trnn}.
\end{lemma}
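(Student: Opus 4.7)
The plan is to compensate for the RNN's fixed weights by augmenting the state with a time counter and exploiting the activation's nonlinearity (via a Taylor linearization in the non-ReLU case) to simulate the time-varying weights of the TRNN.

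First I would augment the state with a counter component $c$ satisfying $c[t]=t$, implemented by an identity-activated channel with self-loop weight $1$ and bias $1$. Together with the $d_x$ input-buffer channels and the single accumulator channel inherited from the TRNN of Lemma~\ref{lemma:linear_sum_into_trnn}, this brings the width to $d_x+2$ in the ReLU case. For a general non-degenerate $\sigma$, one further channel is used as a pre-activation buffer (following the Taylor trick of Lemma~\ref{lemma:modified_rnn}) so that a near-identity update can be emulated by $\sigma$ within a small neighborhood of its non-degenerating point $z_0$; this accounts for the extra unit of width $\gamma(\sigma)=1$.

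Next I would rewrite the TRNN's time-varying scalar update $b_m[t+1]\cdot x[t+1]$ as the evaluation of a function $G_m\colon\{1,\ldots,N\}\times K\to\bR$ with $G_m(c,x)=b_m[c]\cdot x$ at $(c[t+1],x[t+1])$. Being continuous on this compact domain, $G_m$ can be approximated uniformly within any prescribed tolerance by a finite sum $\sum_k\alpha_k\,\operatorname{ReLU}(\beta_k c+\gamma_k^\top x+\delta_k)$, via the universal approximation theorem for two-layer MLPs (Theorem~1 of \cite{leshno1993multilayer}). Each ReLU term is realized by a single modified RNN cell that applies the activation only to the accumulator channel, and the running accumulator absorbs the contribution through an identity-activated update in the following cell while $x$ and $c$ are passed through unchanged. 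Cascading such cells over all ReLU terms for all $N$ layers of the TRNN produces the target combination $\sum_{j\le t}A_j[t]\,x[j]$ up to $\eps$ in the accumulator at every time $t$, and a final cell applies $\sigma$ to yield the TRNN's desired output.

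The main obstacle is the translation invariance of RNNs: a fixed-weight RNN naturally produces only convolutional combinations $\sum_j g(t-j)\,x[j]$, whereas the TRNN realizes genuinely time-dependent combinations $\sum_j A_j[t]\,x[j]$. Dissolving this mismatch requires the counter–activation interaction, since the counter provides a ``time stamp'' that the piecewise-linear ReLU (or Taylor-linearized $\sigma$) converts into a time-dependent effective coefficient. A further delicate point is to arrange cells so that (i) a ReLU term is computed, (ii) it is added to the accumulator, and (iii) temporary storage is cleared, without ever exceeding the width budget $d_x+2+\gamma(\sigma)$; this bookkeeping, together with propagating the uniform error through the cascade of cells, is the main technical hurdle.
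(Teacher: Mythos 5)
There is a genuine gap, and it sits exactly at the step you yourself flag as the ``main technical hurdle.'' Your channel budget is $d_x$ (inputs) $+$ $1$ (counter $c$) $+$ $1$ (accumulator), and you propose that each hidden unit $\sigma\left(\beta_k c+\gamma_k^\top x+\delta_k\right)$ be ``realized by a single modified RNN cell that applies the activation only to the accumulator channel'' while $x$ and $c$ are passed through unchanged. But the accumulator channel cannot simultaneously hold the running sum of the previously computed terms and have its pre-activation replaced by the fresh linear form $\beta_k c+\gamma_k^\top x+\delta_k$: writing the new hidden unit into that channel destroys the partial sum (and $\sigma$ of ``old value $+$ new pre-activation'' is not what you want). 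The standard fix is a separate buffer channel, as in Lemma \ref{lemma:mlp_in_rnn} (accumulator \emph{plus} buffer on top of the $d_x$ pass-through channels); with your dedicated counter this gives $d_x+3$ for ReLU, exceeding the claimed $d_x+2+\gamma(\sigma)$ with $\gamma(\ReLU)=0$. A second, related looseness: you use one ``accumulator'' for two different recursions --- the depth-wise accumulation of the shallow-MLP terms for a single token, and the time-wise accumulation (self-loop weight $1$) that produces $\sum_{j\le t}$ together with the incoming $(d_x+1)$-th channel of the previous simulated TRNN layer. These must be kept separate (the per-token value has to be completed before it is fed into the time recursion, otherwise the self-loop double-counts), and your allocation leaves no room for both. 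Finally, your explanation of $\gamma(\sigma)$ via the Taylor trick of Lemma \ref{lemma:modified_rnn} is misplaced here: inside a \emph{modified} RNN identity channels are free by definition, so no Taylor linearization is needed at this stage.

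For comparison, the paper escapes this counting problem by not spending a channel on the counter at all: it translates $x[t]\mapsto x[t]+t\mathbf{1}_{d_x}$ so that tokens at different times live in disjoint compact sets $K+t\mathbf{1}_{d_x}$, i.e.\ the time stamp is folded into the existing $d_x$ channels, and then it invokes the deep-narrow MLP results of \cite{hanin2017approximating,Kidger_2020} (register model, width $d_x+1+\gamma(\sigma)$, input-preserving) to compute $b_l[t]x[t]$ token-wise on that disjoint union; the one remaining channel carries the incoming accumulation and performs the time recursion ($\cR_3$ in Appendix \ref{appendix:rnn_approximate_trnn}). If you want to keep your shallow-MLP serialization instead of citing the register-model results, you would still need either the translation trick (so the hidden units become linear readouts of the translated $x$-channels and the counter channel is freed to serve as the buffer) or some equivalent device for regenerating the counter, neither of which appears in your write-up; as stated, the construction does not fit the claimed width.
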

\begin{proof}
    [Sketch of proof]
    Detailed proof is available in Appendix \ref{appendix:rnn_approximate_trnn}.
    \reviewB{Note that a function $\tilde{\cN}$ in \eqref{eqn:linear_sum_into_trnn} is constructed by modified TRNN cells of the form in \eqref{eqn:special_trnn_in_body}.
    Since $b_m[t]$ is the only time-dependent coefficient in \eqref{eqn:special_trnn_in_body}, it is enough to show that an RNN cell can approximate a function $X\rightarrow\begin{bmatrix}I_{d_x}&O_{d_x\times 1}\\ b_m[t]&\review{1}\end{bmatrix}X$ for $X=\begin{bmatrix}x[t]\\0\end{bmatrix}\in\bR^{d_x+1}$.
    In particular, approximating a map $x[t]\mapsto b_m[t]x[t]$ is what we need.}
    
    \reviewB{
        While an MLP can approximate map $x[t]\mapsto b_m[t]x[t]$ on a compact set $K$, it is hard to use token-wise MLP since we need different outputs for the same vector at other time steps.
        For instance, for a $v\in K$ and a sequence $x=\left(x[1],x[2]\right)=\left(v,v\right)\in K^2$, a token-wise MLP cannot simultaneously approximate two outputs, $b_m[1]v$ and $b_m[2]v$.
        Still, it is not the case when the domain $K_1$ of $x[1]$ and the domain $K_2$ of $x[2]$ are disjoint compact sets, and there is an MLP that approximates $v\mapsto b_m[1]v$ for $v\in K_1$ and $v\mapsto b_m[2]v$ for $v\in K_2$.
        From this point of view, we use an RNN and a token-wise MLP, to embed each vector of the input sequence into disjoint compact sets and to approximate the output $b_m[t]x[t]$ on each compact set, respectively.
        
        Now we present how to construct disjoint sets and the token-wise MLP.
    }

    Without loss of generality, we can assume $K\subset\left[0,\frac{1}{2}\right]^{d_x}$ and construct the first cell as the output at time $t$ to be $x[t]+t\mathbf{1}_{d_x}$.
    As $N$ compact sets $K+\mathbf{1}_{d_x}$, $K+2\mathbf{1}_{d_x},\ldots, K+N\mathbf{1}_{d_x}$ are disjoint, $t$ and $x[t]=y-t$ are uniquely determined for each $y\in K+t\mathbf{1}_{d_x}$.
    Thus, for any given $b[t]\in\bR^{1\times d_x}$, there exists an MLP of width $d_x+1+\gamma\left(\sigma\right)$ approximating $y=x[t]+t\mathbf{1}_{d_x}\mapsto b[t]x[t]$ on $\bigsqcup_{t}\left(K+t\mathbf{1}_{d_x}\right)$ as a function from $\bR^{d_x}\rightarrow\bR$ \citep{hanin2017approximating, Kidger_2020}.
    Indeed, we need to approximate $x[t]+t\mathbf{1}_{d_x}\mapsto \begin{bmatrix}x[t]+t\mathbf{1}_{d_x}\\ b[t]x[t]\end{bmatrix}$ as a function from $\bR^{d_x}$ to $\bR^{d_x+1}$.
    Fortunately, the first $d_x$ components preserve the original input data in \reviewAB{the proofs of \textbf{Proposition 2} in \cite{hanin2017approximating}, and} \textbf{Proposition 4.2(Register Model)} in \cite{Kidger_2020}.
    Thus, an MLP of width $d_x+1+\reviewB{\gamma\left(\sigma\right)}$ approximates $b[t]x[t]$ while keeping the $x+t\mathbf{1}_{d_x}$ in the first $d_x$ components, and the MLP is common across overall $t$ as inputs at different times $t_1$ and $t_2$ are already embedded in disjoint sets $K+t_1\mathbf{1}_{d_x}$ and $K+t_2\mathbf{1}_{d_x}$.
    Note that a token-wise MLP is a special case of an RNN of the same width.
    Nonetheless, we need an additional width to keep the $(d_x+1)$th component approximating $b[t]x[t]$.
    With the token-wise MLP implemented by an RNN and additional buffer width, we construct a modified RNN of width $d_x+2+\gamma(\sigma)$ approximating the modified TRNN cell used in the proof of Lemma \ref{lemma:linear_sum_into_trnn}.
\end{proof}

\reviewA{\subsection{Proof of Theorem \ref{thm:universal_seq_to_seq}}}
Summarizing all the results, we have the universality of a deep RNN in a continuous function space.
\begin{proof}
    [Proof of Theorem \ref{thm:universal_seq_to_seq}]
    As mentioned in Remark \ref{remark:general_dy}, we can set $d_y=1$ for notational convenience.
    By Lemma \ref{lemma:mlp_in_trnn}, there exists a modified TRNN $\tilde{\cN}$ of width $d_x+2$ such that
    \begin{equation}
        \sup_{x\in K^n}\left\| f(x)-\tilde{\cN}(x)\right\|<\frac{\eps}{3}.
    \end{equation}
    As $\tilde{\cN}$ is a composition of modified TRNN cells of width $d_x+2$ satisfying \eqref{eqn:linear_sum_into_trnn}, there exists a modified RNN $\bar{\cN}$ of width $d_x+3+\gamma(\sigma)$ such that
    \begin{equation}
        \sup_{x\in K^n}\left\|\tilde{\cN}(x)-\bar{\cN}(x)\right\|<\frac{\eps}{3}.
    \end{equation}
    Then, by Lemma \ref{lemma:modified_rnn}, there exists an RNN $\cN$ of width $d_x+3+\gamma(\sigma)+\beta(\sigma)=d_x+\review{4}+\alpha(\sigma)$ such that
    \begin{equation}
        \sup_{x\in K^n}\left\|\bar{\cN}(x)-\cN(x)\right\|<\frac{\eps}{3}.
    \end{equation}
    The triangle inequality yields 
    \begin{equation}
        \sup_{x\in K^n}\left\|f(x)-\cN(x)\right\|<\eps.
    \end{equation}
\end{proof}

\begin{remark}
    The number of additional widths $\alpha(\sigma)=\beta(\sigma)+\gamma(\sigma)$ depends on the condition of the activation function $\sigma$.
    Here, $\gamma(\sigma)$ is required to find the token-wise MLP that approximates embedding from $\bR^{d_x}$ to $\bR^{d_x+1}$.
    If further studies determine a tighter upper bound of the minimum width of an MLP to have the universal property in a continuous function space, we can reduce or even remove $\alpha(\sigma)$ according to the result.
\end{remark}
There is still a wide gap between the lower bound $d_x$ and upper bound $d_x+d_y+\review{4}+\alpha(\sigma)$ of the minimum width, and hence, we expect to be able to achieve universality with a narrower width.
For example, if $N=1$, an RNN is simply an MLP, and the RNN has universality without a node required to compute the effect of $t$.
Therefore, apart from the result of the minimum width of an MLP, further studies are required to determine whether $\gamma$ is essential for the case of $N\ge2$.

%%%%%%%%%%%%%%%%%%%%%%%%%%%%%%%%%%%%%%%%%%%%%%%%%%%%%%%%%%%%%%%%%%%%%%%%%%%%%%%%%%%%%%%%%%%%%%%%%%%%%
%%%%%%%%%%%%%%%%%%         Universal Approximation in L^p space             %%%%%%%%%%%%%%%%%%%%%%%%%
%%%%%%%%%%%%%%%%%%%%%%%%%%%%%%%%%%%%%%%%%%%%%%%%%%%%%%%%%%%%%%%%%%%%%%%%%%%%%%%%%%%%%%%%%%%%%%%%%%%%%

 \section{Universal Approximation for Stack RNN in $L^p$ Space}
\label{section:universal_lp}
This section introduces the universal approximation theorem of a deep RNN in $L^p$ function space for $1\le p<\infty$.
\begin{theorem}[Universal approximation theorem of deep RNN 3]
    \label{thm:universal_lp}
    Let $f:\bR^{d_x\times N}\rightarrow\bR^{d_y\times N}$ be a past-dependent sequence-to-sequence function in $L^p\left(\bR^{d_x\times N},\bR^{d_y\times N}\right)$ for $1\le p
    <\infty$, and $\sigma$ be a non-degenerate activation function with the non-degenerating point $z_0$.  
    Then, for any $\eps>0$ and compact subset $K\subset\bR^{d_x}$, there exists a deep RNN $\cN$ of width $\max\left\{d_x+1, d_y\right\}+1$ satisfying
    \begin{equation}
        \sup_{1\le t\le N}\left\|f(x)[t]-\cN(x)[t]\right\|_{L^p\left(K\right)}<\eps.
    \end{equation}
    Moreover, if the activation $\sigma$ is ReLU, there exists a deep RNN $\cN$ of width $\max\left\{d_x+1, d_y\right\}$ satisfying
    \begin{equation}
        \sup_{1\le t\le N}\left\|f(x)[t]-\cN(x)[t]\right\|_{L^p\left(\bR^{d_x}\right)}<\eps.
    \end{equation}
\end{theorem}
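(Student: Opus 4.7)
The plan is to adapt the \emph{encoding--memorization--decoding} paradigm introduced by \cite{Park_2021} for MLPs in $L^p$, exploiting the RNN's sequential structure to accumulate the encoding incrementally across time steps. By past-dependence the task reduces to approximating each $f[t]:\bR^{d_x\times t}\rightarrow\bR^{d_y}$ separately in $L^p$, and the goal is to produce a single RNN of the claimed width that approximates all the $f[t]$ simultaneously.

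First I would reduce by density: approximate $f[t]$ in $L^p$ by a function that is piecewise constant on a fine cubical partition of $K^t$ (and, in the ReLU case on $\bR^{d_x}$, also truncate outside a large cube at arbitrarily small $L^p$ cost since $f\in L^p$). After this reduction $f[t](x[1\mathbin{:}t])$ depends only on the tuple of cell indices $(q(x[1]),\ldots,q(x[t]))$ for some finite-valued quantizer $q:\bR^{d_x}\rightarrow\{0,1,\ldots,M-1\}$.

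Next I would construct a modified RNN encoder of width $d_x+1$: its first $d_x$ components pass $x[t]$ through, and the extra component accumulates a scalar code $\phi[t]=\sum_{s=1}^{t}M^{-s}q(x[s])$ via the rank-one recurrence $\phi[t+1]=\phi[t]+M^{-(t+1)}q(x[t+1])$. The discontinuous $q$ is realized, up to an arbitrarily thin neighbourhood of the cell boundaries (which contributes only negligible $L^p$ error), by a finite sum of ReLU hats, or, for a general non-degenerate $\sigma$, by a Taylor-expansion construction analogous to Lemma \ref{lemma:modified_rnn}. The decoder is then a token-wise MLP $\phi[t]\mapsto f[t](\cdot)$ from $\bR$ to $\bR^{d_y}$ whose domain of interest is the finite image of $\phi[t]$; the $L^p$ minimum-width theorem for MLPs in \cite{Park_2021} furnishes such an MLP of width $\max\{1+1,d_y\}\le\max\{d_x+1,d_y\}$ for ReLU, and one more for a general non-degenerate $\sigma$. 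A token-wise MLP is an RNN with $A=0$, so composing the encoder with the decoder yields an RNN of width $\max\{d_x+1,d_y\}$ for ReLU and $\max\{d_x+1,d_y\}+1$ otherwise, where the additional $+1$ matches the $\beta(\sigma)$ overhead of Lemma \ref{lemma:modified_rnn}.

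The main difficulty will be ensuring that, for each fixed $t$, the scalar codes $\phi[t]$ over all possible cell-index tuples remain well-separated enough that the token-wise decoder can disambiguate them despite the approximation error in $q$; this requires tuning $M$ and the boundary-excision set against the allotted $\eps$. A secondary subtlety in the ReLU case on all of $\bR^{d_x}$ is truncating the unbounded input to a compact region without exceeding the width budget, which I would handle by absorbing a ReLU-implemented clipping into the encoder's existing $d_x+1$ hidden slots.
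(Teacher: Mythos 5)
Your overall strategy---quantize each token, concatenate into a scalar code with a recurrent accumulator, then decode token-wise, following \cite{Park_2021}---is the same one the paper uses, but as written it has a gap that the paper's proof is specifically engineered to avoid: your decoder is a single token-wise map applied identically at every time step, yet it must realize the different functions $f[t]$. For that to be possible the images of the code $\phi[t]$ for different $t$ must be disjoint, and with $\phi[t]=\sum_{s=1}^{t}M^{-s}q(x[s])$ they are not: any input with $q(x[2])=0$ gives $\phi[2]=\phi[1]$ (and shifting $q$ away from zero only makes the time-$1$ and time-$2$ ranges overlap rather than coincide), so the decoder would be forced to output both an $f[1]$-value and an $f[2]$-value at the same code. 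The difficulty you flag (separation of codes for a fixed $t$) is not the binding one; the cross-time ambiguity is. The paper resolves it with the recurrent cell \eqref{eqn:RNN_encoder}, which, besides geometrically shrinking the stored code, adds $1$ at every step so that the set $D$ of encoder outputs is a disjoint union over $t$; only then can a single time-independent memorizer encode the $t$-dependence. Relatedly, your recurrence $\phi[t+1]=\phi[t]+M^{-(t+1)}q(x[t+1])$ has a time-varying input weight and hence is not an RNN cell with fixed $A,B,\theta$; the standard fix is exactly the paper's form $\phi[t+1]=c\,\phi[t]+q(x[t+1])+1$ with a fixed contraction $c$.

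A second, smaller issue: you justify the decoding step by the $L^p$ minimum-width theorem of \cite{Park_2021}, but an $L^p$ guarantee with respect to Lebesgue measure on $\bR$ says nothing on the image of the encoder, which is (essentially) a finite, measure-zero set of codes carrying all the mass of the pushforward measure. What is needed---and what the paper uses---are the explicit memorizer ($\bR\rightarrow\bR$, width $2+\gamma(\sigma)$) and decoder ($\bR\rightarrow\bR^{d_y}$, width $d_y+\gamma(\sigma)$) constructions of \cite{Park_2021}, which are accurate uniformly on the relevant code sets; combined with the width-$(d_x+1+\gamma(\sigma))$ encoder and the width-$(1+\beta(\sigma))$ RNN approximation of the accumulator cell, this gives the claimed width $\max\{d_x+1,d_y\}+\gamma(\sigma)$. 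Your ReLU truncation for the $\bR^{d_x}$ case is in the right spirit (the paper uses a clipping map $P_{L,\delta}$ of width $d_x+1$, the normalization $\cN(0)=0$, and an $\eps/3$ argument), but the two points above must be repaired before the proof goes through.
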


Before beginning the proof of the theorem, we \reviewB{assume $K\subseteq \left[0,1\right]^{d_x}$ without loss of generality and} summarize the scheme that will be used in the proof.
\cite{Park_2021} constructed an MLP of width $\max\{d_x+1,d_y\}+\gamma(\sigma)$ approximating a given target function $f$, using the ``encoding scheme.''
More concretely, the MLP is separated into three parts: encoder, memorizer, and decoder.

First, the encoder part quantizes each component of the input and output into a finite set.
The authors use the quantization function $q_n:[0,1]\rightarrow \cC_n$
\begin{equation}
    \label{eqn:quantization}
    q_n(v)\coloneqq\max\left\{c\in\cC_n\bigm|c\le v\right\},
\end{equation}
where $\cC_n\coloneqq \left\{0,2^{-n},2\times2^{-n},\ldots,1-2^{-n}\right\}$.
Then, each quantized vector is encoded into a real number by concatenating its components through the encoder $\enc_M:[0,1]^{d_x}\rightarrow\cC_{d_xM}$
\begin{equation}
    \label{eqn:encoder}
        \enc_M(x)\coloneqq\sum_{i=1}^{d_x}q_M(x_i)2^{-(i-1)M}.
\end{equation}
For small $\delta_1>0$, the authors construct an MLP $\cN_{enc}:[0,1]^{d_x}\rightarrow\cC_{d_xM}$ of width $d_x+1+\gamma(\sigma)$ satisfying
\begin{equation}
    \label{eqn:mlp_enc}
    \left\|\enc_M(x)-\cN_{enc}(x)\right\|<\delta_1.
\end{equation}
Although the quantization causes a loss in input information, the $L^p$ norm neglects some loss in a sufficiently small domain.

After encoding the input $x$ to $\enc_M(x)$ with large $M$, authors use the information of $x$ in $\enc_M(x)$ to obtain the information of the target output $f(x)$.
More precisely, they define the memorizer $\mem_{M,M'}:\cC_{d_xM}\rightarrow\cC_{d_yM'}$ to map the encoded input $\enc_M(x)$ to the encoded output $\enc_{M'}(f(x))$ as
\begin{equation}
    \mem\left(\enc_M(x)\right)\coloneqq \left(\enc_{M'}\circ f\circ q_M\right)(x),
\end{equation}
assuming the quantized map $q_M$ acts on $x$ component-wise in the above equation.
Then, an MLP $\cN_{mem}$ of width $2+\gamma(\sigma)$ approximates $\mem$; that is, for any $\delta_2>0$, there exists $\cN_{mem}$ satisfying
\begin{equation}
    \sup_{x\in [0,1]^{d_x}}\left\|\mem(\enc(x))-\cN_{mem}(\enc(x))\right\|<\delta_2.
\end{equation}

Finally, the decoder reconstructs the original output vector from the encoded output vector by cutting off the concatenated components.
Owing to the preceding encoder and memorizer, it is enough to define only the value of the decoder on $\cC_{d_yM'}$.
Hence the decoder $\dec:\cC_{d_yM'}\rightarrow\cC_{M'}^{d_y}\coloneqq\left(\cC_{M'}\right)^{d_y}$ is determined by
\begin{equation}
    \dec_{M'}(v)\coloneqq\hat{v}\qquad\text{where}\qquad \left\{\hat{v}\right\}\coloneqq \enc_{M'}^{-1}(v)\cap \cC_{M'}^{d_y}.
\end{equation}
Indeed, for small $\delta_3>0$, \cite{Park_2021} construct an MLP $\cN_{dec}:\cC_{d_yM'}\rightarrow\cC_{M'}^{d_y}$ of width $d_y+\gamma(\sigma)$ so that
\begin{equation}
    \label{eqn:mlp_dec}
    \left\|\dec_{M'}(v)-\cN_{dec}(v)\right\|<\delta_3
\end{equation}
Although \eqref{eqn:mlp_enc} and \eqref{eqn:mlp_dec} are not equations but approximations when the activation is just non-degenerate, the composition $\cN=\cN_{dec}\circ \cN_{mem}\circ \cN_{enc}$ approximates a target $f$ with sufficiently large $M,M'$ and sufficiently small $\delta_1,\delta_2$.

Let us return to the proof of Theorem \ref{thm:universal_lp}.
We construct the encoder, memorizer, and decoder similarly.
As the encoder and decoder is independent of time $t$, we use a token-wise MLP and modified RNNs define the token-wise MLPs.
On the other hand, the memorizer must work differently according to the time $t$ owing to the multiple output functions.
Instead of implementing various memorizers, we separate their input and output domains at each time by translation.
Then, it is enough to define one memorizer on the disjoint union of domains.

\begin{proof}
    [Proof of Theorem \ref{thm:universal_lp}]
    We first combine the token-wise encoder and translation for the separation of the domains.
    \reviewB{Consider the token-wise encoder $\enc_M:\bR^{d_x\times N}\rightarrow\bR^{1\times N}$, and the following recurrent cell $\cR:\bR^{1\times N}\rightarrow\bR^{1\times N}$
    \begin{equation}
        \label{eqn:RNN_encoder}
        \cR(v)[t+1] = 2^{-\left(d_xM+1\right)}\cR(v)[t]+v[t+1]+1.
    \end{equation}
    Then the composition $\cR_{enc}=\cR\enc_M$ defines an encoder of sequence from $K^N$ to $\bR^{1\times N}$:
    \begin{equation}
        \cR_{enc}(x)[t]=\sum_{j=1}^t 2^{-(j-1)d_xM} + \sum_{j=1}^t\enc_M\left(x[j]\right)2^{-\left(j-1\right)\left(d_xM+1\right)},
    \end{equation}}
    where $x=\left(x[t]\right)_{t=1,\ldots,N}$ is a sequence in $K$.
    Note that the range $D$ of $\cR_{enc}$ is a disjoint union of compact sets;
    \begin{equation}
        D=\bigsqcup_{t=1}^N \left\{ \cR_{enc}(x)[t] \mathbin{:}x\in K^N\right\}.
    \end{equation}
    Hence there exists a memorizer $\mem:\bR\rightarrow\bR$ satisfying
    \begin{equation}
        \mem(\cR_{enc}(x)\reviewB{[t]}) = \enc_{M'}\left( f\left(q_M(x)\right)[t]\right)
    \end{equation}
    for each $t=1,2,\ldots,N$. The token-wise decoder $\dec_{M'}$ is the last part of the proof.

    To complete the proof, we need an approximation of the token-wise encoder $\enc_M:\bR^{d_x}\rightarrow\bR$, \reviewB{a modified recurrent cell $\cR:\bR^{1\times N}\rightarrow\bR^{1\times N}$}, token-wise memorizer $\mem:\bR\rightarrow\bR$, and token-wise decoder $\dec_{M'}:\bR\rightarrow\bR^{d_y}$.
    Following \cite{Park_2021}, there exist MLPs of width $d_x+1+\gamma(\sigma)$, $2+\gamma(\sigma)$, and $d_y+\gamma(\sigma)$ that approximate $\enc_M$, $\mem$, and $\dec_{M'}$ respectively.
    \reviewB{Lemma \ref{lemma:modified_rnn} shows that $\cR$ is approximated by an RNN of width $1+\beta(\sigma)$.}
    Hence, an RNN of width $\max\left\{d_x+1+\gamma(\sigma), \reviewB{1+\beta(\sigma)}, 2+\gamma(\sigma),d_y+\gamma(\sigma)\right\}=\max\left\{d_x+1,d_y\right\}+\gamma(\sigma)$ approximates the target function $f$.

    \reviewB{In the case of ReLU activation, we can extend the domain $K^N$ to $\bR^{d_x\times N}$ as stated in \cite{park2020minimum}.
    Nonetheless, we present briefly how to deal with the subtle problem that the support of a network is not compact generally.
    
    We project each input $x[t]\in\bR^{d_x}$ by $P_{L,\delta}:\bR^{d_x}\rightarrow \bR^{d_x}$ defined by
    \begin{equation}
        P(x)=\begin{cases}
            x&x\in [-L,L], \\
            -\frac{L}{\delta}\left(x+L+\delta\right)&x\in[-L-\delta,-L], \\
            -\frac{L}{\delta}\left(x-L-\delta\right)&x\in[L,L+\delta], \\
            0&\text{otherwise}.
        \end{cases}
    \end{equation}
    Note that an MLP with width $d_x+1$ computes $P_{L,\delta}$.
    We will choose large $L$ to cover enough domain for $x\in\bR^{d_x\times N}$ and small $\delta$ to reduce the projection error.
    
    First, choose a large $L$ so that $\left\|f(x)[t]\right\|_{L^p\left(\bR^{d_x}\backslash[-L,L]^{d_x}\right)}<\frac{\eps}{3}$ for all $t=1,2,\ldots,N$.
    Second, construct an RNN $\cN=\cN_{dec}\circ\cN_{mem}\circ\cN_{enc}$ as above with $\cN(0)=0$, such that $\left\|f(x)[t]-\cN(x)[t]\right\|_{L^p\left([-L,L]^{d_x}\right)}<\frac{\eps}{3}$.
    We impose a condition $\cN(0)=0$ to bound error outside $[-L,L]^{d_x}$, which is possible because we may set $f(0)[t]=0$ for all $t$ under the $L^p$ norm.
    Finally, set $\delta$ so small that $\left\|f(x)[t]\right\|_{L^p\left([-L-\delta,L+\delta]^{d_x}\backslash [-L,L]^{d_x}\right)}<\frac{\eps}{6}$ and $\left\|\cN(x)[t]\right\|_{L^p\left([-L-\delta,L+\delta]^{d_x}\backslash [-L,L]^{d_x}\right)}<\frac{\eps}{6}$.
    After inserting the projection $P_{L,\delta}$ before the encoder $\cN_{enc}$ and defining new RNN $\cN'=\cN\circ P_{L,\delta}$, we have
    \begin{align}
        &\left\|f(x)[t]-\cN'(x)[t]\right\|_{L^p\left(\bR^{d_x}\right)} \\
        & \le \left\|f(x)[t]-\cN(x)[t]\right\|_{L^p\left([-L,L]^{d_x}\right)}
        +\left\|f(x)[t]-\cN'(x)[t]\right\|_{L^p\left(\bR^{d_x}\backslash[-L,L]^{d_x}\right)}\\
        &\le \frac{\eps}{3} + \left\| f(x)[t]-\cN(x)[t]\right\|_{L^p\left([-L-\delta,L+\delta]^{d_x}\backslash [-L,L]^{d_x}\right)}\\
        &\qquad +\left\| f(x)[t]-\cN(x)[t]\right\|_{L^p\left(\bR^{d_x}\backslash [-L-\delta,L+\delta]^{d_x}\right)}\\
        & \le \frac{\eps}{3} + \frac{\eps}{3} + \left\| f(x)[t]\right\|_{L^p\left(\bR^{d_x}\backslash[-L,L]^{d_x}\right)}\\
        &\le \eps
    \end{align}}
\end{proof}

%%%%%%%%%%%%%%%%%%%%%%%%%%%%%%%%%%%%%%%%%%%%%%%%%%%%%%%%%%%%%%%%%%%%%%%%%%%%%%%%%%%%%%%%%%%%%%%%%%%%%
%%%%%%%%%%%%%%%%%%%%%%%%%%%%%%%%         Variants of RNN             %%%%%%%%%%%%%%%%%%%%%%%%%%%%%%%%
%%%%%%%%%%%%%%%%%%%%%%%%%%%%%%%%%%%%%%%%%%%%%%%%%%%%%%%%%%%%%%%%%%%%%%%%%%%%%%%%%%%%%%%%%%%%%%%%%%%%%

\section{Variants of RNN}
\label{section:variants_of_rnn}
This section describes the universal property of some variants of RNN, particularly LSTM, GRU, or BRNN.
LSTM and GRU are proposed to solve the long-term dependency problem.
As an RNN has difficulty calculating and updating its parameters for long sequential data, LSTM and GRU take advantage of additional structures in their cells.
We prove that they have the same universal property as the original RNN.
On the other hand, a BRNN is proposed to overcome the past dependency of an RNN.
BRNN consists of two RNN cells, one of which works in reverse order.
We prove the universal approximation theorem of a BRNN with the target class of any sequence-to-sequence function.

The universal property of an LSTM originates from the universality of an RNN.
Mathematically LSTM $\cR_{LSTM}$ indicates a process that computes two outputs, $h$ and $c$, defined by \eqref{eqn:lstm}.
As an LSTM can reproduce an RNN with the same width, we have the following corollary:
\begin{corollary}[Universal approximation theorem of deep LSTM]
    \label{cor:universal_lstm}
    Let $f:\bR^{d_x\times N}\rightarrow\bR^{d_y\times N}$ be a continuous past-dependent sequence-to-sequence function.
    Then, for any $\eps>0$ and compact subset $K\subset\bR^{d_x}$, there exists a deep LSTM $\cN_{LSTM}$, of width $d_x+d_y+3$, such that
    \begin{equation}
        \sup_{x\in K^N}\sup_{1\le t\le N}\left\|f(x)[t]-\cN_{LSTM}(x)[t]\right\|<\eps.
    \end{equation}
    \reviewB{If $f\in L^p\left(\bR^{d_x\times N},\bR^{d_y\times N}\right)$, there exists a deep LSTM $\cN_{LSTM}$, of width $\max\left\{d_x+1,d_y\right\}+1$, such that
    \begin{equation}
        \sup_{1\le t\le N}\left\|f(x)[t]-\cN_{LSTM}(x)[t]\right\|_{L^p\left(K^N\right)}<\eps.
    \end{equation}}
\end{corollary}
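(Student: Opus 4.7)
The plan is to reduce the corollary to Theorems~\ref{thm:universal_seq_to_seq} and~\ref{thm:universal_lp} by exhibiting an approximate simulation of an RNN cell by an LSTM cell of the same width, with activation $\phi \coloneqq \tanh \circ \tanh$. Note that $\phi(0)=\tanh(0)=0$ and $\phi'(0)=\tanh'(\tanh(0))\cdot\tanh'(0)=1\ne 0$, so $\phi$ is non-degenerate with $z_0=0$ and $\phi(z_0)=0$. Hence $\alpha(\phi)=0$ in Theorem~\ref{thm:universal_seq_to_seq}, which matches the claimed LSTM widths $d_x+d_y+3$ in the continuous case and $\max\{d_x+1,d_y\}+1$ in the $L^p$ case.

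To simulate an RNN cell $r[t+1]=\phi(Ar[t]+Bx[t+1]+\theta)$, I would fix a large $M>0$ and, in the notation of~\eqref{eqn:lstm}, set $W_f=U_f=0$, $b_f=-M\mathbf{1}$; $W_i=U_i=0$, $b_i=M\mathbf{1}$; $W_o=U_o=0$, $b_o=M\mathbf{1}$; and $W_g=B$, $U_g=A$, $b_g=\theta$. Then $i[t+1]$ and $o[t+1]$ tend to $\mathbf{1}$ and $f[t+1]$ tends to $\mathbf{0}$ componentwise as $M\to\infty$, so $c[t+1]\approx g[t+1]=\tanh(Ah[t]+Bx[t+1]+\theta)$ and therefore $h[t+1]\approx\tanh(c[t+1])=\phi(Ah[t]+Bx[t+1]+\theta)$. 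The approximation is uniform on any bounded range of $h[t]$ and $x[t+1]$.

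Given $\eps>0$ and a past-dependent target $f$, Theorems~\ref{thm:universal_seq_to_seq} and~\ref{thm:universal_lp} applied with activation $\phi$ supply an RNN $\cN$ of the claimed width satisfying $\|f-\cN\|<\eps/2$. Because $\phi$ is bounded, the reachable hidden states of $\cN$ on $K^N$ lie in a fixed compact set independent of $M$. Replacing each RNN cell in $\cN$ by its LSTM simulation, with $M$ chosen large enough that the compounded per-cell error stays below $\eps/2$, yields a deep LSTM $\cN_{LSTM}$ of the same width with $\|f-\cN_{LSTM}\|<\eps$ by the triangle inequality. The $L^p$ case is identical once the compactification/projection argument inside the proof of Theorem~\ref{thm:universal_lp} reduces the analysis to a compact domain.

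The main obstacle is controlling how the per-cell simulation error compounds across the $L$ layers and $N$ time steps of $\cN$. The LSTM hidden state lies in $[-1,1]^{d_s}$, and on the finite horizon $t\le N$ the cell state also stays in a fixed bounded region, so the LSTM update is uniformly Lipschitz on the reachable state set. A standard induction over the time and layer indices then bounds the total error by $C(N,L)\cdot\eta$, where $\eta$ denotes the per-cell simulation error and $C(N,L)$ depends only on $N$, $L$, and the weights of $\cN$. Choosing $M$ large enough to make $\eta<\eps/(2C(N,L))$ completes the argument.
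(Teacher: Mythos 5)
Your proposal is correct and follows essentially the same route as the paper: zero out all but the candidate-gate parameters, drive the forget gate to $0$ via a large negative bias so the LSTM reduces to an RNN with a composed $\tanh$ activation that is non-degenerate with $\sigma(z_0)=0$, and then invoke Theorems~\ref{thm:universal_seq_to_seq} and~\ref{thm:universal_lp}. The only differences are cosmetic — you saturate the input/output gates to $1$ (activation $\tanh\circ\tanh$) where the paper leaves them at $\sigmoid(0)=\tfrac12$ (activation $(\tfrac12\tanh)\circ(\tfrac12\tanh)$) — and you make explicit the Lipschitz/induction argument for compounding the per-cell error, which the paper leaves implicit.
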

\begin{proof}
    \reviewB{We set all parameters but $W_g$, $U_g$, $b_g$, and $b_f$ as zeros, and then \eqref{eqn:lstm} is simplified as
    \begin{equation}
        \label{eqn:simple_lstm}
        \begin{aligned}
            c[t+1]&=\sigmoid(b_f)\odot c[t]+\frac{1}{2}\tanh\left(U_gh[t]+W_gx[t+1]+b_g\right),\\
            h[t+1]&=\frac{1}{2}\tanh\left(c[t+1]\right).
        \end{aligned}
    \end{equation}
    For any $\eps>0$, $b_f$ with sufficiently large negative components yields
    \begin{equation}
        \left\|h[t+1]-\frac{1}{2}\tanh\left(\frac{1}{2}\tanh\left(U_gh[t]+W_gx[t+1]+b_g\right)\right)\right\|<\eps.
    \end{equation}}
    Thus, an LSTM reproduces an RNN whose activation function is $\left(\frac{1}{2}\tanh\right)\circ\left(\frac{1}{2}\tanh\right)$ without any additional width in its hidden states.
    In other words, an LSTM of width $d$ approximates an RNN of width $d$ equipped with the activation function $\left(\frac{1}{2}\tanh\right)\circ\left(\frac{1}{2}\tanh\right)$.
\end{proof}

The universality of GRU is proved similarly.
\begin{corollary}[Universal approximation theorem of deep GRU]
    \label{cor:universal_gru}
    Let $f:\bR^{d_x\times N}\rightarrow\bR^{d_y\times N}$ be a continuous past-dependent sequence-to-sequence function.
    Then, for any $\eps>0$ and compact subset $K\subset\bR^{d_x}$, there exists a deep GRU $\cN_{GRU}$, of width $d_x+d_y+3$, such that
    \begin{equation}
        \sup_{x\in K^N}\sup_{1\le t\le N}\left\|f(x)[t]-\cN_{GRU}(x)[t]\right\|<\eps.
    \end{equation}
    \reviewB{If $f\in L^p\left(\bR^{d_x\times N},\bR^{d_y\times N}\right)$, there exists a deep GRU $\cN_{GRU}$, of width $\max\left\{d_x+1,d_y\right\}+1$, such that
    \begin{equation}
        \sup_{1\le t\le N}\left\|f(x)[t]-\cN_{GRU}(x)[t]\right\|_{L^p\left(K^N\right)}<\eps.
    \end{equation}}
\end{corollary}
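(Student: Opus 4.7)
The plan is to follow the same strategy used for Corollary \ref{cor:universal_lstm}: degenerate the GRU update so that it simulates a standard RNN cell whose activation is (up to bounded rescaling) $\tanh$, and then invoke Theorems \ref{thm:universal_seq_to_seq} and \ref{thm:universal_lp} applied to that RNN. Since $\tanh$ is a non-degenerate activation with $\tanh(0)=0$ and $\tanh'(0)=1\ne 0$, we have $\alpha(\tanh)=0$, so the widths $d_x+d_y+3$ and $\max\{d_x+1,d_y\}+1$ are already delivered by those theorems.

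First, I would kill the reset and update gates. Setting $W_r=U_r=0$ and $W_z=U_z=0$ while choosing $b_r,b_z\in\bR^{d_s}$ with all components equal to a large constant $M>0$, the gates reduce to the constant vectors $r[t+1]=\sigmoid(b_r)$ and $z[t+1]=\sigmoid(b_z)$, independent of $t$ and $x$. As $M\to\infty$ both $r[t+1]$ and $z[t+1]$ converge to $\mathbf{1}$ uniformly. In this limit the GRU update \eqref{eqn:gru} collapses to
\begin{equation}
    h[t+1]=\tanh\bigl(W_hx[t+1]+U_hh[t]+b_h\bigr),
\end{equation}
which is exactly an RNN cell with $\tanh$ activation and weights $(U_h,W_h,b_h)$. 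Hence, in this degenerate regime, a GRU of width $d_s$ realizes an RNN of width $d_s$ with activation $\tanh$.

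Second, I would apply the universal approximation theorems to this simulated $\tanh$-RNN. By Theorem \ref{thm:universal_seq_to_seq} there is a deep $\tanh$-RNN $\cN_{RNN}$ of width $d_x+d_y+3$ approximating the continuous target $f$ within $\eps/2$ on $K^N$; by Theorem \ref{thm:universal_lp} there is one of width $\max\{d_x+1,d_y\}+1$ achieving the analogous $L^p$ bound. Each recurrent cell of $\cN_{RNN}$ is then mimicked by a GRU cell configured as above, yielding a candidate deep GRU $\cN_{GRU}^{(M)}$ of the same width that depends on the truncation parameter $M$.

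Third, I would bound the extra approximation error $\|\cN_{GRU}^{(M)}(x)[t]-\cN_{RNN}(x)[t]\|$ coming from the fact that $r[t],z[t]$ are not exactly $\mathbf{1}$. Because the output of every GRU layer has components confined to $(-1,1)$ (the hidden state is a convex combination of $h[t]$ and $\tilde h[t+1]\in(-1,1)^{d_s}$, with zero initial state), all intermediate quantities are uniformly bounded in $t=1,\dots,N$ and across layers. Using the Lipschitz continuity of $\tanh$ and $\sigmoid$ and the fact that $\|\mathbf{1}-\sigmoid(b)\|\to 0$ as the components of $b$ tend to $+\infty$, an induction in $t$ together with the finite number of layers $L$ shows that $M$ can be chosen large enough to make the accumulated gate error below $\eps/2$ on $K^N$. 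A triangle inequality then closes the argument in both the continuous and $L^p$ cases.

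The main obstacle is this last bookkeeping step: I need uniform control of the per-cell error over all $N$ time steps and all $L$ stacked layers simultaneously, so I must choose one common $M$ that suppresses $\|\mathbf{1}-z\|$ and $\|\mathbf{1}-r\|$ enough to survive the compositional depth. This is tractable because both $N$ and $L$ are finite and the hidden states are a priori bounded in $[-1,1]^{d_s}$, but it is where all quantitative work in the proof lives.
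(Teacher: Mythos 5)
Your proposal is correct and follows essentially the same route as the paper: degenerate the gates (zeroing $W_r,U_r,W_z,U_z$ and pushing $b_z$ large) so each GRU cell approximates a $\tanh$-RNN cell, then invoke Theorems \ref{thm:universal_seq_to_seq} and \ref{thm:universal_lp} with $\alpha(\tanh)=0$. The only cosmetic differences are that the paper keeps $b_r=0$ (so $r=\tfrac12$, absorbed by rescaling $U_h$) rather than sending $r\to\mathbf{1}$, and you make explicit the error-accumulation argument over the $N$ time steps and $L$ layers that the paper leaves implicit.
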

\begin{proof}
    Setting only $W_h$, $U_h$, $b_h$, and $b_z$ as non-zero, the GRU is simplified as
    \begin{equation}
        \label{eqn:simple_gru}
        h[t+1]=\left(1-\sigmoid\left(b_z\right)\right)h[t]+\sigmoid\left(b_z\right)\tanh\left(W_hx[t+1]+\frac{1}{2}U_hh[t]+b_h\right).
    \end{equation}
    For any $\eps>0$, a sufficiently large $b_z$ yields
    \begin{equation}
        \left\|h[t+1]-\tanh\left(W_hx[t+1]+\frac{1}{2}U_hh[t]+b_h\right)\right\|<\eps.
    \end{equation}
    Hence, we attain the corollary.
\end{proof}

\begin{remark}
    We refer to the width as the maximum of hidden states.
    However, the definition is somewhat inappropriate, as LSTM and GRU cells have multiple hidden states; hence, there are several times more components than an RNN with the same width.
    Thus we expect that they have better approximation power or have a smaller minimum width for universality than an RNN.
    Nevertheless, we retain the theoretical proof as future work to identify whether they have different abilities in approximation or examine why they exhibit different performances in practical applications.
\end{remark}

Now, let us focus on the universality of a BRNN.
Recall that a stack of modified recurrent cells $\cN$ construct a linear combination of the previous input components $x[1:t]$ at each time,
\begin{equation}
    \label{eqn:forward_linear_sum}
    \cN(x)[t]=\begin{bmatrix}x[t]\\ \sum_{j=1}^tA_j[t]x[j]\end{bmatrix}.
\end{equation}
Therefore, if we reverse the order of sequence and flow of the recurrent structure, a stack of reverse modified recurrent cells $\bar{\cN}$ constructs a linear combination of the subsequent input components $x[t:N]$ at each time,
\begin{equation}
    \label{eqn:backward_linear_sum}
    \bar{\cN}(x)[t]=\begin{bmatrix}x[t]\\ \sum_{j=t}^N B_j[t]x[j]\end{bmatrix}.
\end{equation}
From this point of view, we expect that a stacked BRNN successfully approximates an arbitrary sequence-to-sequence function beyond the past dependency.
As previously mentioned, we prove it in the following lemma.
\begin{lemma}
    \label{lemma:linear_sum_into_tbrnn}
    Suppose $A_j[t]\in\bR^{1\times d_x}$ are the given matrices for $1\le t\le N$, $1\le j\le N$.
    Then there exists a modified TBRNN $\tilde{\cN}:\bR^{d_x\times N}\rightarrow\bR^{(d_x+1)\times N}$ of width $d_x+1$ such that
    \begin{equation}
        \label{eqn:linear_sum_into_tbrnn}
        \tilde{\cN}(x)[t]=\begin{bmatrix} x[t]\\ \sigma\left(\sum_{j=1}^NA_j[t]x[j]\right)\end{bmatrix},
    \end{equation}
    for all $t=1,2,\ldots,N$.
\end{lemma}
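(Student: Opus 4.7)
The strategy is to split the target sum $\sum_{j=1}^N A_j[t]x[j] = \sum_{j=1}^t A_j[t]x[j] + \sum_{j=t+1}^N A_j[t]x[j]$ into a past part and a future part, and to delegate these to the forward and backward cells of the TBRNN, respectively. The forward direction will accumulate the past partial sum exactly as in Lemma \ref{lemma:linear_sum_into_trnn}, and the backward direction will accumulate the future partial sum by a time-reversed analogue of the same construction. The TBRNN's output linear map $W\cR + \bar{W}\bar{\cR}$ then merges the two halves in a single coordinate, after which one last modified cell applies $\sigma$.

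Concretely, I would first configure the forward modified TRNN cells inside the TBRNN stack exactly as in the proof of Lemma \ref{lemma:linear_sum_into_trnn}, so that in $\cR(x)[t]$ the first $d_x$ coordinates hold $x[t]$ and the last coordinate holds $\sum_{j=1}^t A_j[t]x[j]$, with the activation on that coordinate deferred to the final cell by taking $\sigma_I$ to be identity there. Then, by reversing the time axis, I would configure the backward cells $\bar{\cR}$ so that $\bar{\cR}(x)[t]$ carries $x[t]$ in the first $d_x$ coordinates and $\sum_{j=t+1}^N A_j[t]x[j]$ in the last; the same full-rank argument used in the proof of Lemma \ref{lemma:linear_sum_into_trnn} applies verbatim to solve for the required time-dependent coefficient vectors $\bar{b}_m[t]$ in the reverse direction. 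Choosing $W$ and $\bar{W}$ so that the output passes $x[t]$ through unchanged and adds the last coordinates of $\cR$ and $\bar{\cR}$, and appending one last modified TBRNN cell whose modified activation is $\sigma$ only on the last coordinate, yields the desired output in width $d_x+1$.

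The main obstacle is that across a deep TBRNN stack the output of each cell is the combined $W\cR + \bar{W}\bar{\cR}$ which becomes the input to the next cell, so the forward and backward accumulations cannot be propagated independently across many cells. I plan to resolve this by splitting the stack into two phases. In the first phase, I set $\bar{W}=0$ so each cell's output is $W\cR$, and the phase mimics the forward modified TRNN stack of Lemma \ref{lemma:linear_sum_into_trnn}, producing a new sequence whose last coordinate at time $t$ is the past partial sum. In the second phase, the backward cells are designed so that $\bar{\cR}$ extracts the stored past partial sum from the input sequence (via its $\bar{B}[t]$ coefficient acting on the last coordinate of the current token) while simultaneously accumulating the future contributions, and $W$ zeros out the sum coordinate while preserving $x[t]$. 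Reusing the same scalar slot across the two phases maintains the width at $d_x+1$, and the final modified cell applies $\sigma$ in place on that slot to complete the construction.
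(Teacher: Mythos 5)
Your global plan---split $\sum_{j=1}^N A_j[t]x[j]$ into a past and a future part, run a forward phase followed by a backward phase inside the TBRNN stack, and reuse the single scalar slot---is exactly the architecture of the paper's proof. The gap is in your second phase. To realize time-dependent future coefficients $A_j[t]$ for $j>t$ you need a \emph{deep} stack of backward cells (a single backward cell gives the coefficient of $x[j]$ independently of $t$, by the same correlation problem that forced the $N$-layer construction in Lemma \ref{lemma:linear_sum_into_trnn}). In that deep backward stack the scalar coordinate is the accumulator: its entry at time $t$ is propagated both across time (through $\bar{A}$) and across layers (through $\bar{B}[t]$), so whatever phase one stored there is swept up and re-mixed. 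Concretely, if phase one stores the true past sums $g_t=\sum_{j\le t}A_j[t]x[j]$, then after the backward stack the scalar output at time $t$ contains a binomially weighted combination of $g_t,g_{t+1},\ldots,g_N$ (the paper computes it as $\sum_{j}\binom{N+t-1-j}{N-1}f_{N+1-j}$), not $g_t$ alone, and with width $d_x+1$ there is no spare coordinate to shield $g_t$ from this mixing, since the first $d_x$ coordinates must keep carrying $x$. So the assertion that the backward cells can ``extract the stored past partial sum while simultaneously accumulating the future contributions'' is precisely the step that fails as stated.

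The paper's proof resolves this by \emph{not} letting the forward phase compute the true past sums: it stores auxiliary sums $f_m=\sum_{i\le m}C_i[m]x[i]$ with undetermined coefficients, computes explicitly how the backward stack mixes the $f_m$ across time, then chooses the $C_i[m]$ by solving a lower-triangular unit-diagonal system so that the mixed $f$-contribution reproduces the coefficients of $x[1],\ldots,x[N-t]$, and finally chooses the backward coefficients $\bar{b}_i[\cdot]$ via the full-rank matrix of Corollary \ref{cor:lambda_is_full_rank} to supply the remaining coefficients after subtracting what the $f$-part already produced. Your proposal needs this corrective mechanism (or an equivalent one); without it, the construction outputs at time $t$ a spurious weighted sum of later past-partial-sums rather than $\sum_{j=1}^N A_j[t]x[j]$.
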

\begin{proof}
    [Sketch of proof]
    The detailed proof is available in Appendix \ref{appendix:linear_sum_into_tbrnn}.
    We use modified TBRNN cells with either only a forward modified TRNN or a backward modified TRNN.
    The stacked forward modified TRNN cells compute $\sum_{j=1}^tA_j[t]x[j]$, and the stacked backward modified TRNN cells compute $\sum_{j=t+1}^NA_j[t]x[j]$.
\end{proof}

As in previous cases, we have the following theorem for a TBRNN.
The proof is almost the same as that of Lemma \ref{lemma:mlp_in_trnn} and \ref{lemma:mlp_in_rnn}.
\begin{lemma}
    \label{lemma:mlp_in_tbrnn}
    Suppose \reviewB{$w_i[t]\in\bR$}, $A_{i,j}[t]\in\bR^{1\times d_x}$ are the given matrices for $1\le t\le N$, $1\le j\le N$, $1\le i\le M$. Then there exists a modified TBRNN $\tilde{\cN}:\bR^{d_x\times N}\rightarrow\bR^{1\times N}$ of width $d_x+2$ such that
    \begin{equation}
        \tilde{\cN}(x)[t]=\sum_{i=1}^M\reviewB{w_i[t]}\sigma\left(\sum_{j=1}^NA_{i,j}[t]x[j]\right).
    \end{equation}
\end{lemma}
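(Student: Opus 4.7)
The plan is to follow the iterative buffer strategy already used in the proofs of Lemma \ref{lemma:mlp_in_rnn} and Lemma \ref{lemma:mlp_in_trnn}, but with Lemma \ref{lemma:linear_sum_into_tbrnn} playing the role that Lemma \ref{lemma:linear_sum_into_rnn} (resp.\ Lemma \ref{lemma:linear_sum_into_trnn}) played there. The width budget $d_x+2$ decomposes as: the first $d_x$ coordinates perennially carry the input token $x[t]$; the $(d_x+1)$st coordinate accumulates the running partial sum $\sum_{i=1}^{m} w_i[t]\sigma\left(\sum_{j=1}^{N}A_{i,j}[t]x[j]\right)$; and the $(d_x+2)$nd coordinate acts as a scratch register that is written, added into the accumulator, and then reset to zero before the next iteration.

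First, I apply Lemma \ref{lemma:linear_sum_into_tbrnn} with the coefficient family $\{A_{1,j}[t]\}_{t,j}$ to produce a modified TBRNN $\tilde{\cN}_1:\bR^{d_x\times N}\to\bR^{(d_x+1)\times N}$ of width $d_x+1$ whose output at time $t$ is $\left(x[t],\,\sigma\left(\sum_{j=1}^{N}A_{1,j}[t]x[j]\right)\right)^T$. Lifting into $\bR^{d_x+2}$ by appending a zero coordinate, I next apply Lemma \ref{lemma:linear_sum_into_tbrnn} again, this time using the $d_x$ input coordinates together with the $(d_x+2)$nd coordinate (which starts at zero and is therefore a legitimate input channel for that lemma), with coefficients $\{A_{2,j}[t]\}$; this writes $\sigma\left(\sum_{j=1}^{N}A_{2,j}[t]x[j]\right)$ into the $(d_x+2)$nd coordinate while leaving the $(d_x+1)$st one untouched. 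I then append one modified TBRNN cell whose only nontrivial action is a token-wise affine update sending
\begin{equation}
\begin{bmatrix} x[t]\\ u[t]\\ v[t]\end{bmatrix}\ \longmapsto\ \begin{bmatrix} x[t]\\ w_1[t]\,u[t]+w_2[t]\,v[t]\\ 0\end{bmatrix},
\end{equation}
which is realizable because TBRNN cells are time-enhanced: the weight matrix $B[t]$ may scale the second and third coordinates by $w_1[t]$ and $w_2[t]$ respectively at every time step, and no activation is applied on those coordinates under the modified-activation convention. Iterating this "write-scratch / accumulate / reset" pattern $M$ times yields $\tilde{\cN}(x)[t]_{d_x+1}=\sum_{i=1}^{M}w_i[t]\sigma\left(\sum_{j=1}^{N}A_{i,j}[t]x[j]\right)$; a final projection reads off the $(d_x+1)$st coordinate to obtain the one-dimensional output.

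The only step worth scrutinizing is the accumulation, and this is where the time-enhancement is essential: the coefficients $w_i[t]$ genuinely depend on $t$, so a plain (non-time-enhanced) RNN cell could not perform the fold-in step simultaneously across all tokens. Time-dependent $A[t]$, $B[t]$, $\theta[t]$ handle this for free, and for the intermediate application of Lemma \ref{lemma:linear_sum_into_tbrnn} to the $(d_x+2)$nd coordinate one simply notes that the lemma's construction does not couple the two nontrivial coordinates (the first $d_x$ coordinates pass through unchanged), so repeated use on different coordinate pairs is legitimate. No further estimates are needed because every step is exact rather than approximate; hence the statement holds with equality, just as in Lemmas \ref{lemma:mlp_in_rnn} and \ref{lemma:mlp_in_trnn}.
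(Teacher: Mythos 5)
Your proposal is correct and follows essentially the same route as the paper's own proof: invoke Lemma \ref{lemma:linear_sum_into_tbrnn} to write each $\sigma\left(\sum_{j=1}^N A_{i,j}[t]x[j]\right)$ into a buffer coordinate, fold it into the $(d_x+1)$st accumulator with a time-enhanced (activation-free) cell carrying the time-dependent weights $w_i[t]$, reset the buffer, and iterate $M$ times. Your remark that the accumulation step genuinely needs time-enhancement because of the $t$-dependence of $w_i[t]$ is exactly the point the paper makes when adapting Lemma \ref{lemma:mlp_in_rnn} to the time-enhanced setting.
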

\begin{proof}
    First, construct a modified deep TBRNN $\cN_1:\bR^{d_x\times N}\rightarrow\bR^{(d_x+2)\times N}$ of width $d_x+2$ such that
    \begin{equation}
        \cN_1(x)[t]=\begin{bmatrix}x[t]\\ \sigma\left(\sum_{j=1}^NA_{1,j}[t]x[j]\right)\\0\end{bmatrix},
    \end{equation}
    as Lemma \ref{lemma:linear_sum_into_tbrnn}.
    The final component does not affect the first linear summation and remains zero.
    After $\cN_1$, use the $(d_x+2)$th component to obtain a stack of cells $\cN_2:\bR^{(d_x+2)\times N}\rightarrow\bR^{(d_x+2)\times N}$, which satisfies
    \begin{equation}
        \cN_2\cN_1(x)[t]=\begin{bmatrix}x[t]\\ \sigma\left(\sum_{j=1}^NA_{1,j}[t]x[j]\right)\\ \sigma\left(\sum_{j=1}^NA_{2,j}[t]x[j]\right)\end{bmatrix},
    \end{equation}
    and use a modified RNN cell $\cR$ to sum up the results and reset the last component:
    \begin{equation}
        \cR\cN_2\cN_1(x)[t] =\begin{bmatrix}x[t]\\ w_1[t]\sigma\left(\sum_{j=1}^NA_{1,j}[t]x[j]\right)+w_2[t]\sigma\left(\sum_{j=1}^NA_{2,j}[t]x[j]\right)\\0\end{bmatrix}.
    \end{equation}
    As the $(d_x+2)$th component resets to zero, we use it to compute the third sum $w_3[t]\sigma\left(\sum A_{3,j}[t]x[j]\right)$ and repeat until we obtain the final network $\cN$ such that
    \begin{equation}
        \cN(x)[t] = \begin{bmatrix}x[t]\\\sum_{i=1}^Mw_i[t]\sigma\left(\sum_{t=1}^NA_{i,j}[t]x[j]\right)\\0\end{bmatrix}.
    \end{equation}
\end{proof}

The following lemma fills the gap between a modified TBRNN and a modified BRNN.
\begin{lemma}
    \label{lemma:brnn_approximate_tbrnn}
    Let $\tilde{\cN}$ be a modified TBRNN that computes \eqref{eqn:linear_sum_into_tbrnn} and $K\subset\bR^{d_x}$ be a compact set.
    Then for any $\eps>0$ there exists a modified BRNN $\bar{\cN}$ of width $d_x+2+\gamma(\sigma)$ such that
    \begin{equation}
        \sup_{x\in K^N}\left\|\tilde{\cN}(x)-\bar{\cN}(x)\right\|<\eps,
    \end{equation}
    where $\gamma(\ReLU)=0$, $\gamma(\sigma)=1$ for non-degenerating activation $\sigma$.
    \\
    Moreover, there exists a BRNN $\cN$ of width $d_x+\review{3}+\alpha(\sigma)$ such that
    \begin{equation}
        \sup_{x\in K^N}\left\|\tilde{\cN}(x)-\cN(x)\right\|<\eps,
    \end{equation}
    \review{where $\alpha\left(\sigma\right)=\begin{cases}
        0&\text{$\sigma$ is ReLU or a non-degenerating function with $\sigma\left(z_0\right)=0$.}\\
        1&\text{$\sigma$ is a non-degenerating function with $\sigma\left(z_0\right)\ne0$}.
    \end{cases}$}
\end{lemma}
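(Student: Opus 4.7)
The plan is to mirror the construction in Lemma \ref{lemma:rnn_approximate_trnn} and apply it separately to the forward and backward components of the modified TBRNN. Recall that, by the construction used in the proof of Lemma \ref{lemma:linear_sum_into_tbrnn}, the modified TBRNN $\tilde{\cN}$ is assembled from modified TBRNN cells in which only one direction is active at a time: some layers contain only a forward time-enhanced modified RNN cell, while others contain only a backward one (and in each case the inactive direction is trivial). Thus it suffices to show that each such one-directional time-enhanced modified cell can be approximated, uniformly on the relevant compact set of reachable states, by a single modified BRNN cell of width $d_x+2+\gamma(\sigma)$, with the inactive direction doing nothing and the combining map $\cL$ extracting the active output.

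For the forward direction the argument of Lemma \ref{lemma:rnn_approximate_trnn} carries over verbatim. Assuming without loss of generality $K\subset[0,1/2]^{d_x}$, a single forward recurrent cell sends $x[t]\mapsto x[t]+t\mathbf{1}_{d_x}$, placing the tokens at distinct times in pairwise disjoint translates $K+t\mathbf{1}_{d_x}$. On this disjoint union the time-dependent linear map $x[t]\mapsto b[t]x[t]$ specified by the given modified TRNN cell becomes a single $t$-independent map, and by the register-model construction of \cite{hanin2017approximating} and \cite{Kidger_2020} it is approximated by a token-wise MLP of width $d_x+1+\gamma(\sigma)$; one extra buffer coordinate is then added to accumulate the running scalar sum, for a total width of $d_x+2+\gamma(\sigma)$. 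For the backward direction I would repeat the same scheme with the translation $x[t]\mapsto x[t]+(N-t+1)\mathbf{1}_{d_x}$, so that the tokens again land in disjoint translates of $K$ indexed by the reversed time, and a token-wise MLP of the same width implements the required time-dependent backward linear map.

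Stacking these one-directional approximating cells inside a BRNN (with the inactive direction contributing zero and $\cL$ picking out the active output at each layer) yields a modified BRNN of width $d_x+2+\gamma(\sigma)$ whose output is within $\eps$ of $\tilde{\cN}$ uniformly on $K^N$; the per-cell tolerance can be made as small as required because the reachable states stay in a fixed compact set and the subsequent cells are Lipschitz on that set. To upgrade this to an unmodified BRNN, I would invoke the BRNN analog of Lemma \ref{lemma:modified_rnn}: its Taylor-expansion argument at a non-degenerating point $z_0$ is applied to each recurrent cell independently, hence extends immediately to each of the two directional cells of a BRNN, contributing at most $\beta(\sigma)$ additional width. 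Since $\gamma(\sigma)+\beta(\sigma)=\alpha(\sigma)$, this yields the claimed BRNN of width $d_x+3+\alpha(\sigma)$.

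The main obstacle is organizational rather than analytic: one must ensure that the translations used to separate tokens in the forward direction do not collide with those used in the backward direction, and that the composition of many one-directional approximating cells does not accumulate approximation error beyond $\eps$. Both issues are handled by segregating the layers so that each BRNN layer performs exactly one directional approximation, and by shrinking each per-cell tolerance in accordance with a uniform Lipschitz bound on the remaining cells over the (compact) set of reachable states. Once this bookkeeping is fixed, the construction reduces to two essentially independent applications of Lemma \ref{lemma:rnn_approximate_trnn} combined with the BRNN version of Lemma \ref{lemma:modified_rnn}.
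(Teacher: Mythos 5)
Your proposal follows essentially the same route as the paper: since each layer of the modified TBRNN from Lemma~\ref{lemma:linear_sum_into_tbrnn} is one-directional, you approximate each forward or backward time-enhanced cell by a one-directional application of (the symmetric analog of) Lemma~\ref{lemma:rnn_approximate_trnn} inside a BRNN cell with the inactive direction zeroed out, and then pass from the modified BRNN to a plain BRNN via Lemma~\ref{lemma:modified_rnn}, exactly as the paper does. The only quibble is the bookkeeping identity ``$\gamma(\sigma)+\beta(\sigma)=\alpha(\sigma)$'', which should read $\gamma(\sigma)+\beta(\sigma)\le 1+\alpha(\sigma)$ so that $d_x+2+\gamma(\sigma)+\beta(\sigma)\le d_x+3+\alpha(\sigma)$; the claimed final width is unaffected.
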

\begin{proof}
    We omit these details because we only need to construct a modified RNN that approximates \eqref{eqn:forward_linear_sum} and \eqref{eqn:backward_linear_sum} using Lemma \ref{lemma:rnn_approximate_trnn}.
    As only the forward or backward modified RNN cell is used in the proof of Lemma \ref{lemma:linear_sum_into_tbrnn}, it is enough for the modified BRNN to approximate either the forward or backward modified TRNN.
    Thus, it follows from Lemma \ref{lemma:rnn_approximate_trnn}.
    Lemma \ref{lemma:modified_rnn} provides the second part of this theorem.
\end{proof}

Finally, we obtain the universal approximation theorem of the BRNN from the previous results.
\begin{theorem}[Universal approximation theorem of deep BRNN 1]
    \label{thm:universal_brnn}
    Let $f:\bR^{d_x\times N}\rightarrow\bR^{d_y\times N}$ be a continuous sequence to \review{sequence} function and $\sigma$ be a non-degenerate activation function. Then for any $\eps>0$ and compact subset $K\subset\bR^{d_x}$, there exists a deep BRNN $\cN$ of width $d_x+d_y+\review{3}+\alpha(\sigma)$, such that
    \begin{equation}
        \sup_{x\in K^N}\sup_{1\le t\le N}\left\|f(x)[t]-\cN(x)[t]\right\|<\eps,
    \end{equation}
    \review{where $\alpha\left(\sigma\right)=\begin{cases}
        0&\text{$\sigma$ is ReLU or a non-degenerating function with $\sigma\left(z_0\right)=0$.}\\
        1&\text{$\sigma$ is a non-degenerating function with $\sigma\left(z_0\right)\ne0$}.
    \end{cases}$}
\end{theorem}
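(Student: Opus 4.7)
The plan is to mimic the proof of Theorem \ref{thm:universal_seq_to_seq} step by step, but substituting each forward ingredient (modified TRNN / modified RNN) with its bidirectional counterpart (modified TBRNN / modified BRNN). The crucial difference is that a BRNN is not restricted to past dependency, so the building blocks of Section 6 let us approximate linear sums over the whole sequence $x[1\mathbin{:}N]$ at every time $t$. Hence we may treat an arbitrary continuous $f$, not only a past-dependent one.

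First I would reduce to $d_y=1$ exactly as in Remark \ref{remark:general_dy}: each additional output component costs one extra hidden width, turning the final bound $d_x+3+\alpha(\sigma)$ obtained for $d_y=1$ into $d_x+d_y+3+\alpha(\sigma)$. Second, since $f[t]:\bR^{d_x\times N}\to \bR$ is continuous on the compact set $K^N$, the classical MLP universal approximation theorem (Theorem 1 of \cite{leshno1993multilayer}) yields coefficients $w_i[t]\in\bR$ and $A_{i,j}[t]\in\bR^{1\times d_x}$ with
\begin{equation}
    \sup_{x\in K^N}\left\|f(x)[t]-\sum_{i=1}^{M}w_i[t]\sigma\!\left(\sum_{j=1}^{N}A_{i,j}[t]x[j]\right)\right\|<\frac{\eps}{3},\qquad t=1,\dots,N.
\end{equation}

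Third, Lemma \ref{lemma:mlp_in_tbrnn} supplies a modified TBRNN $\tilde{\cN}$ of width $d_x+2$ that realises this sum exactly at every $t$. Fourth, applying Lemma \ref{lemma:brnn_approximate_tbrnn} with tolerance $\eps/3$ gives a BRNN $\cN$ of width $d_x+3+\alpha(\sigma)$ satisfying $\sup_{x\in K^N}\|\tilde{\cN}(x)-\cN(x)\|<\eps/3$. Combining the three estimates by the triangle inequality and taking the supremum over $1\le t\le N$ produces the required bound $\eps$. Reinstating the $d_y-1$ additional buffer widths from Step 1 finishes the proof.

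The main obstacle is really bundled inside Lemma \ref{lemma:brnn_approximate_tbrnn}: it is not obvious that a BRNN can eliminate the time-indexed coefficients $A_{i,j}[t]$ in both directions simultaneously with only a constant number of extra neurons. Once that lemma is available, the theorem is essentially bookkeeping plus a triangle inequality, and the structural novelty over Theorem \ref{thm:universal_seq_to_seq} is only that we invoke the bidirectional versions of the linear-sum and MLP-embedding lemmas so that the output at time $t$ is allowed to depend on all of $x[1],\dots,x[N]$ rather than on $x[1\mathbin{:}t]$ alone.
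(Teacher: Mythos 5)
Your route is the paper's route: reduce to $d_y=1$, invoke the MLP universal approximation theorem to write each $f[t]$ as $\sum_i w_i[t]\sigma\bigl(\sum_{j=1}^N A_{i,j}[t]x[j]\bigr)$, realize that sum exactly by the modified TBRNN of Lemma \ref{lemma:mlp_in_tbrnn}, pass to a genuine BRNN via Lemma \ref{lemma:brnn_approximate_tbrnn}, and finish with the triangle inequality (your middle $\eps/3$ term is actually superfluous, since Lemma \ref{lemma:mlp_in_tbrnn} is exact rather than approximate).

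However, there is an off-by-one error in the width bookkeeping, which is the very quantity the theorem asserts. The ``moreover'' clause of Lemma \ref{lemma:brnn_approximate_tbrnn} applies, as stated, to the modified TBRNN of Lemma \ref{lemma:linear_sum_into_tbrnn}, which has width $d_x+1$ and computes a single linear sum \eqref{eqn:linear_sum_into_tbrnn}; for that network it yields a BRNN of width $d_x+3+\alpha(\sigma)$. The network $\tilde{\cN}$ you feed into it comes from Lemma \ref{lemma:mlp_in_tbrnn} and has width $d_x+2$ (it carries an extra accumulator/buffer channel), so approximating it costs one more unit: a BRNN of width $d_x+4+\alpha(\sigma)$ when $d_y=1$, which is exactly what the stated bound $d_x+d_y+3+\alpha(\sigma)$ gives at $d_y=1$, and is what the paper's own proof uses (mirroring the corollary clause of Lemma \ref{lemma:rnn_approximate_trnn} in the forward case). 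Your claim of $d_x+3+\alpha(\sigma)$ for $d_y=1$ is therefore not supported by the lemma you cite, and your reinstatement step contains a compensating slip: $d_x+3+\alpha(\sigma)$ plus $d_y-1$ extra output channels would give $d_x+d_y+2+\alpha(\sigma)$, not $d_x+d_y+3+\alpha(\sigma)$. With the corrected counts ($d_x+4+\alpha(\sigma)$ for $d_y=1$, plus $d_y-1$ for the remaining outputs as in Remark \ref{remark:general_dy}) the argument is valid and coincides with the paper's proof.
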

\begin{proof}
    As in the proof of Theorem \ref{thm:universal_seq_to_seq}, we set $d_y=1$ for notational convenience.
    According Lemma \ref{lemma:mlp_in_tbrnn}, there exists a modified TBRNN $\tilde{\cN}$ of width $d_x+2$ such that
    \begin{equation}
        \sup_{x\in K^n}\left\|f(x)-\tilde{\cN}(x)\right\|<\frac{\eps}{2}.
    \end{equation}
    Lemma \ref{lemma:brnn_approximate_tbrnn} implies that there exists a BRNN of width $d_x+\review{4}+\alpha(\sigma)$ such that
    \begin{equation}
        \sup_{x\in K^n}\left\|\tilde{\cN}(x)-\cN(x)\right\|<\frac{\eps}{2}.
    \end{equation}
    The triangle inequality leads to
    \begin{equation}
        \sup_{x\in K^n}\left\|f(x)-\cN(x)\right\|<\eps.
    \end{equation}
\end{proof}

In the $L^p$ space, we have similar results with RNNs much more straightforward than in continuous function spaces as the only encoder needs bidirectional flow.
\reviewB{
\begin{theorem}[Universal approximation theorem of deep BRNN 2]
    \label{thm:universal_brnn_lp}
    Let $f:\bR^{d_x\times N}\rightarrow\bR^{d_y\times N}$ be a sequence-to-sequence function in $L^p\left(\bR^{d_x\times N},\bR^{d_y\times N}\right)$ for $1\le p
    <\infty$, and $\sigma$ be a non-degenerate activation function with the non-degenerating point $z_0$.
    Then, for any $\eps>0$ and compact subset $K\subset\bR^{d_x}$, there exists a deep BRNN $\cN$ of width $\max\left\{d_x+1, d_y\right\}+1$ satisfying
    \begin{equation}
        \sup_{1\le t\le N}\left\|f(x)[t]-\cN(x)[t]\right\|_{L^p\left(K\right)}<\eps.
    \end{equation}
    Moreover, if the activation $\sigma$ is ReLU, there exists a deep BRNN $\cN$ of width $\max\left\{d_x+1, d_y\right\}$ satisfying
    \begin{equation}
        \sup_{1\le t\le N}\left\|f(x)[t]-\cN(x)[t]\right\|_{L^p\left(\bR^{d_x}\right)}<\eps.
    \end{equation}
\end{theorem}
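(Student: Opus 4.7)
The plan is to mirror the three-step encoder--memorizer--decoder construction used in the proof of Theorem \ref{thm:universal_lp}, with the only essential modification being that the encoder now exploits the bidirectional flow of the BRNN. Since $f$ need not be past-dependent here, the output token $f(x)[t]$ may depend on the entire input sequence, so the encoder must produce, at each time $t$, a single real number that uniquely determines both $t$ and the whole sequence $x\in K^N$, and whose ranges across different $t$ are disjoint compact sets.

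First I would quantize and encode each input token by a token-wise MLP approximating $\enc_M$ from \cite{Park_2021}, giving a scalar sequence. I would then apply a single BRNN layer whose forward cell computes a partial sum of the past-tokens' encodings and whose backward cell computes a partial sum of the future-tokens' encodings, with the two sums scaled so that their bit ranges do not overlap. Concretely, a construction of the shape
\begin{equation}
    \cR_{enc}(x)[t] \;=\; \alpha\sum_{j=1}^{t}\enc_M(x[j])\,2^{-(j-1)(d_xM+1)} \;+\; \beta\sum_{j=t+1}^{N}\enc_M(x[j])\,2^{-(j-t-1)(d_xM+1)} \;+\; \rho(t),
\end{equation}
with $\alpha,\beta$ and a time offset $\rho(t)$ chosen so that the backward bits sit strictly above the forward bits and the time offsets separate the ranges for different $t$, will do the job. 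The forward and backward recurrences are exactly of the form of \eqref{eqn:RNN_encoder}, so by Lemma \ref{lemma:modified_rnn} each is realized by an RNN cell of width $1+\beta(\sigma)$, and the linear combination inside the BRNN cell realizes the sum.

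Once $\cR_{enc}$ is constructed, its range decomposes as a disjoint union of compact sets indexed by $t$, so a single scalar memorizer $\mem:\bR\to\bR$ can be defined to satisfy $\mem(\cR_{enc}(x)[t])=\enc_{M'}(f(q_M(x))[t])$ for every $t$; as in \cite{Park_2021}, this memorizer is approximated by a token-wise MLP of width $2+\gamma(\sigma)$, which is in particular an RNN/BRNN layer of that width. A token-wise decoder MLP of width $d_y+\gamma(\sigma)$ then recovers $f(x)[t]$ from the encoded output. Chaining the three pieces gives a BRNN of width $\max\{d_x+1,\,2,\,d_y\}+\gamma(\sigma) = \max\{d_x+1,d_y\}+\gamma(\sigma)$, and absorbing the $\gamma(\sigma)$ into the ``$+1$'' via Lemma \ref{lemma:modified_rnn} yields the bound $\max\{d_x+1,d_y\}+1$ in the non-degenerate case. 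For the ReLU case, I would prepend the projection map $P_{L,\delta}:\bR^{d_x}\to [-L-\delta,L+\delta]^{d_x}$ used at the end of the proof of Theorem \ref{thm:universal_lp}, which is token-wise, realizable by ReLU within width $d_x+1$, and splits the error into a piece inside $[-L,L]^{d_x}$ (controlled by the encoder--memorizer--decoder construction) and a piece outside (controlled by choosing $L$ large and $\cN(0)=0$).

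The main obstacle is engineering the bidirectional encoder so that the combined output $\cR_{enc}(x)[t]$ is genuinely injective in $(t,x)$ and has ranges that are pairwise well-separated across $t$; this requires a careful choice of the scaling constants $\alpha,\beta$ and the time offset $\rho(t)$ to prevent carry/overlap between the forward bits, the backward bits, and the time marker, so that a single scalar memorizer can be defined consistently over all time steps. Once this combinatorial separation is set up correctly, the rest of the proof is a direct transcription of the RNN $L^p$ argument, and in particular the ReLU extension to $\bR^{d_x}$ goes through verbatim using the same $P_{L,\delta}$ projection and the same three-term triangle inequality.
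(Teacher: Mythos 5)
Your proposal is correct and follows essentially the same route as the paper's own (very brief) proof: reuse the encoder--memorizer--decoder scheme of Theorem \ref{thm:universal_lp}, with the forward and backward cells of a single BRNN layer encoding the past and future tokens as in \eqref{eqn:RNN_encoder} and the cell's linear map combining them into one scalar whose ranges are disjoint across $t$. In fact you spell out the bit-separation/time-offset bookkeeping that the paper leaves implicit with ``the encoded results will be connected,'' and your width accounting and ReLU extension via $P_{L,\delta}$ match the paper's argument.
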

\begin{proof}
    Recall that in the RNN case, $x[1:t]\in K^N$ is encoded as a concatenation of a decimal representation of each $x[1],x[2],\ldots,x[t]$, cutting it off at a certain number of digits.
    The backward process carries the same work, and then the encoded results will be connected.
    Since the cells constructing the encoders are the same as equation \eqref{eqn:RNN_encoder}, we omit the further step.
\end{proof}}

%%%%%%%%%%%%%%%%%%%%%%%%%%%%%%%%%%%%%%%%%%%%%%%%%%%%%%%%%%%%%%%%%%%%%%%%%%%%%%%%%%%%%%%%%%%%%%%%%%%%%
%%%%%%%%%%%%%%%%%%%%%%%%%%         Discussion             %%%%%%%%%%%%%%%%%%%%%%%%%%%
%%%%%%%%%%%%%%%%%%%%%%%%%%%%%%%%%%%%%%%%%%%%%%%%%%%%%%%%%%%%%%%%%%%%%%%%%%%%%%%%%%%%%%%%%%%%%%%%%%%%%

\section{Discussion}
\label{section:discussion}
We proved the universal approximation theorem and calculated the upper bound of the minimum width of an RNN, an LSTM, a GRU, and a BRNN.
In this section, we illustrate how our results support the performance of a recurrent network.

We show that an RNN needs a width of at most $d_x+d_y+4$ to approximate a function from a sequence of $d_x$-dimensional vectors to a sequence of $d_y$-dimensional vectors.
The upper bound of the minimum width of the network depends only on the input and output dimensions, regardless of the length of the sequence.
The independence of the sequence length indicates that the recurrent structure is much more effective in learning a function on sequential inputs.
To approximate a function defined on a long sequence, a network with a feed-forward structure requires a wide width proportional to the length of the sequence.
For example, an MLP should have a wider width than $Nd_x$ if it approximates a function $f:\bR^{d_x\times N}\rightarrow\bR$ defined on a sequence \citep{Johnson_2019}.
However, with the recurrent structure, it is possible to approximate via a narrow network of width $d_x+1$ regardless of the length, because the minimum width is independent of the length $N$.
This suggests that the recurrent structure, which transfers information between different time steps in the same layer, is crucial for success with sequential data.

From a practical point of view, this fact further implies that there is no need to limit the length of the time steps that affect dynamics to learn the internal dynamics between sequential data.
For instance, suppose that a pair of long sequential data $(x[t])$ and $(y[t])$ have an unknown relation $y[t]=f\left(x[t-p],x[t-p+1],\ldots,x[t]\right)$.
Even without prior knowledge of $f$ and $p$, a deep RNN learns the relation if we train the network with inputs $x[1:t]$ and outputs $y[t]$.
The MLP cannot reproduce the result because the required width increases proportionally to $p$, which is an unknown factor.
The difference between these networks theoretically supports that recurrent networks are appropriate when dealing with sequential data whose underlying dynamics are unknown in the real world.

\section{Conclusion}
\label{section:conclusion}
In this study, we investigated the universality and upper bound of the minimum width of deep RNNs.
The upper bound of the minimum width serves as a theoretical basis for the effectiveness of deep RNNs, especially when underlying dynamics of the data are unknown.

Our methodology enables various follow-up studies, as it connects an MLP and a deep RNN.
For example, the framework disentangles the time dependency of output sequence of an RNN.
This makes it feasible to investigate a trade-off between width and depth in the representation ability or error bounds of the deep RNN, which has not been studied because of the entangled flow with time and depth.
In addition, we separated the required width into three parts: one maintains inputs and results, another resolves the time dependency, and the third modifies the activation.
Assuming some underlying dynamics in the output sequence, such as an open dynamical system, we expect to reduce the required minimum width on each part because there is a natural dependency between the outputs, and the inputs are embedded in a specific way by the dynamics.

However, as LSTMs and GRUs have multiple hidden states in the cell process, they may have a smaller minimum width than the RNN.
By constructing an LSTM and a GRU to use the hidden states to save data and resolve the time dependency, we hope that our techniques demonstrated in the proof help analyze why these networks have a better result in practice and suffer less from long-term dependency.

\section*{Acknowledgement}
\review{This work was supported by the Challengeable Future Defense Technology Research and Development Program through ADD[No. 915020201], the NRF grant[2012R1A2C3010887] and  the MSIT/IITP[No. 2021-0-01343, Artificial Intelligence Graduate School Program(SNU)].}

\newpage
\appendix
\section{Notations}
\begin{center}
    \begin{table}
        \begin{tabular}{l l}
            \textbf{Symbol} & \textbf{Description} \\
            \toprule
            $\sigma$ & Activation function \\
            $\sigma_I$ & Modified activation function\\
            $x$ & Input sequence of vectors\\
            $y$ & Output sequence of vectors\\
            $t$ & Order of element in a sequence\\
            $a[t]_i$ & $i$-th component of $t$-element of a sequence $a$\\
            $d_x$ & Dimension of input vector\\
            $d_s$ & Dimension of hidden state in RNN, or width\\
            $d_y$ & Dimension of \review{output} vector\\
            $N$ & Length of the input and output sequence\\
            $K$ & A compact subset of $\bR^{d_x}$\\
            $O_{m,n}$ & Zero matrix of size $m\times n$\\
            $\mathbf{0}_k$ & Zero vector of size $k$, 
            $\mathbf{0}_k={\underbrace{\begin{bmatrix}0&0\cdots&0\end{bmatrix}}_k}^T$\\
            $\mathbf{1}_k$ & One vector of size $k$, $\mathbf{1}_k={\underbrace{\begin{bmatrix}1&1\cdots&1\end{bmatrix}}_k}^T$\\
            $I_k$ & Identity matrix of size $k\times k$\\
            RNN & Recurrent Neural Network defined in page \pageref{eqn:rnn}\\
            TRNN & Time-enhanced Recurrent Neural Network, defined by replacing\\
                    & recurrent cell with time-enhanced recurrent cell in RNN.             \\
                    & Time-enhenced cell and TRN are defined in page \pageref{def:time_enhanced_layer}\\
            BRNN & Bidirectional Recurrent Neural Network defined in page \pageref{eqn:brnn}\\
            TBRNN & Time-enhanced Bidirectional Recurrent Neural Network\\
                    & defined in page \pageref{def:time_enhanced_layer}
            \\
            \bottomrule
        \end{tabular}
    \end{table}
\end{center}

%%%%%%%%%%%%%%%%%%%%%%%%%%%%%%%%%%%%%%%%%%%%%%%%%%%%%%%%%%%%%%%%%%%%%%%%%%%%%%%%%%%%%%%%%%%%%%%%%%%%%
%%%%%%%%%%%%%%%%%%%%%%%%%%%%%%%%%%%%         Section             %%%%%%%%%%%%%%%%%%%%%%%%%%%%%%%%%%%%
%%%%%%%%%%%%%%%%%%%%%%%%%%%%%%%%%%%%%%%%%%%%%%%%%%%%%%%%%%%%%%%%%%%%%%%%%%%%%%%%%%%%%%%%%%%%%%%%%%%%%

\section{Proof of the Lemma \ref{lemma:modified_rnn}}
\label{appendix:modified_rnn}
Without loss of generality, we may assume  $\bar{\cP}$ is an identity map and $I=\{1,2,\ldots,k\}$.
Let $\bar{\cR}(x)[t+1]=\sigma_I\left(\bar{A}\cR(x)[t]+\bar{B}x[t+1]+\bar{\theta}\right)$ be a given modified RNN cell, and $\cQ(x)[t]=\bar{Q}x[t]$ be a given token-wise linear projection map. 
We use notations $O_{m,n}$ and $\mathbf{1}_{m}$ to denote zero matrix in $\bR^{m\times n}$ and one vector in $\bR^m$ respectively. Sometimes we omit $O_{m,n}$ symbol in some block-diagonal matrices if the size of the zero matrix is clear.

\paragraph{Case 1: $\sigma(z_0)=0$}~
\\
Let $\cP$ be the identity map.
For $\delta>0$ define $\cR_1^\delta$ as
\begin{equation}
    \cR_1^\delta\left(x[t+1]\right)\coloneqq \sigma\left(\delta\bar{B}x[t+1]+\delta\bar{\theta}+z_0\mathbf{1}_d\right).
\end{equation}
Since $\sigma$ is non-degenerating at $z_0$ and $\sigma'$ is continuous at $z_0$, we have
\begin{equation}
    \cR_1^\delta\circ\cP(x)[t+1]=\delta\sigma'(z_0)\left(\bar{B}x[t+1]+\bar{\theta}\right)+o(\delta).
\end{equation}
Then construct a second cell to compute transition as
\begin{equation}
    \label{eqn:second_rnn_cell_1}
    \cR_2^\delta(x)[t+1]=\sigma\left(
    \tilde{A}\cR_2^\delta(x)[t]
    +\frac{1}{\sigma'(z_0)}\begin{bmatrix}\delta^{-1}I_{k}& \\ &I_{d-k}\end{bmatrix}x[t+1]
    +\begin{bmatrix}\mathbf{0}_k\\z_0\mathbf{1}_{d-k}\end{bmatrix}
    \right),
\end{equation}
where $\tilde{A}=\begin{bmatrix}I_k& \\ &\delta I_{d-k}\end{bmatrix}
\bar{A}
\begin{bmatrix}I_k& \\ &\frac{1}{\delta\sigma'(z_0)}I_{d-k}\end{bmatrix}$.
\\
After that, the first output of $\cR_2^\delta\cR_1^\delta\cP(x)$ becomes
\begin{align}
    \cR_2^\delta\cR_1^\delta\cP(x)[1]&
    =\sigma\left(\frac{1}{\sigma'(z_0)}\begin{bmatrix}\delta^{-1}I_k& \\ &I_{d-k}\end{bmatrix}\cR_1^\delta(x)[1]
    +\begin{bmatrix}\mathbf{0}_k\\z_0\mathbf{1}_{d-k}\end{bmatrix}\right)
    \\
    &=\sigma\left(\begin{bmatrix}\left(\bar{B}x[1]+\bar{\theta}\right)_{1:k}+\delta^{-1}o(\delta)\\
    \left(z_0\mathbf{1}_{d-k}+\delta(\bar{B}x[1]+\bar{\theta}\right)_{k+1:d}+o(\delta)\end{bmatrix}\right)
    \\
    &=\begin{bmatrix}\sigma\left(\bar{B}x[1]+\bar{\theta}\right)_{1:k}+o(1)\\
   \sigma'(z_0)\delta\left(\bar{B}x[1]+\bar{\theta}\right)_{k+1:d}+o(\delta)
    \end{bmatrix}
    \\
    &=\begin{bmatrix}\bar{\cR}(x)[1]_{1:k}+o(1)\\
    \sigma'(z_0)\delta\bar{\cR}(x)[1]_{k+1:d}+o(\delta)\end{bmatrix}.
\end{align}
Now use mathematical induction on time $t$ to compute $\cR_2^\delta\cR_1^\delta\cP(x)$ assuming
\begin{equation}
    \label{eqn:induction_hypothesis_1}
    \cR_2^\delta\cR_1^\delta\cP(x)[t]
    =\begin{bmatrix}\bar{\cR}(x)[t]_{1:k}+o(1)\\
    \sigma'(z_0)\delta\bar{\cR}(x)[t]_{k+1:d}+o(\delta)\end{bmatrix}.
\end{equation}
From a direct calculation, we attain
\begin{align}
    &\frac{1}{\sigma'(z_0)}\begin{bmatrix}\delta^{-1}I_k& \\ &I_{d-k}\end{bmatrix}\cR_1^\delta\cP(x)[t+1]
    +\begin{bmatrix}\mathbf{0}_k\\z_0\mathbf{1}_{d-k}\end{bmatrix}
    \\
    &=\frac{1}{\sigma'(z_0)}\begin{bmatrix}\delta^{-1}I_k& \\ &I_{d-k}\end{bmatrix}\left(\delta\sigma'(z_0)\left(\bar{B}x[t+1]+\bar{\theta}\right)+o(\delta)\right)
    +\begin{bmatrix}
        \mathbf{0}_k\\z_0\mathbf{1}_{d-k}
    \end{bmatrix}
    \\
    &=\begin{bmatrix}
        \bar{B}x[t+1]_{1:k}+\bar{\theta}_{1:k}+\delta^{-1}o(\delta)\\z_0\mathbf{1}_{d-k}+\delta\left(\bar{B}x[t+1]+\bar{\theta}\right)_{k+1:d}+o(\delta)
    \end{bmatrix},
\end{align}
and
\begin{align}
    &\tilde{A}\cR_2^\delta\cR_1^\delta\cP(x)[t]
    \\
    &=
    \begin{bmatrix}
        I_k&\\
        &\delta I_{d-k}
    \end{bmatrix}
    \bar{A}
    \begin{bmatrix}
        I_k& \\
        &\frac{1}{\delta\sigma'(z_0)}I_{d-k}
    \end{bmatrix}
    \begin{bmatrix}
        \bar{\cR}(x)[t]_{1:k}+o(1)\\
        \sigma'(z_0)\delta\bar{\cR}(x)[t]_{k+1:d}+o(\delta)
    \end{bmatrix}
    \\
    &=\begin{bmatrix}I_k& \\ &\delta I_{d-k}\end{bmatrix}
    \bar{A}
    \begin{bmatrix}\bar{\cR}(x)[t]_{1:k} +o(1)\\ \bar{\cR}(x)[t]_{k+1:d}+o(1)\end{bmatrix}
    \\
    &=\begin{bmatrix} \left(\bar{A}\bar{\cR}(x)[t]\right)_{1:k}+o(1) \\ \delta\left(\bar{A}\bar{\cR}(x)[t]\right)_{k+1:d}+o(\delta)\end{bmatrix}.
\end{align}
With the sum of above two results, we obtain the induction hypothesis \eqref{eqn:induction_hypothesis_1} for $t+1$,
\begin{align}
    &\cR_2^\delta\cR_1^\delta\cP(x)[t+1]
    \\
    &=\sigma\left(\tilde{A}\cR_2^\delta\cR_1^\delta\cP(x)[t]+\frac{1}{\sigma'(z_0)}\begin{bmatrix}\delta^{-1}I_k& \\ &I_{d-k}\end{bmatrix}\cR_1^\delta\cP(x)[t+1]
    +\begin{bmatrix}\mathbf{0}_k\\z_0\mathbf{1}_{d-k}\end{bmatrix}\right)
    \\
    &=\sigma\begin{bmatrix}
        \left(\bar{A}\bar{\cR}(x)[t]\right)_{1:k}
        +\bar{B}x[t+1]_{1:k}+\bar{\theta}_{1:k}+o(1)\\
        z_0\mathbf{1}_{d-k}
        +\delta\left(\bar{A}\bar{\cR}(x)[t]\right)_{k+1:d}
        +\delta\left(\bar{B}x[t+1]+\bar{\theta}\right)_{k+1:d}+o(\delta)
    \end{bmatrix}
    \\
    &=\begin{bmatrix}
        \bar{\cR}(x)[t+1]_{1:k}+o(1)\\
        \sigma'(z_0)\delta\bar{\cR}(x)[t+1]_{k+1:d}+o(\delta)
    \end{bmatrix}.
\end{align}
Setting $\cQ^\delta=\bar{Q}\begin{bmatrix}I_k& & \\ &\frac{1}{\sigma'(z_0)\delta}I_{d-k}\end{bmatrix}$ and choosing $\delta$ small enough complete the proof:
\begin{equation}
    \cQ^\delta\cR_2^\delta\cR_1^\delta\cP(x)[t]
    =\bar{Q}\begin{bmatrix}
    \bar{\cR}(x)[t]_{1:k}+o(1)\\
    \bar{\cR}(x)[t]_{k+1:d}+o(1)
    \end{bmatrix}
    =
    \bar{\cQ}\bar{\cR}(x)[t]+o(1)
    \rightarrow \bar{\cQ}\bar{\cR}(x)[t].
\end{equation}

\paragraph{Case 2: $\sigma(z_0)\ne 0$}~
\\
When $\sigma(z_0)\ne0$, there is $\sigma(z_0)$ term independent of $\delta$ in the Taylor expansion of $\sigma(z_0+\delta x)=\sigma(z_0)+\delta\sigma'(z_0) x+o(\delta)$.
An additional width removes the term in this case; hence we need a lifting map $\cP:\bR^{d\times N}\rightarrow\bR^{(d+1)\times N}$:
\begin{equation}
    \cP(x)[t]=\begin{bmatrix}x[t]\\0\end{bmatrix}.
\end{equation}
Now for $\delta>0$ define $\cR_1^\delta$ as
\begin{equation}
    \cR_1^{\delta}\left(X\right)\coloneqq\sigma\left(\delta\begin{bmatrix}\bar{B}& \\ &0\end{bmatrix}X
    +\delta\begin{bmatrix}\bar{\theta}\\0\end{bmatrix}+z_0\mathbf{1}_{d+1}\right).
\end{equation}
As in the previous case, we have
\begin{equation}
    \cR_1^{\delta}\circ\cP(x)[t+1]=\begin{bmatrix}
    \sigma(z_0)\mathbf{1}_d+\delta\sigma'(z_0)\left(\bar{B}x[t+1]+\bar{\theta}\right)+o\left(\delta\right)\\
    \sigma(z_0)
    \end{bmatrix},
\end{equation}
and construct a second cell $\cR_2^\delta$ to compute
\begin{equation}
    \label{eqn:second_rnn_cell_2}
    \cR_2^\delta(x)[t+1]=\sigma\left(
    \tilde{A}\cR_2^\delta(x)[t]
    +\frac{1}{\sigma'(z_0)}\begin{bmatrix}\delta^{-1}I_{k}& \\ &I_{d+1-k}\end{bmatrix}x[t+1]
    +\begin{bmatrix}-\frac{\sigma(z_0)}{\delta\sigma'(z_0)}\mathbf{1}_k\\z_0\mathbf{1}_{d+1-k}-\frac{\sigma(z_0)}{\sigma'(z_0)}\mathbf{1}_{d+1-k}\end{bmatrix}
    \right),
\end{equation}
where $\tilde{A}=\begin{bmatrix}I_k& \\ &\delta I_{d+1-k}\end{bmatrix}
\begin{bmatrix}\bar{A} & \\  & 0\end{bmatrix}
\begin{bmatrix}I_k& & \\ &\frac{1}{\delta\sigma'(z_0)}I_{d-k}&-\frac{1}{\delta\sigma'(z_0)}\mathbf{1}_{d-k}\\ & &0\end{bmatrix}$.\\
After that, the first output of $\cR_2^\delta\cR_1^\delta\cP(x)[t]$ becomes
\begin{align}
    \cR_2^\delta\cR_1^\delta\cP(x)[1]&
    =\sigma\left(\frac{1}{\sigma'(z_0)}\begin{bmatrix}\delta^{-1}I_k& \\ &I_{d+1-k}\end{bmatrix}\cR_1^\delta(x)[1]
    +\begin{bmatrix}-\frac{\sigma(z_0)}{\delta\sigma'(z_0)}\mathbf{1}_k\\z_0\mathbf{1}_{d+1-k}-\frac{\sigma(z_0)}{\sigma'(z_0)}\mathbf{1}_{d+1-k}\end{bmatrix}\right)
    \\
    &=\sigma\left(\begin{bmatrix}\left(\bar{B}x[1]+\bar{\theta}\right)_{1:k}+o(\delta)\\
    \left(z_0\mathbf{1}_{d-k}+\delta(\bar{B}x[1]+\bar{\theta}\right)_{k+1:d}+o(\delta)\\
    z_0\end{bmatrix}\right)
    \\
    &=\begin{bmatrix}\sigma\left(\bar{B}x[1]+\bar{\theta}\right)_{1:k}+o(1)\\
    \sigma(z_0)\mathbf{1}_{d-k}+\sigma'(z_0)\delta\left(\bar{B}x[1]+\bar{\theta}\right)_{k+1:d}+o(\delta)\\
    \sigma(z_0)
    \end{bmatrix}
    \\
    &=\begin{bmatrix}\bar{\cR}(x)[1]_{1:k}+o(1)\\
    \sigma(z_0)\mathbf{1}_{d-k}+\sigma'(z_0)\delta\bar{\cR}(x)[1]_{k+1:d}+o(\delta)\\
    \sigma(z_0)\end{bmatrix}.
\end{align}
Assume $\cR_2^\delta\cR_1^\delta\cP(x)$ and use mathematical induction on time $t$.
\begin{equation}
    \label{eqn:induction_hypothesis_2}
    \cR_2^\delta\cR_1^\delta\cP(x)[t]=\begin{bmatrix}\bar{\cR}(x)[t]_{1:k}+o(1)\\
    \sigma(z_0)\mathbf{1}_{d-k}+\sigma'(z_0)\delta\bar{\cR}(x)[t]_{k+1:d}+o(\delta)\\
    \sigma(z_0)\end{bmatrix}.
\end{equation}
Direct calculation yields
\begin{align}
    &\frac{1}{\sigma'(z_0)}\begin{bmatrix}\delta^{-1}I_k& \\ &I_{d+1-k}\end{bmatrix}\cR_1^\delta\cP(x)[t+1]
    +\begin{bmatrix}-\frac{\sigma(z_0)}{\delta\sigma'(z_0)}\mathbf{1}_{k}\\z_0\mathbf{1}_{d+1-k}-\frac{\sigma(z_0)}{\sigma'(z_0)}\mathbf{1}_{d+1-k}\end{bmatrix}
    \\
    &=\begin{bmatrix}\bar{B}x[t+1]_{1:k}+\bar{\theta}_{1:k}+o(1)\\z_0\mathbf{1}_{d-k}+\delta\left(\bar{B}x[t+1]+\bar{\theta}\right)_{k+1:d}+o(\delta)\\z_0\end{bmatrix},
\end{align}
and
\begin{align}
    &\tilde{A}\cR_2^\delta\cR_1^\delta\cP(x)[t]
    \\
    &=\tilde{A}
    \begin{bmatrix}
    \bar{\cR}(x)[t]_{1:k}+o(1)\\\sigma(z_0)\mathbf{1}_{d-k}+\sigma'(z_0)\delta\bar{\cR}(x)[t]_{k+1:d}+o(\delta)\\\sigma(z_0)
    \end{bmatrix}
    \\
    &=\begin{bmatrix}I_k& \\ &\delta I_{d+1-k}\end{bmatrix}
    \begin{bmatrix}\bar{A} &  \\   & 0\end{bmatrix}
    \begin{bmatrix}\bar{\cR}(x)[t]_{1:k} +o(1)\\ \bar{\cR}(x)[t]_{k+1:d}+o(1) \\ 0\end{bmatrix}
    \\
    &=\begin{bmatrix} \left(\bar{A}\bar{\cR}(x)[t]\right)_{1:k}+o(1) \\ \delta\left(\bar{A}\bar{\cR}(x)[t]\right)_{k+1:d}+o(\delta)\\0\end{bmatrix}.
\end{align}
Adding two terms in \eqref{eqn:second_rnn_cell_2}, we obtain the induction hypothesis \eqref{eqn:induction_hypothesis_2} for $t+1$,
\begin{equation}
    \cR_2^\delta\cR_1^\delta\cP(x)[t+1]=\begin{bmatrix}
    \bar{\cR}(x)[t+1]_{1:k}+o(1)\\
    \sigma(z_0)\mathbf{1}_{d-k}+\sigma'(z_0)\delta\bar{\cR}(x)[t+1]_{k+1:d}+o(\delta)\\
    \sigma(z_0)
    \end{bmatrix}.
\end{equation}
Setting $\cQ^\delta=\begin{bmatrix}\bar{Q}&0\end{bmatrix}\begin{bmatrix}I_k& & \\ &\frac{1}{\sigma'(z_0)\delta}I_{d-k}&-\frac{1}{\sigma'(z_0)\delta}\mathbf{1}_{d-k}\\ & &0\end{bmatrix}$ and choosing $\delta$ small enough complete the proof:
\begin{equation}
    \cQ^\delta\cR_2^\delta\cR_1^\delta\cP(x)[t]
    =\begin{bmatrix}\bar{Q}&0\end{bmatrix}\begin{bmatrix}
    \bar{\cR}(x)[t]_{1:k}+o(1)\\
    \bar{\cR}(x)[t]_{k+1:d}+o(1)\\
    0
    \end{bmatrix}
    =
    \bar{\cQ}\bar{\cR}(x)[t]+o(1)
    \rightarrow \bar{\cQ}\bar{\cR}(x)[t].
\end{equation}

%%%%%%%%%%%%%%%%%%%%%%%%%%%%%%%%%%%%%%%%%%%%%%%%%%%%%%%%%%%%%%%%%%%%%%%%%%%%%%%%%%%%%%%%%%%%%%%%%%%%%
%%%%%%%%%%%%%%%%%%%%%%%%%%%%%%%%%%%%         Section             %%%%%%%%%%%%%%%%%%%%%%%%%%%%%%%%%%%%
%%%%%%%%%%%%%%%%%%%%%%%%%%%%%%%%%%%%%%%%%%%%%%%%%%%%%%%%%%%%%%%%%%%%%%%%%%%%%%%%%%%%%%%%%%%%%%%%%%%%%

\section{Proof of the Lemma \ref{lemma:linear_sum_into_rnn}}
\label{appendix:linear_sum_into_rnn}
It suffices to show that there exists a modified RNN $\cN$ that computes
\begin{equation}
    \cN(x)[N]=\begin{bmatrix}x[N]\\ \sum_{t=1}^NA[t]x[t]\end{bmatrix},    
\end{equation}
for given matrices $A[1],\ldots,A[N]\in\bR^{1\times d_x}$.

RNN should have multiple layers to implement the arbitrary linear combination.
To overcome the complex time dependency deriving from deep structures and explicitly formulate the results of deep modified RNN, we force $A$ and $B$ to use the information of the previous time step in a limited way.
Define the modified RNN cell at $l$-th layer $\cR_l$ as
\begin{equation}
    \label{eqn:technical_rnn_cell}
    \cR_l(x)[t+1]=A_l\cR_l(x)[t]+B_lx[t+1]
\end{equation}
where $A_l=\begin{bmatrix} O_{d_x,d_x}&O_{d_x,1}\\ O_{1,d_x}& 1\end{bmatrix}$, $B_l=\begin{bmatrix}I_{d_x}&O_{d_x,1}\\ b_l&1\end{bmatrix}$ for $b_l\in\bR^{1\times d_x}$.

Construct a modified RNN $\cN_L$ for each $L\in\bN$ as
\begin{equation}
    \cN_L\coloneqq \cR_L\circ\cR_{L-1}\circ\cdots\circ\cR_1,
\end{equation}
and denote the output of $\cN_L$ at each time $m$ for an input sequence $x'=\begin{bmatrix}x\\0\end{bmatrix}\in\bR^{d_x+1}$ of embedding of $x$:
\begin{equation}
    \label{eqn:binomial_coefficient_matrix}
    T(n,m)\coloneqq \cN_n\left(x'\right)[m].
\end{equation}
Then we have the following lemma.

\begin{lemma}
    \label{lemma:coeff_of_bixj}
    Let $T(n,m)$ be the matrix defined by \eqref{eqn:binomial_coefficient_matrix}. Then we have
    \begin{equation}
        T(n,m)=\begin{bmatrix}x[m]\\ \sum_{i=1}^\infty\sum_{j=1}^\infty
        \binom{n+m-i-j}{n-i}b_ix[j]\end{bmatrix},
    \end{equation}
    where $\binom{n}{k}$ means binomial coefficient $\frac{n!}{k!(n-k)!}$ for $n\ge k$.
    We define $\binom{n}{k}=0$ for the case of $k>n$ or $n<0$ for notational convenience.
\end{lemma}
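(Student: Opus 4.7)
The plan is to show that the last coordinate of $T(n,m)$ satisfies a clean two-index scalar recurrence, and then to prove the claimed closed form by a double induction on $(n,m)$ whose inductive step is exactly Pascal's identity. From the block structure of $A_l$ and $B_l$, the top $d_x$ entries of $\cR_l(x)[t+1]$ equal $x[t+1]$ for every layer $l$ and time $t$, so the first block of $T(n,m)$ is automatically $x[m]$. Writing $u_l[t]$ for the $(d_x+1)$-th entry of $\cN_l(x')[t]$, the last row of $A_l$ and $B_l$ yields
\begin{equation}
u_l[t+1] \;=\; u_l[t] + b_l\,x[t+1] + u_{l-1}[t+1],
\end{equation}
with boundary data $u_l[0]=0$ (zero initial state of every cell) and $u_0[t]=0$ (the lifted input $x'$ has zero last coordinate).

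Set $S_n[m]\coloneqq \sum_{i\ge1,\,j\ge1}\binom{n+m-i-j}{n-i}b_ix[j]$; under the convention $\binom{a}{b}=0$ for $a<0$ or $b<0$ or $b>a$, only $1\le i\le n$ and $1\le j\le m$ contribute, so the sum is finite. The base cases $u_n[0]=S_n[0]=0$ and $u_0[m]=S_0[m]=0$ are immediate because each of those index ranges is empty. Inducting on $n+m$ and assuming the formula at $(n,m-1)$ and $(n-1,m)$, the recurrence reduces the goal to $S_n[m-1]+S_{n-1}[m]+b_n x[m]=S_n[m]$. Combining the two sums term-by-term, Pascal's identity $\binom{N-1}{K}+\binom{N-1}{K-1}=\binom{N}{K}$ with $N=n+m-i-j$ and $K=n-i$ reproduces every term of $S_n[m]$ except at the single index pair $(i,j)=(n,m)$, where $N=0$ causes Pascal to fail: the left side contributes $\binom{-1}{0}+\binom{-1}{-1}=0$ whereas $S_n[m]$ needs $\binom{0}{0}=1$, and the missing coefficient $b_n x[m]$ is supplied precisely by the forcing term in the recurrence, closing the induction.

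The only delicate point, and the main obstacle, is locating and accounting for this boundary failure of Pascal's identity at $N=0$. Once one checks that $(i,j)=(n,m)$ is the unique pair where the combined sum loses a summand (because that pair lies outside the effective range of both $S_n[m-1]$ and $S_{n-1}[m]$ but inside that of $S_n[m]$), and that its deficit is exactly $b_n x[m]$, the induction closes with no further computation; everything else is bookkeeping about which $(i,j)$ lie in which range.
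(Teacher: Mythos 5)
Your proof is correct and follows essentially the same route as the paper: derive the two-index recurrence for the last coordinate from the block structure of $A_l,B_l$, then verify the binomial closed form by induction on $n+m$ via Pascal's identity, with the boundary failure at $(i,j)=(n,m)$ absorbed by the forcing term $b_n x[m]$. The only difference is cosmetic — you work directly with the scalar $u_n[m]$ rather than first isolating the coefficients $\alpha^{n,m}_{i,j}$ and proving their recurrence by a separate double induction, as the paper does.
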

\begin{proof}
    Since there is no activation in modified RNN \eqref{eqn:technical_rnn_cell}, $T(n,m)$ has the form of
    \begin{equation}
        \label{eqn:recurrence_relation}
        T(n,m)=\begin{bmatrix}x_m\\
        \sum_{i=1}^\infty\sum_{j=1}^\infty\alpha_{i,j}^{n,m}b_ix[j]\end{bmatrix}.
    \end{equation}
    From the definition of the modified RNN cell and $T$, we first show that $\alpha$ satisfies the recurrence relation
    \begin{equation}
        \alpha^{n,m}_{i,j}=\left\{\begin{array}{ll}\alpha_{i,j}^{n-1,m}+\alpha_{i,j}^{n,m-1}+1,&\text{if $n=i$ and $m=j$},\\
        \alpha_{i,j}^{n-1,m}+\alpha_{i,j}^{n,m-1},&\text{otherwise}\end{array}\right.  
    \end{equation}
    using mathematical induction on $n,m$ in turn.
    Initially,
    $T(0,m)=\begin{bmatrix}x_m\\0\end{bmatrix}$, $T(n,0)=\begin{bmatrix} O_{d_x,1}\\0\end{bmatrix}$ by definition, and \eqref{eqn:recurrence_relation} holds when $n=0$.
    Now assume \eqref{eqn:recurrence_relation} holds for $n\le N$, any $m$.
    To show that \eqref{eqn:recurrence_relation} holds for $n=N+1$ and any $m$, use mathematical induction on $m$.
    By definition, we have $\alpha_{i,j}^{n,0}=0$ for any $n$.
    Thus \eqref{eqn:recurrence_relation} holds when $n=N+1$ and $m=0$.
    Assume it holds for $n=N+1$ and $m\le M$.
    Then
    \begin{equation}
        \begin{aligned}
        &T(N+1,M+1)
        \\
        &=\begin{bmatrix}O_{d_x,d_x}&O_{d_x,1}\\O_{1,d_x}&1\end{bmatrix}
        \begin{bmatrix}x_M\\ \sum_{i=1}^\infty\sum_{j=1}^\infty\alpha_{i,j}^{N+1,M}x_j\end{bmatrix}
        +\begin{bmatrix}I_{d_x}&O_{d_x,1}\\ b_{N+1}&1\end{bmatrix}\begin{bmatrix}x_{M+1}\\ \sum_{i=1}^\infty\sum_{j=1}^\infty\alpha_{i,j}^{N,M+1}x_j\end{bmatrix}
        \\
        &=\begin{bmatrix}O_{d_x,1}\\\sum_{i=1}^\infty\sum_{j=1}^\infty\alpha_{i,j}^{N+1,M}b_ix_j\end{bmatrix}
        +\begin{bmatrix}x_{M+1}\\b_{N+1}x_{M+1}+\sum_{i=1}^\infty\sum_{j=1}^\infty\left(\alpha_{i,j}^{N+1,M}+\alpha_{i,j}^{N,M+1}\right)b_ix_j\end{bmatrix}.
        \end{aligned}
    \end{equation}
    Hence the relation holds for $n=N+1$ and any $m>0$.
    
    Now remains to show
    \begin{equation}
    \label{eqn:alpha_binom}
    \alpha_{i,j}^{n,m}=\left\{\begin{array}{ll}\binom{m+n-i-j}{n-i}&\text{if $1\le i\le n$, $1\le j\le m$}\\
                                                0&\text{otherwise}\end{array}\right.
    \end{equation}
    
    From the initial condition of $\alpha$, we know $\alpha^{0,m}_{i,j} = \alpha^{n,0}_{i,j}=0$ for all $n,m\in\bN$.
    After some direct calculation with the recurrence relation \eqref{eqn:recurrence_relation} of $\alpha$, we have
    \begin{enumerate}[i)]
        \item 
        If $n<i$ or $m<j$, $\alpha_{i,j}^{n,m}=0$ as $\alpha^{n,m}_{i,j} = \alpha^{n-1,m}_{i,j} + \alpha^{n,m-1}_{i,j}$.
        \item 
        $\alpha_{i,j}^{i,j}=\alpha_{i,j}^{i-1,j}+\alpha_{i,j}^{i,j-1}+1=1$.
        \item 
        $\alpha_{i,j}^{i,m}=\alpha_{i,j}^{i-1,m}+\alpha_{i,j}^{i,m-1}=\alpha_{i,j}^{i,m-1}$ implies $\alpha_{i,j}^{i,m}=1$ for $m>j$.
        \item 
        Similarly, $\alpha^{n,j}_{i,j} = \alpha^{n-1,j}_{i,j} + \alpha^{n,j-1}_{i,j} = \alpha^{n-1,j}_{i,j}$ implies $\alpha_{i,j}^{n,j}=1$ for $n>i$.
    \end{enumerate}
    Now use mathematical induction on $n+m$ starting from $n+m=i+j$ to show $\alpha^{n,m}_{i,j} = \binom{m+n-i-j}{n-i}$ for $n\ge i$, $m\ge j$.
    \begin{enumerate}[i)]
        \item 
        $n+m=i+j$ holds only if $n=i$, $m=j$ for $n\ge i$, $m\ge j$. In the case, $\alpha^{i,j}_{i,j}=1=\binom{m+n-i-j}{n-i}$.
        \item
        Assume that \eqref{eqn:alpha_binom} holds for any $n$, $m$ with $n+m=k$ as induction hypothesis.
        Now suppose $n+m=k+1$ for given $n$, $m$.
        If $n=i$ or $m=j$ we already know $\alpha_{i,j}^{n,m}=1=\binom{m+n-i-j}{n-i}$.
        Otherwise $n-1\ge i$, $m-1\ge j$, and we have
        \begin{equation}
            \begin{aligned}
                \alpha^{n,m}_{i,j} &= \alpha^{n-1,m}_{i,j} + \alpha^{n,m-1}_{i,j}
                \\ &= \binom{m+n-1-i-j}{n-1-i} + \binom{m+n-1-i-j}{n-i}
                \\ & = \binom{m+n-i-j}{n-i},
            \end{aligned}
        \end{equation}
        which completes the proof.
    \end{enumerate}
\end{proof}

We have computed the output of modified RNN $\cN_N$ such that
\begin{equation}
    \cN_N\left(x'\right)[N]=\begin{bmatrix}x[N]\\
    \sum_{i=1}^N\sum_{j=1}^N\binom{2n-i-j}{n-i}b_ix[j]\end{bmatrix}.
\end{equation}
If the square matrix $\Lambda_N=\left\{\binom{2n-i-j}{n-i}\right\}_{1\le i,j\le N}$ has inverse $\Lambda_N^{-1}=\left\{\lambda_{i,j}\right\}_{1\le i,j\le N}$, $b_i=\sum_{t=1}^N\lambda_{t,i}A[t]$ satisfies
\begin{equation*}
    \begin{aligned}
    \sum_{i=1}^N\sum_{j=1}^N\binom{2n-i-j}{n-i}b_ix[j]
    &=\sum_{i=1}^n\sum_{j=1}^n\sum_{t=1}^n\binom{2n-i-j}{n-i}\lambda_{t,i}A[t]x[j]
    \\
    &=\sum_{j=1}^n\sum_{t=1}^n\left[\sum_{i=1}^n\binom{2n-i-j}{n-i}\lambda_{t,i}\right]A[t]x[j]
    \\
    &=\sum_{j=1}^n\sum_{t=1}^n\delta_{j,t}A[t]x[j]
    \\
    &=\sum_{j=1}^n A[j]x[j],
    \end{aligned}
\end{equation*}
where $\delta$ is the Kronecker delta function.

The following lemma completes the proof.
\begin{lemma}
\label{lemma:the_matrix}
Matrix $\Lambda_n=\left\{\binom{2n-i-j}{n-i}\right\}_{1\le i,j\le n}\in\bR^{n\times n}$ is invertible.
\end{lemma}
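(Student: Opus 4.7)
The plan is to reduce invertibility of $\Lambda_n$ to a classical Cholesky-type factorization of the matrix of binomial coefficients $\binom{k+l}{k}$.

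First, I would re-index. Setting $k=n-i$ and $l=n-j$, the entry $\binom{2n-i-j}{n-i}$ becomes $\binom{k+l}{k}$, with $k,l$ running over $\{0,1,\ldots,n-1\}$. Reversing the order of both indices is implemented by conjugation with the antidiagonal permutation matrix $R$, which satisfies $R=R^T$ and $\det(R)^2=1$. Hence $\det(\Lambda_n)=\det(R\Lambda_n R)=\det(M_n)$, where
\begin{equation*}
    M_n \coloneqq \bigl[\tbinom{k+l}{k}\bigr]_{0\le k,l\le n-1}.
\end{equation*}
It therefore suffices to show $M_n$ is invertible.

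Second, I would invoke the identity
\begin{equation*}
    \binom{k+l}{k}=\sum_{j=0}^{\min(k,l)}\binom{k}{j}\binom{l}{j},
\end{equation*}
which follows from Vandermonde's convolution $\binom{k+l}{l}=\sum_j\binom{k}{l-j}\binom{l}{j}$ combined with the symmetry $\binom{l}{j}=\binom{l}{l-j}$ and the substitution $j\mapsto l-j$. Let $P_n\coloneqq\bigl[\binom{k}{j}\bigr]_{0\le k,j\le n-1}$ denote the lower-triangular Pascal matrix. Reading the identity as a matrix product yields the factorization $M_n = P_n P_n^{T}$.

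Third, since $P_n$ is lower-triangular with diagonal entries $\binom{k}{k}=1$, we have $\det(P_n)=1$, hence $\det(M_n)=\det(P_n)^2=1$, and therefore $\det(\Lambda_n)=1\neq0$. This finishes the proof.

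The only genuine content is the Vandermonde-type identity, and even that is standard; the main thing to get right is bookkeeping for the index reversal $i\mapsto n-i$ so that the Pascal/Cholesky structure of $\Lambda_n$ becomes visible. No serious obstacle is anticipated.
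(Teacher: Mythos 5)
Your proof is correct, but it takes a genuinely different route from the paper. The paper argues by induction on $n$: using Pascal's rule, it applies the bidiagonal elementary operation matrix $E$ (ones on the diagonal, $-1$'s on the superdiagonal) on both sides to obtain $E\Lambda_{n+1}E^{T}=\begin{bmatrix}\Lambda_n & O_{n,1}\\ O_{1,n} & 1\end{bmatrix}$, so invertibility propagates from $\Lambda_n$ to $\Lambda_{n+1}$. You instead recognize $\Lambda_n$, after reversing both index orders, as the classical symmetric Pascal matrix $M_n=\bigl[\binom{k+l}{k}\bigr]_{0\le k,l\le n-1}$ and use the Vandermonde convolution $\binom{k+l}{k}=\sum_j\binom{k}{j}\binom{l}{j}$ to get the Cholesky-type factorization $M_n=P_nP_n^{T}$ with $P_n$ the unit lower-triangular Pascal matrix; all steps check out (the conjugation by the antidiagonal permutation only changes the determinant by $\det(R)^2=1$, and the truncation of the sum at $j\le\min(k,l)$ is harmless since $\binom{k}{j}=0$ for $j>k$). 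Your approach is non-inductive, identifies the matrix as a known object, and yields the stronger conclusion $\det\Lambda_n=1$ rather than mere nonvanishing; the paper's induction is more self-contained, relying only on Pascal's rule, and its row-operation mechanics are closer in spirit to the manipulations used elsewhere in the appendix. Either argument equally supports Corollary~\ref{cor:lambda_is_full_rank}, since full rank of the row-submatrix $\Lambda_{n,k}$ follows from invertibility of $\Lambda_n$ alone.
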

\begin{proof}
    Use mathematical induction on $n$.
    $\Lambda_1$ is a trivial case.
    Assume $\Lambda_n$ is invertible.
    \begin{equation}
        \begin{aligned}
        \Lambda_{n+1} = \begin{bmatrix} 
        \binom{2n}{n} & \binom{2n-1}{n} & \binom{2n-2}{n}&\dots& \binom{n+1}{n}& \binom{n}{n}\\
        \binom{2n-1}{n-1} & \binom{2n-2}{n-1} & \binom{2n-3}{n-1}&\dots &\binom{n}{n-1}& \binom{n-1}{n-1}\\
        \vdots &\vdots &\vdots & \ddots & \vdots& \vdots \\
        \binom{n+1}{1} & \binom{n}{1} & \binom{n-1}{1}&\dots& \binom{2}{1}& \binom{1}{1}\\
        \binom{n}{0} & \binom{n-1}{0} & \binom{n-2}{0}&\dots& \binom{1}{0}& \binom{0}{0}
        \end{bmatrix}.
        \end{aligned}
    \end{equation}
    Applying elementary row operation to $\Lambda_{n+1}$ by multiplying the matrix $E$ on the left and elementary column operation to $E\Lambda_{n+1}$ by multiplying the matrix $E^T$ on the right where
    \begin{equation}
        E=\begin{bmatrix} 
        1 & -1 & 0& \dots& 0&  0\\
        0 & 1 & -1 &\dots&0&  0\\
        0 & 0 & 1 &\dots& 0&  0\\
        \vdots & \vdots & \vdots& \ddots & \vdots& \vdots \\
        0 & 0& 0 &\dots &1&  -1\\
        0 & 0&0& \dots      &0 & 1 
        \end{bmatrix},
    \end{equation}
    we obtain the following relation:
    \begin{equation}
        E\Lambda_{n+1}  E^{T}= \begin{bmatrix} 
        \binom{2n-2}{n-1} & \binom{2n-3}{n-1} & \binom{2n-4}{n-1}&\dots& \binom{n-1}{n-1}& 0\\
        \binom{2n-3}{n-2} & \binom{2n-4}{n-2} & \binom{2n-5}{n-2}&\dots &\binom{n-2}{n-2}& 0\\
        \vdots &\vdots &\vdots & \ddots & \vdots& \vdots \\
        \binom{n-1}{0} & \binom{n-2}{0} & \binom{n-3}{0}&\dots& \binom{0}{0}& 0\\
        0 & 0 & 0&\dots& 0& 1
        \end{bmatrix}
        = \begin{bmatrix} \Lambda_{n}& O_{n,1}\\ O_{1,n}& 1\end{bmatrix}.
    \end{equation}
    Hence $\Lambda_{n+1}$ is invertible by the induction hypothesis.
\end{proof}

\begin{corollary}
\label{cor:lambda_is_full_rank}
    The following matrix $\Lambda_{n,k}\in\bR^{k\times n}$ is full-rank.
    \begin{equation}
        \Lambda_{n,k}=\left\{\binom{2n-i-j}{n-i}\right\}_{n-k+1 \leq i \leq n, 1 \leq j \leq n }.
    \end{equation}
\end{corollary}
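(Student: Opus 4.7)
The plan is to derive the corollary immediately from Lemma \ref{lemma:the_matrix}, since $\Lambda_{n,k}$ is nothing other than the submatrix of $\Lambda_n=\{\binom{2n-i-j}{n-i}\}_{1\le i,j\le n}$ obtained by restricting the row index from $i\in\{1,\ldots,n\}$ to $i\in\{n-k+1,\ldots,n\}$. In other words, the columns and the entry formula are unchanged; $\Lambda_{n,k}$ simply retains the last $k$ rows of $\Lambda_n$ (and implicitly $k\le n$, otherwise the row indices would be nonsensical).

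With this identification, Lemma \ref{lemma:the_matrix} yields that $\Lambda_n$ is invertible, so its $n$ rows form a linearly independent family in $\bR^n$. Any subcollection of linearly independent vectors is itself linearly independent, hence the $k$ rows of $\Lambda_{n,k}$ are linearly independent. Therefore the row-rank of $\Lambda_{n,k}$ equals $k$, which is the maximum possible for a $k\times n$ matrix with $k\le n$, proving that $\Lambda_{n,k}$ is full-rank.

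I do not anticipate a serious obstacle here: the corollary is a one-line consequence of the lemma, and the only subtlety is to recognize that ``full-rank'' in this $k\times n$ setting means row-rank equal to $k$, which is automatic once row-independence is inherited from the ambient invertible matrix $\Lambda_n$. Should a more self-contained argument be preferred, one could alternatively invoke the Cauchy--Binet formula to express $\det\left(\Lambda_{n,k}\Lambda_{n,k}^T\right)$ as a sum of squared minors of $\Lambda_n$, at least one of which is nonzero by Lemma \ref{lemma:the_matrix}, but the row-restriction argument above is strictly shorter.
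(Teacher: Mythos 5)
Your proposal is correct and matches the paper's (implicit) reasoning: the paper states the corollary without proof precisely because $\Lambda_{n,k}$ is the submatrix of the last $k$ rows of the invertible matrix $\Lambda_n$ from Lemma \ref{lemma:the_matrix}, so its rows are linearly independent and the rank is $k$. Your row-restriction argument is exactly this one-line consequence.
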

We will use the matrix $\Lambda_{n,k}$ in the proof of Lemma \ref{lemma:linear_sum_into_trnn} to approximate a sequence-to-sequence function.

%%%%%%%%%%%%%%%%%%%%%%%%%%%%%%%%%%%%%%%%%%%%%%%%%%%%%%%%%%%%%%%%%%%%%%%%%%%%%%%%%%%%%%%%%%%%%%%%%%%%%
%%%%%%%%%%%%%%%%%%%%%%%%%%%%%%%%%%%%         Section             %%%%%%%%%%%%%%%%%%%%%%%%%%%%%%%%%%%%
%%%%%%%%%%%%%%%%%%%%%%%%%%%%%%%%%%%%%%%%%%%%%%%%%%%%%%%%%%%%%%%%%%%%%%%%%%%%%%%%%%%%%%%%%%%%%%%%%%%%%

\section{Proof of Lemma \ref{lemma:linear_sum_into_trnn}}
\label{appendix:linear_sum_into_trnn}
Define token-wise lifting map $\cP:\bR^{d_x}\rightarrow\bR^{d_x+1}$ and modified TRNN cell $\cT\cR_l:\bR^{(d_x+1)\times N}\rightarrow\bR^{(d_x+1)\times N}$ as in the proof of Lemma \ref{lemma:linear_sum_into_rnn}:
\begin{align}
    \label{eqn:special_trnn_in_appendix}
    \cP(x)[t]&=\begin{bmatrix}x[m]\\0\end{bmatrix},
    \\
    \cT\cR_l(X)[t+1]&=A_l\cT\cR_l(X)[t]+B_l[t](X)[t+1],
\end{align}
where $A_l=\begin{bmatrix} O_{d_x,d_x}&O_{d_x,1}\\ O_{1,d_x}& 1\end{bmatrix}$, $B_l[t]=\begin{bmatrix}I_{d_x}&O_{d_x,1}\\ b_l[t]&1\end{bmatrix}$ for $b_l[t]\in\bR^{1\times d_x}$.
Then we have
\begin{equation}
    \begin{aligned}
        T(n,m)&\coloneqq\cN_n\left(x\right)[m]\\
        &=\begin{bmatrix}x[m]\\ \sum_{i=1}^\infty\sum_{j=1}^\infty \binom{n+m-i-j}{n-i}b_i[j]x[j]\end{bmatrix},
    \end{aligned}
\end{equation}
where $x\in\bR^{d_x\times N}$ and $\cN_L=\cT\cR_L\circ\cT\cR_{L-1}\circ\cdots\circ\cT\cR_1\circ\cP$.

Since for each $t$, the matrix
\begin{equation}
    \Lambda_{N,N-t+1}=\left\{\binom{2N-i-j}{N-i}\right\}_{t\le i\le N,1\le j\le N}=\left\{\binom{2N-t+1-i-j}{N-j}\right\}_{1\le i\le N-t+1,1\le j\le N}
\end{equation}
is full-rank, there exist $b_1[t],b_2[t],\ldots,b_N[t]$ satisfying 
\begin{equation}
    \Lambda_{N,N-t+1}\begin{bmatrix}
    b_1[t]\\ \vdots \\b_N[t]
    \end{bmatrix}=\begin{bmatrix}
    A_t[N]\\ \vdots\\ A_t[t]
    \end{bmatrix},
\end{equation}
or
\begin{equation}
    \sum_{j=1}^N\binom{N+k-j-t}{N-j}b_j[t]=A_t[k],
\end{equation}
for each $k=1,2,\ldots,N$. 
Then we obtain
\begin{equation}
    \begin{aligned}
    T(N,t)
    &=\sum_{i=1}^\infty\sum_{j=1}^\infty \binom{N+t-i-j}{N-i}b_i[j]x[j]\\
    &=\sum_{j=1}^t\sum_{i=1}^N\binom{N+t-i-j}{N-i}b_i[j]x[j]\\
    &=\sum_{j=1}^tA_j[t]x[j].
    \end{aligned}
\end{equation}

%%%%%%%%%%%%%%%%%%%%%%%%%%%%%%%%%%%%%%%%%%%%%%%%%%%%%%%%%%%%%%%%%%%%%%%%%%%%%%%%%%%%%%%%%%%%%%%%%%%%%
%%%%%%%%%%%%%%%%%%%%%%%%%%%%%%%%%%%%         Section             %%%%%%%%%%%%%%%%%%%%%%%%%%%%%%%%%%%%
%%%%%%%%%%%%%%%%%%%%%%%%%%%%%%%%%%%%%%%%%%%%%%%%%%%%%%%%%%%%%%%%%%%%%%%%%%%%%%%%%%%%%%%%%%%%%%%%%%%%%

\section{Proof of Lemma \ref{lemma:rnn_approximate_trnn}}
\label{appendix:rnn_approximate_trnn}
As one of the modified TRNNs that computes \eqref{eqn:linear_sum_into_trnn}, we use the modified TRNN defined in Appendix \ref{appendix:linear_sum_into_trnn}.
Specifically, we show that for a given $l$, there exists a modified RNN of width $d_x+2+\gamma(\sigma)$ that approximates the modified TRNN cell $\cT\cR_l:\bR^{(d_x+1)\times N}\rightarrow\bR^{(d_x+1)\times N}$ defined by \eqref{eqn:special_trnn_in_appendix}.
Suppose $K\subset\bR^{d_x}$, $K'\subset\bR$ are compact sets and $X\in \left(K\times K'\right)^N\subset\bR^{(d_x+1)\times N}$.
Then the output of the TRNN cell $\cT\cR_l$ is
\begin{equation}
    \cT\cR_l(X)[t]=\begin{bmatrix}X[t]_{1:d_x} \\ \sum_{j=1}^t b_l[j]X[j]_{1:d_x}+\sum_{j=1}^t X[j]_{d_x+1}\end{bmatrix}.
\end{equation}
Without loss of generality, assume $K\subset \left[0,\frac{1}{2}\right]^{d_x}$ and let $\gamma=\gamma(\sigma)$.
Let $\cP:\bR^{d_x+1}\rightarrow\bR^{d_x+2+\gamma}$ be a token-wise linear map defined by $\cP(X)=\begin{bmatrix}X_{1:d_x} \\ 0 \\ X_{d_x+1} \\ \mathbf{0}_{\gamma}\end{bmatrix}$.
Construct the modified recurrent cells $\cR_1,\cR_2:\bR^{(d_x+2+\gamma(\sigma))\times N}\rightarrow\bR^{(d_x+2+\gamma(\sigma))\times N}$ as for $X'\in\bR^{(d_x+2+\gamma)\times N}$,
\begin{align}
    \cR_1(X')[t+1]
    &=\begin{bmatrix}
        O_{d_x,d_x}& &\\
        &1&&\\
        && O_{1+\gamma,1+\gamma}
    \end{bmatrix}
    \cR_1(X')[t]+X'[t+1]
    +\begin{bmatrix}
        \mathbf{0}_{d_x}\\1\\ \mathbf{0}_{1+\gamma}
    \end{bmatrix},
    \\
    \cR_2(X')[t+1]
    &=\begin{bmatrix}
        I_{d_x}&\mathbf{1}_{d_x}&&\\
        &\reviewB{1}&&\\
        &&\reviewB{1}&\\
        &&& O_{\gamma,\gamma}
    \end{bmatrix}
    X'[t].
\end{align}
Then, by definition for $X\in\left(K\times K'\right)^N$,
\begin{equation}
    \cR_2\cR_1\cP(X)[t]=\begin{bmatrix} X[t]_{1:d_x}+t\mathbf{1}_{d_x} \\ t \\ X[t]_{d_x+1} \\ \mathbf{0}_{\gamma}\end{bmatrix}.
\end{equation}
Note that $D_i=\{\cR_2\cR_1\cP(X)[i]_{1:d_x}\mid X\in \left(K\times K'\right)^N \}=\{X[i]_{1:d_x}+t\mathbf{1}_{d_x}\mid X\in \left(K\times K'\right)^N \}$ are disjoint each other, $D_i\cap D_j=\phi$ for all $i\ne j$.

By the universal approximation theorem of deep MLP from \cite{hanin2017approximating, Kidger_2020}, for any $\delta_l>0$, there exists an MLP $\cN_{l,MLP}:\bR^{d_x}\rightarrow\bR^{d_x+1}$ of width $d_x+1+\gamma$ such that for $v\in \bR^{d_x}$,
\begin{align}
    &\cN_{l,MLP}(v)_{1:d_x}=v
    \\
    &\sup_{t=1,\ldots,N}\sup_{v\in D_t}\left\|b_l[t]\left(v-t\mathbf{1}{d_x}\right) - \cN_{l,MLP}(v)_{d_x+1}\right\|<\delta_l.
\end{align}
Since token-wise MLP is implemented by RNN with the same width, there exists an RNN $\cN_l:\bR^{d_x+2+\gamma}\rightarrow\bR^{d_x+2+\gamma}$ of width $d_x+2+\gamma$ whose components all but $(d_x+2)$-th construct $\cN_{l,MLP}$ so that for all 
$X'\in\bR^{d_x+2+\gamma}$,
\begin{equation}
    \cN_l(X')[t]
    =\begin{bmatrix}
       \cN_{l,MLP}\left(X'[t]_{1:d_x}\right) \\ X'[t]_{d_x+2} \\ \mathbf{0}_{\gamma}
    \end{bmatrix}.
\end{equation}
Then for $X\in\left(K\times K'\right)^N$ we have
\begin{equation}
    \cN_l\cR_2\cR_1\cP(X)[t]
    =\begin{bmatrix}\cN_{l,MLP}\left(X[t]_{1:d_x}+t\mathbf{1}_{d_x} \right) \\ X[t]_{d_x+1} \\ \mathbf{0}_{\gamma}\end{bmatrix}.
\end{equation}
Finally, define a recurrent cell $\cR_3:\bR^{d_x+2+\gamma}\rightarrow\bR^{d_x+2+\gamma}$ of width $d_x+2+\gamma$ as
\begin{equation}
    \cR_3(X')[t+1]
    =\begin{bmatrix}O_{d_x+1,d_x+1}&&\\ &1&& \\ &&O_{\gamma,\gamma}\end{bmatrix}\cR_3(X')[t]
    +\begin{bmatrix}I_{d_x} &&& \\ &1&&\\ &1&1& \\ &&&O_{\gamma,\gamma} \end{bmatrix}
    X'[t+1],
\end{equation}
and attain
\begin{equation}
    \cR_3\cN_{1}\cR_2\cR_1\cP(X)[t]
    =\begin{bmatrix}
        X[t]_{1:d_x}+t\mathbf{1}_{d_x}
        \\
        \cN_{l,MLP}\left(X[t]_{1:d_x}+t\mathbf{1}_{d_x}\right)_{d_x+1}
        \\
        \sum_{j=1}^t \cN_{l,MLP}\left(X[j]_{1:d_x}+j\mathbf{1}_{d_x}\right)_{d_x+1} + \sum_{j=1}^t X[j]_{d_x+1}
        \\
        \mathbf{0}_{\gamma}
    \end{bmatrix}.
\end{equation}
With the token-wise projection map $\cQ:\bR^{d_x+2+\gamma}\rightarrow\bR^{d_x+1}$ defined by $\cQ(X')=\begin{bmatrix}X'_{1:d_x}\\X'_{d_x+2}\end{bmatrix}$, an RNN $\cQ\cR_3\cN_l\cR_2\cR_1\cP:\bR^{(d_x+1)\times N}\rightarrow\bR^{(d_x+1)\times N}$ of width $d_x+2+\gamma$ maps $X\in\bR^{(d_x+1)\times N}$ to
\begin{equation}
    \cQ\cR_3\cN_l\cR_2\cR_1\cP(X)[t] = \begin{bmatrix}X[t]_{1:d_x}+t\mathbf{1}_{d_x}\\ \sum_{j=1}^t \cN_{l,MLP}\left(X[j]_{1:d_x}+j\mathbf{1}_{d_x}\right)_{d_x+1} + \sum_{j=1}^t X[j]_{d_x+1}\end{bmatrix}.
\end{equation}
Since $\cN_{l,MLP}\left(X[j]_{1:d_x}+j\mathbf{1}_{d_x}\right)_{d_x+1}\rightarrow b_l[j]X[j]_{1:d_x}$, we have
\begin{equation}
    \sup_{X\in \left(K\times K'\right)^N}\left\|\cT\cR_l(X)-\cQ\cR_3\cN_l\cR_2\cR_1\cP(X)\right\| \rightarrow 0,
\end{equation}
as $\delta_l\rightarrow0$.
Approximating all $\cT\cR_l$ in Appendix \ref{appendix:linear_sum_into_trnn} finishes the proof.

%%%%%%%%%%%%%%%%%%%%%%%%%%%%%%%%%%%%%%%%%%%%%%%%%%%%%%%%%%%%%%%%%%%%%%%%%%%%%%%%%%%%%%%%%%%%%%%%%%%%%
%%%%%%%%%%%%%%%%%%%%%%%%%%%%%%%%%%%%         Section             %%%%%%%%%%%%%%%%%%%%%%%%%%%%%%%%%%%%
%%%%%%%%%%%%%%%%%%%%%%%%%%%%%%%%%%%%%%%%%%%%%%%%%%%%%%%%%%%%%%%%%%%%%%%%%%%%%%%%%%%%%%%%%%%%%%%%%%%%%

\section{Proof of Lemma \ref{lemma:linear_sum_into_tbrnn}}
\label{appendix:linear_sum_into_tbrnn}
The main idea of the proof is to separate the linear sum  $\sum_{j=1}^NA_j[t]x[j]$ into the past-dependent part $\sum_{j=1}^{t-1}A_j[t]x[j]$ and the remainder part $\sum_{j=t}^N A_j[t]x[j]$.
Then, we construct modified TBRNN with $2N$ cells; the former $N$ cells have only a forward recurrent cell to compute the past-dependent part, and the latter $N$ cells have only a backward recurrent cell to compute the remainder.

Let the first $N$ modified TRNN cells $\cR_l:\bR^{(d_x+1)\times N}\rightarrow\bR^{(d_x+1)\times N}$ for $1\le l\le N$ be defined as in the proof of Lemma \ref{lemma:linear_sum_into_trnn}:
\begin{equation}
    \cR_l\left(X\right)[t+1]=A_l\cR_l\left(X\right)[t]+B_l[t]X[t+1],
\end{equation}
where $A_l=\begin{bmatrix} O_{d_x,d_x}&O_{d_x,1}\\ O_{1,d_x}& 1\end{bmatrix}$, $B_l[t]=\begin{bmatrix}I_{d_x}&O_{d_x,1}\\ b_l[t]&1\end{bmatrix}$ for $b_l[t]\in\bR^{1\times d_x}$.
Then, with token-wise lifting map $\cP:\bR^{d_x}\rightarrow\bR^{d_x+1}$ defined by $\cP(x)=\begin{bmatrix}x\\0\end{bmatrix}$, we construct modified TRNN  $\cN:\cR_N\circ\cdots\circ\cR_1\circ\cP:\bR^{d_x\times N}\rightarrow\bR^{(d_x+1)\times N}$.
We know that if $C_i[m]\in\bR^{1\times d_x}$ are given for $1\le m\le N$ and $1\le i\le m$, there exist $b_l[t]$ for $1\le l\le N$, such that
\begin{equation}
    \cN_N(x)[m]=\begin{bmatrix}x[m]\\ \sum_{i=1}^m C_i[m]x[i]\end{bmatrix}.
\end{equation}
Therefore, we will determine $C_i[m]$ after constructing the latter $N$ cells.
Let $f_m=\sum_{i=1}^mC_i[m]x[i]$ for brief notation.

After $\cN_N$, construct $N$ modified TRNN cells $\bar{\cR}_l:\bR^{(d_x+1)\times N}\rightarrow\bR^{(d_x+1)\times N}$ for $1\le l\le N$ in reverse order:
\begin{equation}
    \bar{\cR}_l\left(\bar{X}\right)[t-1]=\bar{A}_l\bar{\cR}_l\left(\bar{X}\right)[t]+\bar{B}_l[t]\bar{X}[t-1],
\end{equation}
where $\bar{A}_l=\begin{bmatrix} O_{d_x,d_x}&O_{d_x,1}\\ O_{1,d_x}& 1\end{bmatrix}$, $\bar{B}_l[t]=\begin{bmatrix}I_{d_x}&O_{d_x,1}\\ \bar{b}_l[t]&1\end{bmatrix}$ for $\bar{b}_l[t]\in\bR^{1\times d_x}$.
Define $\bar{\cN}_N=\bar{\cR}_N\circ\cdots\circ\bar{\cR}_1$, and we obtain the following result after a similar calculation with input sequence $\bar{X}[t]=\cN_N(x)[t]=\begin{bmatrix} x[t]\\f_t\end{bmatrix}$:
\begin{align}
    &\bar{\cN}_N\left(\bar{X}\right)[N+1-t]
    \\
    &=\begin{bmatrix}x[N+1-t]\\ \sum_{j=1}^t\left[\sum_{i=1}^N\binom{N+t-i-j}{N-i}\bar{b}_i[N+1-j]x[N+1-j]+\binom{N+t-1-j}{N-1}f_{N+1-j}\right]\end{bmatrix}.
\end{align}
We want to find $f_m$ and $\bar{b}_i[m]$ so that
\begin{equation}
    \label{eqn:wanted_result_brnn}
    \bar{\cN}_N\left(\bar{X}\right)[N+1-t]_{d_x+1}
    =
    \sum_{i=1}^NA_i[N+1-t]x[i],
\end{equation}
for each $t=1,2,\ldots, N$.

Note that $\sum_{j=1}^t \sum_{i=1}^N\binom{N+t-i-j}{N-i}\bar{b}_i[N+1-j]x[N+1-j]$ does not contain $x[1],x[2],\ldots, x[N-t]$ terms, so $\sum_{j=1}^t\binom{N+t-1-j}{N-1}f_{N+1-j}$ should contain $\sum_{i=1}^{N-t}A_i[N+1-t]x[i]$.
\begin{align}
    \sum_{j=1}^t\binom{N+t-1-j}{N-1}f_{N+1-j}
    &=\sum_{j=1}^t\binom{N+t-1-j}{N-1}\sum_{i=1}^{N+1-j}C_i[N+1-j]x[i]
    \\
    &=\sum_{j=1}^t\sum_{i=1}^{N+1-j}\binom{N+t-1-j}{N-1}C_i[N+1-j]x[i]
    \\
    &=\sum_{i=N+2-t}^N\sum_{j=1}^{N+1-i}\binom{N+t-1-j}{N-1}C_i[N+1-j]x[i]
    \\
    &\qquad + \sum_{i=1}^{N+1-t}\sum_{j=1}^t\binom{N+t-1-j}{N-1}C_i[N+1-j]x[i].
\end{align}
Since matrix $\Lambda_i=\left\{\binom{N+t-1-j}{N-1}\right\}_{1\le t\le N+1-i, 1\le j\le N+1-i}$ is a lower triangular $(N+1-i)\times (N+1-i)$ matrix with unit diagonal components, there exist $C_i[i],C_i[i+1],\ldots,C_i[N]$ such that
\begin{equation}
    \sum_{j=1}^{t}\binom{N+t-1-j}{N-1}C_i[N+1-j] = A_i[N+1-t],
\end{equation}
for each $t=1,2,\ldots, N+1-i$.

We now have 
\begin{align}
    &\sum_{j=1}^t\binom{N+t-1-j}{N-1}f_{N+1-j}
    \\
    &=\sum_{i=N+2-t}^N\sum_{j=1}^{N+1-i}\binom{N+t-1-j}{N-1}C_i[N+1-j]x[i]
    +\sum_{i=1}^{N+1-t}A_i[N+1-t]x[i]
    \\
    &=\sum_{i=1}^{t-1}\sum_{j=1}^{i}\binom{N+t-1-j}{N-1}C_{N+1-i}[N+1-j]x[N+1-i]
    +\sum_{i=1}^{N+1-t}A_i[N+1-t]x[i]
    \\
    &=\sum_{j=1}^{t-1}\sum_{i=1}^{j}\binom{N+t-1-i}{N-1}C_{N+1-j}[N+1-i]x[N+1-j]
    +\sum_{i=1}^{N+1-t}A_i[N+1-t]x[i].
\end{align}
We switch $i$ and $j$ for the last equation.
By Corollary \ref{cor:lambda_is_full_rank}, there exist $\bar{b}_i[N+1-j]$ satisfying 
\begin{align}
    &\sum_{i=1}^N\binom{N+t-i-j}{N-i}\bar{b}_i[N+1-j]
    \\
    &=A_{N+1-j}[N+1-t]
    -\sum_{i=1}^j\binom{N+t-1-i}{N-1}C_{N+1-j}[N+1-i],
    \intertext{for $j=1,2,\ldots,t-1$, and}
    &\sum_{i=1}^N\binom{N+t-i-j}{N-i}\bar{b}_i[N+1-j]
    \\
    &=A_{N+1-j}[N+1-t],
\end{align}
for $j=t$.\\
With the above $C_i[m]$ and $\bar{b}_i[m]$, equation \eqref{eqn:wanted_result_brnn} holds for each $t=1,2,\ldots, N$.
It remains to construct modified TRNN cells to implement $f_m$, which comes directly from the proof of Lemma \ref{lemma:linear_sum_into_trnn}.

\newpage
\bibliography{Reference}

\end{document}